\documentclass[11pt]{article}
\usepackage{amsmath,amssymb,amsthm}
\usepackage{fullpage}
\usepackage[hcentering]{geometry}
\usepackage{caption} \usepackage{subcaption}
\usepackage{graphicx}
\usepackage{float}
\usepackage{placeins}
\usepackage{paralist}
\usepackage[utf8]{inputenc}   
\usepackage[T1]{fontenc}

\newif\iffinal
\finalfalse
\newif\ifarxiv
\arxivfalse

\setlength{\textheight}{9.25in}
\setlength{\textwidth}{6.5in}
\setlength{\oddsidemargin}{0in}
\setlength{\evensidemargin}{-.3in}
\setlength{\topmargin}{-.2in}
\setlength{\headheight}{0in}
\setlength{\headsep}{0in}
\setlength{\footskip}{.25in}
\setlength{\parindent}{1pc}
\usepackage{paralist}
\usepackage{url}

\usepackage{aliascnt}
\def\NewTheorem#1#2{%
  \newaliascnt{#1}{theorem}
  \newtheorem{#1}[#1]{#2}
  \aliascntresetthe{#1}
  \expandafter\def\csname #1autorefname\endcsname{#2}
}
 \newtheorem{theorem}{Theorem}[section]
\NewTheorem{lemma}{Lemma}
\NewTheorem{corollary}{Corollary}
\NewTheorem{conjecture}{Conjecture}
\NewTheorem{observation}{Observation}
\NewTheorem{definition}{Definition}
\NewTheorem{assumption}{Assumption}

\NewTheorem{proposition}{Proposition}
\NewTheorem{remark}{Remark}
\NewTheorem{claim}{Claim}
\NewTheorem{fact}{Fact}

\newcommand{\hide}[1]{}

\DeclareMathOperator{\lmax}{\ell_{max}}

\usepackage{color}
\usepackage[usenames,dvipsname s,svgnames,table]{xcolor}

\definecolor{darkred}{rgb}{0.5,0,0}
\definecolor{lightblue}{rgb}{0,0.4,0.8}
\definecolor{darkgreen}{rgb}{0,0.5,0}

\usepackage[colorlinks=true, linkcolor=blue, urlcolor=blue, citecolor=darkgreen]{hyperref}

\bibliographystyle{plainurl}


\begin{document}
\title{Learning Hierarchically-Structured Concepts II:  Overlapping Concepts, and Networks With Feedback}
\author{Nancy Lynch, Frederik Mallmann-Trenn}
	\date{}	
\maketitle 

\begin{abstract}
We continue our study from~\cite{LM21}, of how concepts that have hierarchical structure might be represented in brain-like neural networks, how these representations might be used to recognize the concepts, and how these representations might be learned.
In~\cite{LM21}, we considered simple tree-structured concepts and feed-forward layered networks.
Here we extend the model in two ways:  we allow limited overlap between children of different concepts, and we allow networks to include feedback edges.
For these more general cases, we describe and analyze algorithms for recognition and algorithms for learning.
\end{abstract}

\tableofcontents

\section{Introduction}

We continue our study, begun in~\cite{LM21}, of how concepts that have hierarchical structure might be represented in brain-like neural networks, how these representations might be used to recognize the concepts, and how these representations might be learned.
In~\cite{LM21}, we considered only simple tree-structured concepts and
simple feed-forward layered networks.
Here we consider two important extensions:  we allow our data model to
include limited \emph{overlap} between the sets of children of different concepts,
and we extend the network model to allow some \emph{feedback edges}.
We consider these extensions both separately and together.
In all cases, we consider both algorithms for recognition and algorithms
for learning.
Where we can, we quantify the effects of these extensions on the
costs of recognition and learning algorithms.

In this paper, as in~\cite{LM21}, we consider \emph{robust recognition}, which means that recognition of a concept is guaranteed even in the absence of some of the lowest-level parts of the concept.\footnote{One might also consider what happens in the presence of a small number of extraneous inputs.  We do not address this case here, but discuss this as future work, in Section~\ref{sec: conclusions}.}
In~\cite{LM21}, we considered both noise-free learning and learning in the presence of random noise.
Here we emphasize noise-free learning, but include some ideas for extending the results to the case of noisy learning.

\vspace{-.3cm}
\paragraph{Motivation:}
This work is inspired by the behavior of the visual cortex, and by algorithms used for computer vision.
As described in~\cite{LM21}, we are interested in the general problem of how concepts that have structure are represented in the brain. What do these representations look like? How are they learned, and how do the concepts get recognized after they are learned?  
We draw inspiration from experimental research on computer vision in convolutional neural networks (CNNs) by Zeiler and Fergus~\cite{Zeiler} and Zhou, et al.~\cite{8417924}. This research shows that CNNs learn to represent structure in visual concepts: lower layers of the network represent basic concepts and higher layers represent successively higher-level concepts.
This observation is consistent with neuroscience research, which indicates that visual processing in mammalian brains is performed in a hierarchical way, starting from primitive notions such as position, light level, etc., and building toward complex objects; see, e.g., \cite{Hubel59, Hubel62, Felleman91}.

In~\cite{LM21}, we considered only tree-structured concepts and feed-forward layered networks.  Here we allow overlap between sets of children of different concepts, and feedback edges in the network.
Overlap may be important, for example, in a complicated visual scene in which one object is part of more than one higher-level object, like a corner board being part of two sides of a house.
Feedback is critical in visual recognition, since once we recognize a particular higher-level object, we can often fill in lower-level details that were not easily recognized without the help of the context provided by the object.  For example, once we recognize that we are looking at a dog, based on seeing some of its parts, we can recognize other parts that are less visible, such as a partially-occluded leg.
 
\vspace{-.3cm}
\paragraph{Paper contents:}
We begin in Section~\ref{sec:datamodel} by extending our formal concept hierarchy model of~\cite{LM21}.  The only change is the allowance of limited amounts of overlap in the sets of children of concepts at the same level in the hierarchy.  
We define two notions of $support$, one involving only bottom-up information flow as in~\cite{LM21}, and the other also allowing top-down information flow.
The recognition algorithms presented later in the paper will closely follow these definitions.

We continue in Section~\ref{sec: networkmodel} with definitions of our networks, both feed-forward and with feedback.  
Feed-forward networks are as in~\cite{LM21}, with only ``upward'' edges from children to parents.
Networks with feedback add corresponding ``downward'' edges from parents to children.
Incoming potential for a neuron is calculated based on all incoming edges, both upward and downward; all incoming edges are treated in the same way.
However, for learning weights on edges, we consider different rules for upward and downward edges.
Next, in Section~\ref{sec: probstatement}, we define the robust recognition and noise-free learning problems.

Section~\ref{sec: recognition-ff} contains algorithms for robust recognition in feed-forward networks, for both tree hierarchies and general hierarchies.
We start with basic recognition results, for a setting in which weights are either $1$ or $0$ and the firing threshold has a simple form; for this setting, we obtain a precise characterization of which neurons fire at which times, which leads to a robust recognition theorem.
%
We describe how the recognition results extend to settings in which the weights are known only approximately, and to settings in which the weights and thresholds are uniformly scaled.
Finally, we consider how the results change if the neurons' firing decisions are made stochastically, rather than being determined by a fixed threshold.

Section~\ref{sec: recognition-feedback} contains algorithms for robust recognition in networks with feedback, for both tree hierarchies and general hierarchies.
For tree hierarchies in networks with feedback, we show that recognition requires only enough time for two passes:  an upward pass to recognize whatever can be recognized based on lower-level information, and a downward pass to recognize concepts based on a combination of higher-level and lower-level information. 
On the other hand, for general concept hierarchies in networks with feedback, we may need much more time---enough for many passes, both upward and downward.
We again describe how the recognition results extend to settings in which weights are approximate, and in which weights and thresholds are scaled.

In Section~\ref{sec: learning-ff}, we describe noise-free learning algorithms in feed-forward networks, which produce edge weights for the upward edges that suffice to support robust recognition.
These learning algorithms are adapted from the noise-free learning algorithm in~\cite{LM21}, and work for both tree hierarchies and general concept hierarchies.
The extension to general hierarchies requires us to reconsider the use of the \emph{Winner-Take-All (WTA)} module in the algorithm of~\cite{LM21}, since the previous version does not work with overlap; we present a new version of the WTA mechanism.
As before, the weight adjustments are based on Oja's rule~\cite{Oja}.
We show that our new learning algorithms can be viewed as producing approximate, scaled weights as described in Section~\ref{sec: recognition-ff}, which can be used to decompose the correctness proof for the learning algorithms.
Finally, we briefly discuss extensions to noisy learning.

In Section~\ref{sec: learning-feedback}, we extend the learning algorithms for feed-forward networks to accommodate feedback.
Here we simply separate the learning mechanisms for the weights of the upward and downward edges, using a different rule for learning the weights of the downward edges.  Our learning mechanism for the upward weights is based on Oja's rule, whereas learning for downward weights uses a simpler, all-at-once, Hebbian-style rule.\footnote{This may seem a bit inconsistent.  The main reason to use Oja's incremental rule for learning the upward weights is to tolerate noise during the learning process.  We are not emphasizing noisy learning in the paper, but we do expect that the results will extend to that case. So far we have not thought about noise for learning the downward weights, so we use the simplest option, which is an all-at-once rule.}

Section~\ref{sec: conclusions} contains our conclusions.

\vspace{-.3cm}
\paragraph{Contributions:}
We think that the most interesting contributions in the paper are:
\begin{enumerate}
    \item  Formal definitions for concept hierarchies with overlap, and networks with feedback (Sections~\ref{sec: concept-hierarchies} and~\ref{sec: networks-feedback-defs}).
    \item  Analysis for time requirements for robust recognition; this is especially interesting for general concept hierarchies in networks with feedback (Sections~\ref{sec: recognition-ff-exact}, ~\ref{sec: time-bounds-trees-feedback}, and~\ref{sec: time-bounds-recognition-overlap-feedback}). 
    \item  Extensions of the recognition results to allow approximate edge weights and scaling (Sections~\ref{sec: uncertain-weights1}, \ref{sec: scaled}, and~\ref{sec: uncertain-weights-2}).
    \item  The handling of Winner-Take-All mechanisms during learning, in the presence of overlap (Section~\ref{sec: learning-trees} and~\ref{sec: learning-general}).
    \item  Reformulation of learning behavior in terms of achieving certain ranges of edge weights (Sections~\ref{sec: learning-weight-ranges-3} and~\ref{sec: noisy-learning}).
    \item  A simple mechanism for learning bidirectional weights (Section~\ref{sec: learning-feedback}).
\end{enumerate}  

\section{Data Models}
\label{sec:datamodel}

We use two types of data models in this paper.  One is the same type
of tree hierarchy as in~\cite{LM21}.
The other allows limited overlap in the sets of children of different concepts. 
As before, a concept hierarchy is supposed to represent all the
concepts that are learned in the ``lifetime'' of an organism, together with parent/child
relationships between them.

We also include two definitions for the notion of ``supported'', which are used to describe the set of concepts whose recognition should be triggered by a given set of basic concepts.
One definition is for the case where information flows only upwards, from children to parents, while the other also allows downward flow, from parents to children.
These definitions capture the idea that recognition is robust, in the sense that a certain fraction of neighboring (child and parent) concepts should be enough to support recognition of a given concept.

\subsection{Preliminaries}
\label{sec: prelims}

We start by defining some parameters:
\begin{itemize}
    \item $\lmax$, a positive integer, representing the maximum level
      number for the concepts we consider,
    \item $n$, a positive integer, representing the total number of
      lowest-level concepts,
    \item $k$, a positive integer, representing the number of top-level concepts in a concept hierarchy, and the number of sub-concepts for each concept that is not at the lowest level in the hierarchy, 
    \item $r_1, r_2$, reals in $[0,1]$ with $r_1 \leq r_2$; these
      represent thresholds for robust recognition,
    \item $o$, representing an upper bound on overlap, and
    \item $f$, a nonnegative real, representing strength of feedback.
\end{itemize}

We assume a universal set $D$ of \emph{concepts}, partitioned into
disjoint sets $D_{\ell}, 0 \leq \ell \leq \lmax$.
We refer to any particular concept $c \in D_{\ell}$ as a \emph{level}
$\ell$ \emph{concept}, and write $level(c) = \ell$.
Here, $D_0$ represents the most basic concepts and $D_{\lmax}$ the
highest-level concepts.
We assume that $|D_0| = n$.

\subsection{Concept Hierarchies}
\label{sec: concept-hierarchies}

We define a general notion of a concept hierarchy, which allows overlap.
We will refer to our previous notion from~\cite{LM21} as a ``tree concept hierarchy'' ; it can be defined by a simple restriction on the general definition.

A \emph{(general) concept hierarchy} $\mathcal C$ consists of a subset
$C \subseteq D$, together with a $children$ function, satisfying the constraints below.
For each $\ell$, $0 \leq \ell \leq \lmax$, we define $C_{\ell}$ to be
$C \cap D_{\ell}$, that is, the set of level $\ell$ concepts in
$\mathcal C$.
For each concept $c \in C_0$, we assume that $children(c) = \emptyset$.
For each concept $c \in C_{\ell}$, $1 \leq \ell \leq \lmax$, we assume that $children(c)$ is a nonempty subset of $C_{\ell-1}$.
We call each element of $children(c)$ a \emph{child} of $c$.

We extend the $children$ notation recursively, namely, we define
concept $c'$ to be a $descendant$ of a concept $c$ if either $c' = c$,
or $c'$ is a child of a descendant of $c$.
We write $descendants(c)$ for the set of descendants of $c$.
Let $leaves(c) = descendants(c) \cap C_0$, that is, the set of all level $0$
descendants of $c$.

Also, we call every concept $c'$ for which $c \in children(c')$ a \emph{parent} of $c$, and write $parents(c)$ for the set of parents of $c$.
Since we allow overlap, the set $parents(c)$ might contain more than one element.
If a concept $c$ has only one parent, we write $parent(c)$.

We assume the following properties:
\begin{enumerate}
\item  
$|C_{\lmax}| = k$.
That is, the number of top-level concepts in the hierarchy is exactly $k$.\footnote{This assumption and the next are just for uniformity, to reduce the number of parameters and simplify the math.}
\item 
For any $c \in C_{\ell}$, where $1 \leq \ell \leq \lmax$, we have that $|children(c)| = k$. That is, the number of children of any non-leaf concept is exactly $k$.
\item 
\emph{Limited overlap:}
Let $c \in C_\ell$, where $1 \leq \ell \leq \lmax$.
Let $C' = \bigcup_{c' \in C_{\ell} - \{ c \}} children(c')$; that is, $C'$ is the
union of the sets of children of all the other concepts in $C_\ell$, other than $c$.
Then $|children(c) \cap C'| \leq o \cdot k$.
\end{enumerate}

To define a \emph{tree hierarchy}, we replace the limited overlap property with the stronger property:

\begin{enumerate}
\item [4.]
\emph{No overlap:}
For any two distinct concepts $c$ and $c'$ in $C_{\ell}$, where $1 \leq \ell \leq \lmax$,
we have that $children(c) \cap children(c') = \emptyset$.  That is, the
sets of children of different concepts at the same level are disjoint.
This property is equivalent to Property 3 with $o = 0$. 
\end{enumerate}

Properties 1, 2, and 4 are the same as in~\cite{LM21}.
Property 3 is new here:  we replace the no-overlap condition assumed in~\cite{LM21} with a
condition that limits the overlap between the set of children of a
concept $c$ and the sets of children of all other concepts at the same level.
We require this overlap to be less than a designated fraction $o \cdot k$ of the children of $c$.


In Appendix~\ref{app: restaurant}, we give a simple example of a concept hierarchy.  The example is not based on scene recognition, which was the main motivation for this work.  Instead, it describes a much simpler structure:  a catering menu for an Italian restaurant.  The menu consists of meals, which in turn consist of dishes, which in turn consist of ingredients.  There is limited overlap between the ingredients in different dishes.

\subsection{Support}
\label{sec:support}


In this subsection, we fix a particular concept hierarchy ${\mathcal C}$, with its concept set $C$, partitioned into $C_0, C_1, \ldots, C_{\lmax}$.  We assume that $\mathcal C$ satisfies the limited-overlap property.
We give two definitions, one that expresses only upward information flow and one that also expresses downward information flow.

Both definitions are illustrated in Appendix~\ref{app: restaurant}.

\subsubsection{Support with only upward information flow}

For any given subset $B$ of the general set $D_0$ of level $0$
concepts, and any real number $r \in [0,1]$, we define a set
$supp_r(B)$ of concepts in $C$.
This represents the set of concepts $c \in C$, at any level, such that $B$ contains enough leaves of $c$ to support recognition of $c$.
The notion of ``enough'' here is defined recursively, in terms of a level parameter $\ell$.
This definition is equivalent to the corresponding one in~\cite{LM21}.

\begin{definition}
\label{def: supp1}
$supp_r(B)$:  Given $B \subseteq D_0$, define the sets of concepts $S(\ell)$, for $0 \leq \ell \leq \lmax$:
\begin{itemize}
    \item $S(0) = B \cap C_0$.
    \item For $1 \leq \ell \leq \ell_{max}$, $S(\ell)$ is
      the set of all concepts $c \in C_{\ell}$ such that $|children(c)
      \cap S(\ell - 1)|  \geq r k$.
\end{itemize}
Define $supp_r(B)$ to be $\bigcup_{0 \leq \ell \leq \lmax} S(\ell)$.  
We also write $supp_r(B,\ell)$ for $S(\ell)$, when we want to make the parameters $r$ and $B$ explicit.
\end{definition}

The following monotonicity lemma says that increasing the value of the parameter $r$ can only decrease the supported set.\footnote{The mention of the limited-overlap property is just for emphasis, since all of the concept hierarchies of this paper satisty this property.}

\begin{lemma}
\label{lem: support-monotonic}
Let $\mathcal C$ be any concept hierarchy satisfying the limited-overlap property, and let $B \subseteq D_0$.
Consider $r, r'$, where $0 \leq r \leq r' \leq 1$.
Then $supp_{r'}(B) \subseteq supp_r(B)$. 
\end{lemma}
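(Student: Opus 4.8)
The plan is to prove the statement by induction on the level $\ell$, showing that $supp_{r'}(B,\ell) \subseteq supp_r(B,\ell)$ for every $\ell$ with $0 \le \ell \le \lmax$; the lemma then follows by taking the union over all levels. Writing $S'(\ell) = supp_{r'}(B,\ell)$ and $S(\ell) = supp_r(B,\ell)$, the base case $\ell = 0$ is immediate, since both $S'(0)$ and $S(0)$ are defined to be $B \cap C_0$, so they are in fact equal.

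For the inductive step, fix $\ell$ with $1 \le \ell \le \lmax$ and assume $S'(\ell-1) \subseteq S(\ell-1)$. Take any $c \in S'(\ell)$. By Definition~\ref{def: supp1}, $c \in C_\ell$ and $|children(c) \cap S'(\ell-1)| \ge r' k$. Using the inductive hypothesis, $children(c) \cap S'(\ell-1) \subseteq children(c) \cap S(\ell-1)$, so $|children(c) \cap S(\ell-1)| \ge |children(c) \cap S'(\ell-1)| \ge r' k \ge r k$, where the last inequality uses $r \le r'$. Hence $c \in S(\ell)$, which gives $S'(\ell) \subseteq S(\ell)$ and completes the induction.

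Finally, taking the union over $\ell$, we get $supp_{r'}(B) = \bigcup_{0 \le \ell \le \lmax} S'(\ell) \subseteq \bigcup_{0 \le \ell \le \lmax} S(\ell) = supp_r(B)$, as desired.

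This argument is entirely routine; there is really no substantial obstacle. The only thing to be slightly careful about is that the recursive definition of $S(\ell)$ depends on $S(\ell-1)$ with the \emph{same} threshold $r$, so the monotonicity has to be propagated level by level rather than applied in one shot — which is exactly what the induction handles. The limited-overlap hypothesis plays no role in the proof; as the paper's own footnote notes, it is mentioned only because all hierarchies under consideration satisfy it.
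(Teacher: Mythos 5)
Your proof is correct: the level-by-level induction, using $r'k \ge rk$ together with the inductive containment $S'(\ell-1)\subseteq S(\ell-1)$, is exactly the routine argument the lemma calls for. The paper itself states this lemma without proof (evidently regarding it as immediate), so your write-up simply supplies the standard argument the authors had in mind, and your closing remarks about the role of the limited-overlap hypothesis are accurate.
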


The following lemma says that any concept $c$ is supported by its entire set of leaves.

\begin{lemma}
\label{lem: total-support}
Let $\mathcal C$ be any concept hierarchy satisfying the limited-overlap property.
If $c$ is any concept in $C$, then $c \in supp_{1} (leaves(c))$.
\end{lemma}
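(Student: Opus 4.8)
The plan is to prove a slightly stronger statement by induction, from which the lemma follows immediately. Namely, I would fix a concept $c \in C$ and show, for every level $\ell$ with $0 \le \ell \le level(c)$, that every level-$\ell$ descendant of $c$ lies in $supp_1(leaves(c),\ell)$. Since $c$ is a descendant of itself and $supp_1(leaves(c), level(c)) \subseteq supp_1(leaves(c))$, instantiating this at $\ell = level(c)$ yields $c \in supp_1(leaves(c))$.

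I would prove the strengthened claim by induction on $\ell$ (with $c$ fixed). For the base case $\ell = 0$: by definition $leaves(c) = descendants(c) \cap C_0$, and since $leaves(c) \subseteq C_0$ we have $supp_1(leaves(c),0) = leaves(c) \cap C_0 = leaves(c)$ by Definition~\ref{def: supp1}; hence the level-$0$ descendants of $c$ are exactly the elements of $supp_1(leaves(c),0)$. For the inductive step, fix $\ell \ge 1$ and let $d$ be an arbitrary level-$\ell$ descendant of $c$. By Definition~\ref{def: supp1}, to conclude $d \in supp_1(leaves(c),\ell)$ it suffices to show $|children(d) \cap supp_1(leaves(c),\ell-1)| \ge 1\cdot k = k$, and since $|children(d)| = k$ by Property~2, it is enough to show $children(d) \subseteq supp_1(leaves(c),\ell-1)$. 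Each $c'' \in children(d)$ is a level-$(\ell-1)$ concept, and because $c''$ is a child of $d$ and $d$ is a descendant of $c$, the recursive definition of \emph{descendant} gives that $c''$ is a descendant of $c$. Thus $c''$ is a level-$(\ell-1)$ descendant of $c$, so the induction hypothesis gives $c'' \in supp_1(leaves(c),\ell-1)$, completing the step.

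The argument is essentially routine; the only real design choice is the formulation of the induction. A naive induction directly on $level(c)$ would want to reuse the statement for a child $c'$ of $c$, but that only yields $c' \in supp_1(leaves(c'))$ with the \emph{smaller} witness set $leaves(c') \subsetneq leaves(c)$, and closing that gap would require a monotonicity-in-$B$ property of $supp$ (true and easy, but not stated in the excerpt). Proving the statement simultaneously for all descendants of a fixed $c$, by induction on $\ell$, avoids this. The one essential structural ingredient is Property~2 (every non-leaf concept has exactly $k$ children), which is precisely what makes the recognition threshold $rk = k$ attainable when $r = 1$.
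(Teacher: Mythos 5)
Your proof is correct and takes essentially the same route as the paper, whose entire proof is the single line ``By induction on $level(c)$.'' Your only refinement is to run the induction over the levels $\ell$ of the descendants of a fixed $c$ (so the witness set $leaves(c)$ stays fixed throughout), rather than over the lemma statement itself, which --- as you observe --- would otherwise require an easy but unstated monotonicity-in-$B$ property of $supp$; this is a sensible tightening of the same argument rather than a different approach.
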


\begin{proof}
By induction on $level(c)$.
\end{proof}

\subsubsection{Support with both upward and downward information flow}

Our second definition, which captures information flow both upward and downward in the concept hierarchy, is a bit more complicated.  It is expressed in terms of a generic ``time parameter''  $t$, in addition to the level parameter $\ell$.
Here, $f$ is a nonnegative real, as specified at the start of Section~\ref{sec: prelims}.

\begin{definition}
\label{def: supp2}
$supp_{r,f}(B)$:  Given $B \subseteq D_0$, define the sets of concepts $S(\ell,t)$, for $0 \leq \ell \leq \lmax$ and $t \geq 0$:
\begin{enumerate}
    \item $\ell = 0$ and $t \geq 0$: 
    Define $S(0,t) = B$.  $B$ is initially supported and continues to be supported, and no level $0$ concept other than those in $B$ ever gets supported.
    \item $1 \leq \ell \leq \lmax$ and $t = 0$:
    Define $S(\ell,0) = \emptyset$.  No concepts at levels higher than $0$ are initially supported.
    \item $1 \leq \ell \leq \lmax$ and $t \geq 1$:
        Define 
  \[S(\ell,t) = S(\ell,t-1) \ \cup \
  \{c \in C_{\ell} : |children(c) \ \cap \ S(\ell-1,t-1)| 
  \ + \ f \ |parents(c) \ \cap \ S(\ell+1,t-1)| \geq r k \}.
  \]
  Thus, concepts that are supported at time $t-1$ continue to be supported at time $t$.  In addition, new level $\ell$ concepts can get supported at time $t$ based on a combination of children and parents being supported at time $t-1$, with a weighting factor $f$ used for parents.
\end{enumerate}
Define $supp_{r,f}(B)$ to be $\bigcup_{\ell,t} S(\ell,t)$.
We sometimes also write $supp_{r,f}(B,\ell,t)$ for $S(\ell,t)$, when we want to make the parameters $r$, $f$, and $B$ explicit.
\end{definition}

We also use the abbreviations $supp_{r,f}(B,*,t)$ for $\bigcup_{\ell}S(\ell,t)$,
$supp_{r,f}(B,\ell,*)$ for $\bigcup_{t}S(\ell,t)$, and 
$supp_{r,f}(B,*,*)$ for $\bigcup_{\ell,t}S(\ell,t)$,
Notice that each of these three unions must converge to a finite set 
since all the sets $S(\ell,t)$ are subsets of the single finite set
$C_{\ell}$ of concepts.

Now we have two monotonicity results, for $r$ and $f$:

\begin{lemma}
\label{lem: support-monotonic-2}
Let $\mathcal C$ be any concept hierarchy satisfying the limited-overlap property, and let $B \subseteq D_0$.
Consider $r, r'$, where $0 \leq r \leq r' \leq 1$, and arbitrary $f$.
Then $supp_{r',f}(B) \subseteq supp_{r,f}(B)$. 
\end{lemma}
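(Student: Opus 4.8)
The plan is to prove the stronger ``stage-by-stage'' inclusion
\[
supp_{r',f}(B,\ell,t) \ \subseteq \ supp_{r,f}(B,\ell,t) \qquad \text{for every } \ell \text{ and every } t \geq 0,
\]
and then take the union over all $\ell$ and $t$ to obtain the lemma. Throughout, $B$ and $f$ are fixed; I will abbreviate $S'(\ell,t) = supp_{r',f}(B,\ell,t)$ and $S(\ell,t) = supp_{r,f}(B,\ell,t)$.

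I would argue by induction on $t$, establishing $S'(\ell,t) \subseteq S(\ell,t)$ simultaneously for all $\ell$ at each stage. The base case $t = 0$ is immediate from Definition~\ref{def: supp2}: for $\ell = 0$ both sides equal $B$, and for $1 \leq \ell \leq \lmax$ both sides are empty. For the inductive step, assume $S'(\ell',t-1) \subseteq S(\ell',t-1)$ for all $\ell'$. The case $\ell = 0$ is again trivial, since $S'(0,t) = B = S(0,t)$. For $\ell \geq 1$, take any $c \in S'(\ell,t)$. If $c \in S'(\ell,t-1)$, the inductive hypothesis gives $c \in S(\ell,t-1) \subseteq S(\ell,t)$. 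Otherwise $c$ is newly supported at time $t$, meaning
\[
|children(c) \cap S'(\ell-1,t-1)| \ + \ f\,|parents(c) \cap S'(\ell+1,t-1)| \ \geq \ r' k .
\]
By the inductive hypothesis $S'(\ell-1,t-1) \subseteq S(\ell-1,t-1)$ and $S'(\ell+1,t-1) \subseteq S(\ell+1,t-1)$, so each of the two intersection sizes can only grow when $S'$ is replaced by $S$; since $f \geq 0$, the whole left-hand side does not decrease. Combining this with $r' \geq r$ yields
\[
|children(c) \cap S(\ell-1,t-1)| \ + \ f\,|parents(c) \cap S(\ell+1,t-1)| \ \geq \ r' k \ \geq \ r k ,
\]
which is exactly the condition for $c \in S(\ell,t)$.

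The only points requiring a word of care are the boundary levels: when $\ell = \lmax$ the $parents(c)$ term is vacuous, and when $\ell = 1$ the children live at level $0$, where $S'(0,t-1) = S(0,t-1) = B$, so that sub-inclusion is an equality; neither case causes difficulty. I do not anticipate any genuine obstacle — this is a routine double induction — the one essential hypothesis being that $f$ is nonnegative (as fixed in Section~\ref{sec: prelims}), which is what keeps the parent contribution monotone in $S(\ell+1,\cdot)$; the statement would fail for negative $f$.
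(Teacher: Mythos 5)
Your proof is correct. The paper states Lemma~\ref{lem: support-monotonic-2} without any proof, treating it as routine, and your induction on $t$ establishing the stage-by-stage inclusion $supp_{r',f}(B,\ell,t) \subseteq supp_{r,f}(B,\ell,t)$ simultaneously for all $\ell$ is exactly the natural argument the definition invites; your remark that the nonnegativity of $f$ is the load-bearing hypothesis (keeping the parent term monotone) is also correct and worth having made explicit.
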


\begin{lemma}
\label{lem: support-monotonic-3}
Let $\mathcal C$ be any concept hierarchy satisfying the limited-overlap property, and let $B \subseteq D_0$.
Consider $f,f'$, where $0 \leq f \leq f'$, and arbitrary $r \in [0,1]$.
Then $supp_{r,f}(B) \subseteq supp_{r,f'}(B)$. 
\end{lemma}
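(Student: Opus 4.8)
The plan is to prove the inclusion levelwise and timewise, by induction on the time parameter $t$ simultaneously over all levels $\ell$. Concretely, writing $S_f(\ell,t)$ and $S_{f'}(\ell,t)$ for the sets produced by Definition~\ref{def: supp2} with feedback strengths $f$ and $f'$, I would show that $S_f(\ell,t) \subseteq S_{f'}(\ell,t)$ for every $\ell$ with $0 \le \ell \le \lmax$ and every $t \ge 0$; taking the union over all $\ell$ and $t$ then gives $supp_{r,f}(B) \subseteq supp_{r,f'}(B)$. Organizing the induction on $t$ (rather than on $\ell$) is important, since the recursive clause for level $\ell$ at time $t$ refers to levels $\ell-1$, $\ell$, and $\ell+1$ at time $t-1$.

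For the base of the induction I would handle the two cases where the sets do not depend on $f$ at all: when $\ell = 0$ we have $S_f(0,t) = B = S_{f'}(0,t)$ for every $t$, and when $t = 0$ and $\ell \ge 1$ we have $S_f(\ell,0) = \emptyset \subseteq S_{f'}(\ell,0)$.

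For the inductive step, fix $t \ge 1$, assume $S_f(\ell',t-1) \subseteq S_{f'}(\ell',t-1)$ for all $\ell'$, fix $\ell$ with $1 \le \ell \le \lmax$, and take $c \in S_f(\ell,t)$. If $c \in S_f(\ell,t-1)$, the induction hypothesis gives $c \in S_{f'}(\ell,t-1)$, and the union in the definition gives $S_{f'}(\ell,t-1) \subseteq S_{f'}(\ell,t)$, so $c \in S_{f'}(\ell,t)$. Otherwise $c$ is added at time $t$ by the threshold condition, i.e.
\[
|children(c)\cap S_f(\ell-1,t-1)| + f\,|parents(c)\cap S_f(\ell+1,t-1)| \geq rk .
\]
Using the induction hypothesis at time $t-1$ for levels $\ell-1$ and $\ell+1$, both intersection cardinalities only grow when the $f$-sets are replaced by the $f'$-sets; and since $0 \le f \le f'$ and the parent count is nonnegative, replacing $f$ by $f'$ only increases the left-hand side further (for $\ell = \lmax$ the parent term is simply $0$ on both sides, since $parents(c)=\emptyset$). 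Hence the analogous threshold inequality with $f'$ holds, so $c \in S_{f'}(\ell,t)$, completing the induction.

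I do not expect a genuine obstacle: the argument is essentially the same as for Lemma~\ref{lem: support-monotonic-2}, with ``lowering $r$'' replaced by ``raising $f$''. The only points that need care are bookkeeping ones — running the induction on $t$ uniformly over all levels so that the cross-level references at time $t-1$ are covered by the hypothesis, and noting that the two effects being combined (monotonicity of the threshold expression under set inclusion of the supported sets, and monotonicity in the nonnegative coefficient $f$) both act in the same direction, so no cancellation can occur.
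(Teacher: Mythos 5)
Your proof is correct: the induction on $t$ uniformly over all levels, combined with the observation that enlarging the supported sets at time $t-1$ and increasing the nonnegative coefficient $f$ both only increase the left-hand side of the threshold inequality, is exactly the routine argument the paper has in mind (it states this lemma without proof, treating it as immediate in the same way as Lemma~\ref{lem: support-monotonic-2}). No gaps; your handling of the $\ell=0$, $t=0$, and $\ell=\lmax$ boundary cases is also right.
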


Also note that the second $supp$ definition with $f = 0$ corresponds to the first definition:

\begin{lemma}
Let $\mathcal C$ be any concept hierarchy satisfying the limited-overlap property, and $B \subseteq D_0$.
Then $supp_{r,0}(B) = supp_{r}(B)$.
Moreover, for every $\ell$, $0 \leq \ell \leq \lmax$, 
$supp_{r,f}(B,\ell,*) = supp_r(B,\ell)$.
\end{lemma}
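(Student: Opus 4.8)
The plan is to establish the stronger, per-level identity $supp_{r,0}(B,\ell,*) = supp_r(B,\ell)$ for every $\ell$ with $0 \le \ell \le \lmax$; the first equation $supp_{r,0}(B) = supp_r(B)$ then follows immediately by taking the union over $\ell$. (The limited-overlap hypothesis plays no role; the argument is purely about the two support recursions.) The crucial observation is that setting $f = 0$ deletes the parent term $f\,|parents(c)\cap S(\ell+1,t-1)|$ from the recursion of \autoref{def: supp2}, so that for $1 \le \ell \le \lmax$ and $t \ge 1$,
\[
S(\ell,t) \;=\; S(\ell,t-1)\ \cup\ \{\, c \in C_\ell : |children(c) \cap S(\ell-1,t-1)| \ge rk \,\}.
\]
Thus $S(\ell,t)$ depends only on $S(\ell,t-1)$ and $S(\ell-1,t-1)$: the family $\{S(\ell,t)\}$ is a one-directional, bottom-up cascade, each chain $S(\ell,0)\subseteq S(\ell,1)\subseteq\cdots$ is nondecreasing in $t$, and $supp_{r,0}(B,\ell,*)=\bigcup_t S(\ell,t)$ is simply its limit.

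First I would show $S(\ell,t)\subseteq S(\ell)$ for all $\ell,t$, by induction on $t$. The base case $t=0$ holds since $S(0,0)=B\cap C_0=S(0)$ and $S(\ell,0)=\emptyset$ for $\ell\ge 1$. For $t\ge 1$, the case $\ell=0$ is trivial ($S(0,\cdot)$ is constant), and for $\ell\ge 1$ the inductive hypothesis gives $S(\ell-1,t-1)\subseteq S(\ell-1)$; since $X\mapsto|children(c)\cap X|$ is monotone, any $c$ newly added to $S(\ell,t)$ satisfies $|children(c)\cap S(\ell-1)|\ge rk$, i.e.\ $c\in S(\ell)$, and combined with $S(\ell,t-1)\subseteq S(\ell)$ this closes the step.

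Next I would show the reverse inclusion $S(\ell)\subseteq S(\ell,t)$ for all $t\ge \ell$, by induction on $\ell$. The base case is $S(0)=B\cap C_0=S(0,t)$ for every $t$. For $\ell\ge 1$: if $c\in S(\ell)$ then $|children(c)\cap S(\ell-1)|\ge rk$; the inductive hypothesis gives $S(\ell-1)\subseteq S(\ell-1,t-1)$ whenever $t-1\ge\ell-1$, so for $t\ge\ell$ monotonicity yields $|children(c)\cap S(\ell-1,t-1)|\ge rk$ and hence $c\in S(\ell,t)$. Combining the two inclusions, $S(\ell,t)=S(\ell)$ for all $t\ge\ell$, so $\bigcup_t S(\ell,t)=S(\ell)$, which is the per-level claim; unioning over $\ell$ gives $supp_{r,0}(B)=supp_r(B)$.

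I do not expect a real obstacle: the only idea is the opening remark that $f=0$ decouples the levels, collapsing the simultaneous fixed point over pairs $(\ell,t)$ into a finite bottom-up cascade that has stabilized by time $\lmax$. The one point deserving care is the level-$0$ base case, where \autoref{def: supp1} uses $S(0)=B\cap C_0$ while \autoref{def: supp2} writes $S(0,t)=B$; I would read the latter as $B\cap C_0$, which is the only part of $B$ that can ever enter a $children(\cdot)$ intersection and hence the only part relevant to either recursion.
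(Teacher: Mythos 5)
Your proof is correct. The paper states this lemma without any proof at all (it is presented as an immediate consequence of the definitions), so there is no argument of the authors' to compare against; your double-inclusion argument --- $S(\ell,t)\subseteq S(\ell)$ by induction on $t$, and $S(\ell)\subseteq S(\ell,t)$ for $t\geq\ell$ by induction on $\ell$ --- is the natural way to make the claim precise, and as a byproduct it also establishes the $f=0$ case of the stabilization result in \autoref{lem: tree-upward-support} (namely $S(\ell,t)=S(\ell)$ for all $t\geq\ell$), which the paper likewise proves only with the phrase ``by induction on $\ell$.''

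Two small points you handled well and that are worth recording. First, the ``Moreover'' clause as printed reads $supp_{r,f}(B,\ell,*)=supp_r(B,\ell)$ with a general $f$; taken literally this is false (feedback can enlarge the supported set), and your reading of it as the $f=0$ statement is the only one consistent with the first sentence of the lemma. Second, the mismatch between $S(0)=B\cap C_0$ in \autoref{def: supp1} and $S(0,t)=B$ in \autoref{def: supp2} is a genuine discrepancy in the definitions when $B\not\subseteq C_0$: as literally defined, $supp_{r,0}(B)$ contains all of $B$ while $supp_r(B)$ contains only $B\cap C_0$, so the two sets could differ by $B\setminus C_0$. Your convention of reading the level-$0$ set as $B\cap C_0$ (or, equivalently, restricting attention to $B\subseteq C_0$, as all later uses of these definitions do) is needed for the lemma to hold as stated, and you were right to flag it rather than pass over it silently.
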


\subsubsection{Time bounds}
\label{sec: time-bounds}

We would like upper bounds on the time by which the sets $S(\ell,t)$ in the second $supp$ definition stabilize to their final values.  Specifically,  for each value of $\ell$, we would like to find a value $t^*$ such that $S(\ell,t^*) = supp_{r,f}(B,\ell,*)$.  It follows that, for every $t \geq t^*$, $S(\ell,t) = supp_{r,f}(B,\ell,*)$.

In general, we have only a large (exponential in $\lmax$) upper bound, based on the fact that $C$ contains only a bounded number of concepts.
However, we have better results in two special cases.
The first result is for the case where $f = 0$, that is, where there is no feedback from parents.  In this case, for every $\ell$, the sets $S(\ell,t)$ stabilize within time $\ell$, as the support simply propagates upwards.

\begin{theorem}
\label{lem: tree-upward-support}
Let $\mathcal C$ be any concept hierarchy satisfying the limited-overlap property, and let $B \subseteq D_0$.
Then for any $\ell$, $0 \leq \ell \leq \lmax$, we have
$supp_{r,0}(B,\ell,*) = supp_{r,0}(B,\ell,\ell)$.
\end{theorem}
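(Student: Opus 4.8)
The plan is to prove, by induction on the level $\ell$, the slightly stronger statement that for every $t \ge \ell$ we have $S(\ell,t) = S(\ell,\ell)$, where $S$ denotes the sets from Definition~\ref{def: supp2} instantiated with $f = 0$. Since each $S(\ell,t)$ explicitly contains $S(\ell,t-1)$ (and $S(\ell,0) = \emptyset \subseteq S(\ell,1)$, while $S(0,t)$ is constant), the family $S(\ell,t)$ is non-decreasing in $t$, so $S(\ell,t) \subseteq S(\ell,\ell)$ for $t \le \ell$ as well. Combining these two facts gives $supp_{r,0}(B,\ell,*) = \bigcup_t S(\ell,t) = S(\ell,\ell)$, which is exactly the claim.

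Before the induction, note that with $f = 0$ the feedback term in clause~3 of Definition~\ref{def: supp2} vanishes, so the recursion collapses to
\[
S(\ell,t) \;=\; S(\ell,t-1)\ \cup\ \{c \in C_{\ell} : |children(c) \cap S(\ell-1,t-1)| \ge rk\};
\]
the only way a new concept enters level $\ell$ at time $t$ is via its children at level $\ell - 1$ at time $t-1$. The base case $\ell = 0$ is immediate: clause~1 gives $S(0,t) = B$ for all $t \ge 0$, hence $S(0,t) = S(0,0)$.

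For the inductive step, fix $\ell \ge 1$ and assume $S(\ell-1,t) = S(\ell-1,\ell-1)$ for all $t \ge \ell-1$. Put $T := \{c \in C_{\ell} : |children(c) \cap S(\ell-1,\ell-1)| \ge rk\}$, a fixed subset of $C_\ell$. For any $t \ge \ell$ we have $t-1 \ge \ell-1$, so the induction hypothesis gives $S(\ell-1,t-1) = S(\ell-1,\ell-1)$, and the displayed recursion becomes $S(\ell,t) = S(\ell,t-1) \cup T$. In particular $S(\ell,\ell) = S(\ell,\ell-1) \cup T \supseteq T$. A short inner induction on $t \ge \ell$ now finishes: the base $t = \ell$ is trivial, and if $S(\ell,t-1) = S(\ell,\ell)$ then $S(\ell,t) = S(\ell,t-1) \cup T = S(\ell,\ell) \cup T = S(\ell,\ell)$, using $T \subseteq S(\ell,\ell)$. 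This completes both inductions.

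There is no genuinely hard step here: the content is simply that, absent feedback, support propagates strictly upward at one level per time step, so once level $\ell-1$ has stabilized (by time $\ell-1$) no further changes can occur at level $\ell$ after one more step. The one point requiring care is the order of quantifiers — one must prove the uniform statement ``$S(\ell,t) = S(\ell,\ell)$ for all $t \ge \ell$'' rather than merely ``$S(\ell,\ell+1) = S(\ell,\ell)$'', so that the hypothesis at level $\ell-1$ can be applied at every relevant time $t-1 \ge \ell-1$. The limited-overlap property is not used in this argument; it appears in the statement only because it is a standing assumption on all hierarchies considered in the paper.
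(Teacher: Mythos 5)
Your proof is correct and takes the same route as the paper, whose entire proof is the single line ``By induction on $\ell$''; you have simply supplied the details of that induction (the uniform claim $S(\ell,t)=S(\ell,\ell)$ for all $t\ge\ell$, plus monotonicity in $t$ to handle $t<\ell$). Your remark about needing the quantifier ``for all $t\ge\ell$'' in the induction hypothesis, and about the limited-overlap property being a standing assumption rather than an ingredient of the argument, are both accurate.
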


\begin{proof}
By induction on $\ell$.
\end{proof}

It follows that $supp_{r,0}(B) = \bigcup_{\ell} supp_{r,0}(B,\ell,\ell)$.
Since $\lmax$ is the maximum value of $\ell$, we get that
$supp_{r,0}(B) = \bigcup_{\ell} supp_{r,0}(B,\ell,\lmax) = supp_{r,0}(B,*,\lmax)$.
This means that all the sets $S(\ell,t)$ stabilize to their final values by time $t = \lmax$.

The second result is for the special case of a tree hierarchy, with any value of $f$. In this case, the support may propagate both upwards and downwards.  This propagation may follow a complicated schedule, but the total time is bounded by $2 \lmax$. 

To prove this, we use a lemma saying that, if a concept $c$ gets put into an $S(\ell,t)$ set before its parent does, then $c$ is supported by just its descendants.
To state the lemma, we here abbreviate $supp_{r,f}(B,*,t)$ by simply $S(t)$.
Thus, $S(t)$ is the set of concepts at all levels that are supported by input set $B$, by step $t$ of the recursive definition of the $S(\ell,t)$ sets.
The lemma says that, if a concept $c$ is in $S(t)$ and its parent is not, then it must be that $c$ is supported by just its descendants:

\begin{lemma}
\label{lem: node-and-parent}
Let $\mathcal C$ be any tree concept hierarchy, and let $B \subseteq D_0$.
Let $t$ be any nonnegative integer.
If $c \in S(t)$ and $parent(c) \notin S(t)$ then $c \in supp_{r,0}(B)$.
\end{lemma}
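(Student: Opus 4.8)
The plan is to argue by induction on the time parameter $t$, exploiting the tree property that every concept has at most one parent. The intuition: if $c$ is drawn into some set $S(\ell,t')$ strictly before its parent is supported, then at the step when $c$ is added the feedback contribution from $parents(c)$ is zero, so $c$ must be supported by its children alone; moreover, in a tree each of those children has $c$ as its unique parent, so it too was supported before its (only) parent, and the same reasoning recurses down the hierarchy until it bottoms out at level $0$.

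Concretely, write $\ell = level(c)$ and $S(t) = supp_{r,f}(B,*,t)$ as in the statement. First I would dispose of the level-$0$ case, which also serves as the base case $t=0$, since $S(\ell,0)=\emptyset$ for $\ell\ge 1$: if $\ell = 0$ then $c \in S(0,t) = B$, and since $c \in C$ with $level(c)=0$ it lies in $C_0$, so $c \in B \cap C_0 = supp_{r,0}(B,0) \subseteq supp_{r,0}(B)$. For the inductive step, assume $\ell \ge 1$ and that the lemma holds for all smaller values of the time parameter. Let $t_c$ be the \emph{least} $t'$ with $c \in S(\ell,t')$; since $S(\ell,0) = \emptyset$ and $c \in S(\ell,t)$, we have $1 \le t_c \le t$. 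By the recursive clause of Definition~\ref{def: supp2},
\[
|children(c)\cap S(\ell-1,t_c-1)| \ +\ f\,|parents(c)\cap S(\ell+1,t_c-1)| \ \ge\ rk.
\]
Since the sets $S(\cdot,t')$ are nondecreasing in $t'$ and $t_c-1 \le t$, the hypothesis $parent(c)\notin S(t)$ gives $parent(c)\notin S(\ell+1,t_c-1)$; and in a tree $parents(c)$ is $\{parent(c)\}$ (or $\emptyset$), so the feedback term vanishes and $|children(c)\cap S(\ell-1,t_c-1)| \ge rk$.

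Next I would push the induction down to the children. For any $c' \in children(c)\cap S(\ell-1,t_c-1)$: because $t_c$ is the first entry time of $c$, we have $c \notin S(\ell,t_c-1)$, hence $c \notin S(t_c-1)$; and in a tree $c$ is the unique parent of $c'$, so $parent(c') \notin S(t_c-1)$ while $c' \in S(t_c-1)$. Applying the inductive hypothesis at time $t_c-1 < t$ yields $c' \in supp_{r,0}(B)$, and since $c' \in C_{\ell-1}$ this means $c' \in supp_{r,0}(B,\ell-1)$. Hence $children(c)\cap S(\ell-1,t_c-1) \subseteq children(c)\cap supp_{r,0}(B,\ell-1)$, so $|children(c)\cap supp_{r,0}(B,\ell-1)| \ge rk$, and Definition~\ref{def: supp1} gives $c \in supp_{r,0}(B,\ell) \subseteq supp_{r,0}(B)$, completing the induction.

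The hard part will be pinning down exactly where the tree hypothesis is indispensable, since this is what fails in the overlapping case. It enters twice: once to force $c$'s own feedback term to be $0$ as soon as we know its single parent is absent from $S(t_c-1)$, and once to ensure that each child $c'$ has $c$ as its \emph{only} parent, so that ``$c \notin S(t_c-1)$'' really is the hypothesis ``$parent(c')\notin S(t_c-1)$'' needed to invoke the inductive hypothesis on $c'$. If $c'$ had another, already-supported parent, feedback from that parent could be what supported $c'$, and the argument would break. A secondary thing to be careful about is the bookkeeping: $t_c$ must be the \emph{minimal} entry time (not merely some time $\le t$), and one must check $1 \le t_c \le t$ so that $t_c - 1$ is a legitimate, strictly smaller, nonnegative value of the time parameter.
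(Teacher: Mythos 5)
Your proposal is correct and follows essentially the same route as the paper's proof: induction on the time parameter, using the tree structure first to kill the feedback term for $c$ (its unique parent is absent) and then to apply the inductive hypothesis to each supporting child (whose unique parent is $c$, itself not yet supported). The only difference is presentational --- the paper runs a plain induction on $t$ with the case split ``$c \in S(t)$ versus $c \notin S(t)$,'' whereas you use strong induction via the minimal entry time $t_c$; the two are equivalent.
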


\begin{proof}
We proceed by induction on $t$.
The base case is $t=0$.  In this case $c \in S(0)$, which implies that
$c \in supp_{r,0}(B)$, as needed.

For the inductive step, assume that $c \in S(t+1)$ and $parent(c) \notin S(t+1)$.
If $c \in S(t)$, then since $parent(c) \notin S(t)$, the inductive hypothesis tells us that $c \in supp_{r,0}(B)$.
So assume that $c \notin S(t)$.
Then since $parent(c) \notin S(t)$, $c$ is included in $S(t+1)$ based only on its children who are in $S(t)$.
Since $c \notin S(t)$, the parent of each of these children is not in $S(t)$.
Then by inductive hypothesis, all of $c$'s children that are in $S(t)$ are in $supp_{r,0}(B)$.
Since they are enough to support $c$'s inclusion in $supp_{r,0}(B)$, we have that $c \in supp_{r,0}(B)$.
\end{proof}

Now we can state our time bound result.  It says that, for the case of tree hierarchies, the sets $S(\ell,t)$ stabilize within time $2 \lmax - \ell$.

\begin{theorem}
\label{lem: support-firing}
Let $\mathcal C$ be any tree concept hierarchy, and let $B \subseteq D_0$.
Then for any $\ell$, $0 \leq \ell \leq \lmax$, we have 
$supp_{r,f}(B,\ell,*) = supp_{r,f}(B,\ell,2\lmax-\ell)$.
\end{theorem}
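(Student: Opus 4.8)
The plan is to track, for each supported concept $c$, the first time it gets added: let $d(c)$ be the least $t$ with $c \in S(level(c),t)$, which exists whenever $c \in supp_{r,f}(B)$. Since the sets $S(\ell,t)$ are increasing in $t$ and $supp_{r,f}(B,\ell,*) = \bigcup_t S(\ell,t)$, it suffices to prove that $d(c) \le 2\lmax - level(c)$ for every $c \in supp_{r,f}(B)$; the stated equality then follows. Write $\hat S = supp_{r,0}(B) = supp_r(B)$ for the ``bottom-up supported'' concepts. I will use two facts. First, every $c \in \hat S$ has $d(c) \le level(c)$: by \autoref{lem: tree-upward-support}, $\hat S \cap C_\ell = supp_{r,0}(B,\ell,\ell)$, and the $f$-dynamics dominates the $f=0$-dynamics set by set (a one-line induction on $t$, since the recursion only adds a nonnegative term), so $\hat S \cap C_\ell \subseteq S(\ell,\ell)$, i.e.\ $d(c)\le\ell$. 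Second, the contrapositive of \autoref{lem: node-and-parent}: if $c \in supp_{r,f}(B)\setminus\hat S$ has a parent, then applying that lemma at $t=d(c)$ shows $parent(c)$ is already present at time $d(c)$, so $parent(c)\in supp_{r,f}(B)$ and $d(parent(c))\le d(c)$; symmetrically, if some child $c'$ of $c$ lies in $supp_{r,f}(B)\setminus\hat S$, then since $parent(c')=c$ in a tree we get $d(c)\le d(c')$.

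Then I would argue by minimal counterexample. Suppose some $c\in supp_{r,f}(B)$ at level $\ell$ has $d:=d(c)>2\lmax-\ell$, and pick one with $d$ minimum. Since $\ell\le\lmax$ forces $d\ge 2\lmax-\ell+1>\ell$, the first fact gives $c\notin\hat S$. As $c$ enters at time $d$, there is a set $W\subseteq S(d-1)$ with $W\subseteq children(c)\cup parents(c)$ and $|W\cap children(c)|+f\,|W\cap parents(c)|\ge rk$. The key sub-step: every $c'\in W\cap children(c)$ must lie in $\hat S$ --- it is supported with $d(c')\le d-1<d$, and if it were not in $\hat S$ the second fact would give $d(c)\le d(c')<d$, impossible --- so by the first fact each such $c'$ is in $S(\ell-1,s)$ for all $s\ge\ell-1$. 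Now split: if $W$ contains no parent of $c$, then $W\subseteq children(c)\cap\hat S$ with $|W|\ge rk$, which forces $c\in\hat S$ by the definition of $supp_{r,0}$, a contradiction; if $W$ contains the (unique, in a tree) parent $p$ of $c$, then $d(p)\le d-1<d$, so by minimality $d(p)\le 2\lmax-(\ell+1)$, and taking $s=\max(d(p),\ell-1)$ we have $p\in S(\ell+1,s)$ and all the relevant children of $c$ in $S(\ell-1,s)$, so the defining condition for $c$ is met at step $s+1$; hence $d(c)\le s+1\le\max(2\lmax-\ell,\ell)=2\lmax-\ell$, again a contradiction. This exhausts the cases, so no counterexample exists.

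I expect the main obstacle to be exactly that key sub-step: showing that the children finally responsible for putting $c$ into the support are already bottom-up supported, hence present by time $level(c)-1$. Without it, a late arrival of $c$ could be blamed on late-arriving level-$(\ell-1)$ children, forcing an uncontrolled descent through all the levels. This is where tree structure (and \autoref{lem: node-and-parent}) is essential: in a tree a concept supported only with feedback has a single parent and can be ``helped'' only by that one parent, so it is never supported before its parent; therefore a child that contributes to $c$'s support strictly before $c$ itself cannot be such a concept, i.e.\ it must be bottom-up supported. Once this decoupling is in place, feedback contributes only a clean ``$+1$'', and the bound telescopes along the chain of parents --- at most $\lmax$ steps for support to climb to the top, then at most $\lmax-\ell$ more for it to come back down to level $\ell$ --- which is what produces $2\lmax-\ell$.
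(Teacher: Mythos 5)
Your proof is correct and follows essentially the same route as the paper's: the same two ingredients (Theorem~\ref{lem: tree-upward-support} for the upward time bound, and Lemma~\ref{lem: node-and-parent} to force the children that a feedback-dependent concept relies on into $supp_{r,0}(B)$) drive the same up-then-down telescoping, with your minimal-counterexample argument on the first-entry time $d(c)$ playing exactly the role of the paper's backwards induction on the level $\ell$. The one point you make explicit that the paper leaves implicit is the set-by-set domination $supp_{r,0}(B,\ell,t)\subseteq supp_{r,f}(B,\ell,t)$ needed to transfer the $f=0$ stabilization bound into the $f$-dynamics.
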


\begin{proof}
This theorem works because, for each $\ell$, the $S(\ell,t)$ sets stabilize after support has had a chance to propagate upwards from level $0$ to level $\lmax$, and then propagate downwards from level $\lmax$ to level $\ell$.  Because the concept hierarchy has a simple tree structure, there is no other way for a concept to get added to the $S(\ell,t)$ sets.

Formally, we use a backwards induction on $\ell$, from $\ell = \lmax$ to $\ell = 0$, to prove the claim:  If $c \in supp_{r,f}(B)$ and $level(c) = \ell$, then $c \in S(2 \lmax - \ell)$.
For the base case, consider $\ell = \lmax$.
Since $c$ has no parents, we must have $c \in supp_{r,0}(B)$, so 
Theorem~\ref{lem: tree-upward-support} implies that $c \in S(\lmax) \subseteq S(2 \lmax)$, as needed.

For the inductive step, suppose that $c \in supp_{r,f}(B)$ and $level(c) = \ell-1$.
If $c \in supp_{r,0}(B)$ then again Theorem~\ref{lem: tree-upward-support} implies that $c \in S(\lmax)$, which suffices.
So suppose that $c \in supp_{r,f}(B) - supp_{r,0}(B)$.
Then $c$'s inclusion in the set $supp_{r,f}(B)$ relies on its (unique) parent first being included, that is, there is some $t$ for which $c \notin S(t)$ and $parent(c) \in S(t)$.
Since $parent(c) \in supp_{r,f}(B)$ and $level(parent(c)) = \ell$, the inductive hypothesis yields that $parent(c) \in S(2 \lmax - \ell)$.

Moreover, all the children that $c$ relies on for its inclusion in $supp_{r,f}(B)$ are in $supp_{r,0}(B)$, by Lemma~\ref{lem: node-and-parent}.
Therefore, by Theorem~\ref{lem: tree-upward-support}, they are in $S(\ell-2) \subseteq S(2 \lmax - \ell)$.
Thus, we have enough support for $c$ in $S(2 \lmax - \ell)$ to ensure that $c \in S(2 \lmax - \ell + 1)$, as needed.
\end{proof}

\section{Network Models}
\label{sec: networkmodel}

We consider two types of network models, for feed-forward networks and for networks with feedback.
The model for feed-forward networks is the same as the network model in~\cite{LM21}, with upward edges between consecutive layers.
The model for networks with feedback includes edges in both directions between consecutive layers.

\subsection{Preliminaries}

We define the following parameters:
\begin{itemize}
    \item $\ell'_{max}$, a positive integer, representing the maximum
      number of a layer in the network,
    \item $n$, a positive integer, representing the number of distinct
      inputs the network can handle; this is intended to match up with the parameter $n$ in the data model,
    \item $f$, a nonnegative real, representing strength of feedback; this is intended to match up with the parameter $f$ in the data model,
    \item $\tau$, a real number, representing the firing threshold for neurons, and
    \item $\eta$, a positive real, representing the learning rate for Oja's rule.
\end{itemize}

\subsection{Network Structure}

A network $\mathcal{N}$ consists of a set $N$ of neurons, partitioned into disjoint sets $N_{\ell}, 0 \leq \ell \leq \ell'_{max}$, which we call \emph{layers}.
We assume that each layer contains exactly $n$ neurons, that is, $|N_\ell|= n$ for every $\ell$.
We refer to the $n$ neurons in layer $0$ as \emph{input neurons}.

For feed-forward networks, we assume that each neuron in
$N_{\ell}$, $0 \leq \ell \leq \ell'_{max} - 1$, has an outgoing ``upward''
edge to each neuron in  $N_{\ell+1}$, and that these are the only edges in the network.
For networks with feedback, we assume that, in addition to these upward edges, each neuron in $N_{\ell}$, $1 \leq \ell \leq \ell'_{max}$, has an outgoing ``downward'' edge to
each neuron in  $N_{\ell-1}$.

We assume a one-to-one mapping $rep : D_0 \rightarrow N_0$, where
$rep(c)$ is the input neuron corresponding to level $0$ concept $c$.
That is, $rep$ is a mapping from the full set of level $0$ concepts to
the full set of layer $0$ neurons.
This allows the network to receive an input corresponding to any level $0$ concept, using a simple unary encoding.

\subsection{Feed-Forward Networks}

Here we describe the contents of neuron states and the rules for network operation, for feed-forward networks.

\subsubsection{Neuron states}

Each input neuron $u \in N_0$ has just one state component:
\emph{firing}, with values in $\{0,1\}$; $firing = 1$ indicates that the neuron is firing, and $firing = 0$ indicates that it is not firing.
We denote the \emph{firing} component of neuron $u$ at time $t$ by
$firing^u(t)$.
We assume that the value of $firing^u(t)$, for every time $t$, is set by some external input signal and not by the network.

Each non-input neuron $u \in N_{\ell}$, $1 \leq \ell \leq
\ell'_{max}$, has three state components:
\begin{itemize}
\item \emph{firing}, with values in $\{0,1\}$,
\item \emph{weight}, a length $n$ vector with entries that are reals in the interval $[0,1]$,
and 
\item\emph{engaged}, with values in $\{0,1\}$.
\end{itemize}
The \emph{weight} component keeps track of the weights of incoming edges to $u$ from all neurons at the previous layer.
The \emph{engaged} component is used to indicate whether neuron $u$ is currently prepared to learn new weights.
We denote the three components of non-input neuron $u$ at time $t$ by
$firing^u(t)$, $weight^u(t)$, and $engaged^u(t)$, respectively.
The initial values of these components are specified as part of an
algorithm description.
The later values are determined by the operation of the network, as described below.


\subsubsection{Potential and firing}

Now we describe how to determine the values of the state components of any
non-input neuron $u$ at time $t \geq 1$, based on $u$'s state
and on firing patterns for its incoming neighbors at the previous time $t-1$.

In general, let $x^u(t)$ denote the column vector of
\emph{firing} values of $u$'s incoming neighbor neurons at time $t$.
Then the value of $firing^u(t)$ is determined by neuron $u$'s
\emph{potential} for time $t$ and its \emph{activation function}.
Neuron $u$'s potential for time $t$, $pot^u(t)$, is given by the dot product of
its $weight$ vector, $weight^u(t-1)$, and its input vector, $x^u(t-1$, at time $t-1$:
\[pot^u(t) = \sum_{j=1}^n \ weight^u_j(t-1) \ x^u_j(t-1).\]
The activation function, which determines whether or not neuron $u$ fires
at time $t$, is then defined by:
\[ firing^u(t) =  \begin{cases}
1 & \text{if $pot^u(t) \geq \tau$}, \\
0 & \text{otherwise}.
\end{cases}\]
Here, $\tau$ is the assumed firing threshold.

\subsubsection{Edge weight modifications}

We assume that the value of the $engaged$ flag of $u$ is controlled externally; that is, for every $t$, the value of $engaged^u(t)$ is set by an external input signal.\footnote{This is a departure from our usual models~\cite{LM21, LMP19, LynchMusco-arxiv21}, in which the network determines all the values of the state components for non-input neurons.  We expect that the network could be modeled as a composition of sub-networks.  One of the sub-networks---a \emph{Winner-Take-All (WTA)} sub-network--- would be responsible for setting the $engaged$ state components.  We will discuss the behavior of the WTA sub-network in Section~\ref{sec: learning-ff}, but a complete compositional model remains to be developed.}

We assume that each neuron that is engaged
at time $t$ determines its weights at time $t$ according to Oja's
learning rule~\cite{Oja}.
That is, if $engaged^u(t) = 1$, then (using vector notation for $weight^u$ and $x^u$):
\begin{equation}\label{eq:Oja} 
\text{\emph{Oja's rule}:
   $weight^u(t) = weight^u(t-1) + \eta \ pot^u(t)\ (x^u(t-1) -   pot^u(t) \ weight^u(t-1) )$.}
\end{equation}
Here, $\eta$ is the assumed learning rate.

\subsubsection{Network operation}

During operation, the network proceeds through a series of
\emph{configurations}, each of which specifies a state for every
neuron in the network.
As described above, the $firing$ values for the input neurons and the $engaged$
values for the non-input neurons are determined by external
sources.  The other state components, which are the $firing$ and $weight$ values
for the non-input neurons, are determined by the initial network specification at
time $t = 0$, and by the activation and learning functions described
above for all times $t > 0$.

\subsection{Networks with Feedback}
\label{sec: networks-feedback-defs}

Now we describe the neuron states and rules for network operation for networks with feedback.

\subsubsection{Neuron states}

Each neuron in the network has a state component $firing$, with values in $\{0,1\}$.
In addition, each non-input neuron $u \in N_{\ell}$, $1 \leq \ell < \ell'_{max}$, has state components:
\begin{itemize}
\item \emph{uweight}, a length $n$ vector with entries that are reals in the interval $[0,1]$; these represent weights on ``upward'' edges, i.e., those from incoming neighbors at level $\ell-1$, and
\item\emph{ugaged}, with values in $\{0,1\}$, representing whether $u$ is engaged for learning of $uweights$.
\end{itemize}
And each neuron $u \in N_{\ell}$, $0 \leq \ell \leq \ell'_{max}-1$, has state components:
\begin{itemize}
\item \emph{dweight}, a length $n$ vector with entries that are reals in the interval $[0,f]$; these represent weights on ``downward'' edges, i.e., those from incoming neighbors at level $\ell+1$,
and 
\item\emph{dgaged}, with values in $\{0,1\}$, representing whether $u$ is engaged for learning of $dweights$.
\end{itemize}
We denote the components of neuron $u$ at time $t$ by
$firing^u(t)$, $uweight^u(t)$, $ugaged^u(t)$, $dweight^u(t)$, and $dgaged^u(t)$. 
As before, the initial values of these components are specified as part of an
algorithm description, and the later values are determined by the operation of the network.

\subsubsection{Potential and firing}

For a neuron $u$ at level $\ell$, $1 \leq \ell \leq \ell'_{max} - 1$, the values of the state components of $u$ at time $t \geq 1$ are determined as follows.

In general, let $ux^u(t)$ denote the vector of $firing$ values of $u$'s incoming layer $\ell-1$ neighbor neurons at time $t$, and let $dx^u(t)$ denote the vector of $firing$ values of $u$'s incoming layer $\ell+1$ neighbor neurons at time $t$.
Then, as before, the value of $firing^u(t)$ is determined by neuron $u$'s potential for time $t$ and its activation function.
The potential at time $t$ is now a sum of two potentials, $upot^u(t)$ coming from layer $\ell-1$ neurons and $dpot^u(t)$ coming from layer $\ell+1$ neurons.
We define 
\[upot^u(t) = \sum_{j=1}^n \ uweight^u_j(t-1) \ ux^u_j(t-1),\]
\[dpot^u(t) = \sum_{j=1}^n \ dweight^u_j(t-1) \ dx^u_j(t-1) \mbox{ and }\]
\[pot^u(t) = upot^u(t) + dpot^u(t).\]
The activation function is then defined by:
\[ firing^u(t) =  \begin{cases}
1 & \text{if $pot^u(t) \geq \tau$}, \\
0 & \text{otherwise}.
\end{cases}\]

For a neuron $u$ at level $\ell'_{max}$, the values of the state components of $u$ at time $t \geq 1$ are determined similarly, but using only $uweights$ and $ux$.

\subsubsection{Edge weight modifications}

We assume that the values of the $ugaged$ and $dgaged$ flags of $u$ are controlled externally; that is, for every $t$, the values of $ugaged^u(t)$ and $dgaged^u(t)$ are set by an external input signal.

For updating the weights, we will use two different rules, one for the $uweights$ and one for the $dweights$.
The $uweights$ are modified as before, using Oja's rule based on the previous $uweights$, the $upot$, and the firing pattern of the layer $\ell-1$ neurons.
Specifically, if $ugaged^u(t) = 1$, then
\begin{equation}
\label{eq: Oja2} 
   uweight^u(t) = uweight^u(t-1) + \eta \ (upot^u(t)) \ (ux^u(t-1) -   upot^u(t) \  uweight^u(t-1) ),
\end{equation}
where $\eta$ is the learning rate.

For the $dweights$, we will use a different Hebbian-style learning rule, which we describe in Section~\ref{sec: learning-feedback}.

\subsubsection{Network operation}

During operation, the network proceeds through a series of \emph{configurations}, each of which specifies a state for every neuron in the network.
As before, the configurations are determined by the initial network specification for time $t=0$, and the activation and learning functions.

\section{Problem Statements}
\label{sec: probstatement}

In this section, we define the problems we will consider in the rest of this paper---problems of concept recognition and concept learning.
Throughout this section, we use the notation for a concept hierarchy and a network that we defined in Sections~\ref{sec:datamodel} and~\ref{sec: networkmodel}. 
We assume that the concept hierarchy satisfies the limited-overlap property.
We consider both feed-forward networks and networks with feedback,
but the notation we specify here is common to both.

Thus, we consider a concept hierarchy $\mathcal C$, with
concept set $C$ and maximum level $\lmax$, partitioned into $C_0, C_1,
\ldots, C_{\lmax}$.  We use parameters $n$, $k$, $r_1$, $r_2$, $o$, and $f$,
according to the definitions for a concept hierarchy in Section~\ref{sec: concept-hierarchies}.
We also consider a network $\mathcal N$, with maximum layer $\ell'_{max}$, and parameters  $n$, $f$, $\tau$, and $\eta$ as in the definitions for a network in Section~\ref{sec: networkmodel}.
The maximum layer number $\ell'_{max}$ for $\mathcal N$ may be
different from the maximum level number $\lmax$ for $\mathcal C$, but
the number $n$ of input neurons is the same as the number of level $0$
items in $\mathcal C$, and the feedback strength $f$ is the same for both $\mathcal C$ and $\mathcal N$.

We begin with a definition describing how a particular set of level $0$ concepts is ``presented'' to the network.  This involves firing exactly the input neurons that represent these level $0$ concepts.

\begin{definition}
\label{def:presented}
{\bf Presented:}
If $B \subseteq D_0$ and $t$ is a non-negative integer, then we say
that $B$ is \emph{presented at time} $t$ (in some particular network
execution) exactly if the following holds.
For every layer $0$ neuron $u$:
\begin{enumerate}
    \item  If $u$ is of the form $rep(b)$ for some $b \in B$, then $u$ fires at time $t$.
    \item  If $u$ is not of this form, for any $b$, then $u$ does not fire at time $t$.
\end{enumerate}
\end{definition}

\subsection{Recognition Problems}
\label{sec:prob-recog}


Here we define what it means for network $\mathcal N$ to recognize
concept hierarchy $\mathcal C$.
We assume that every concept $c \in C$, at every level, has a unique
representing neuron $rep(c)$.\footnote{Real biological neuron networks, and artificial neural networks, would likely have more elaborate representations, but we think it is instructive to consider this simple case first.}
We have two definitions, for feed-forward networks and networks with feedback.


\subsubsection{Recognition in feed-forward networks}

The first definition assumes that $\mathcal N$ is a feed-forward network. 
In this definition, we specify not only which neurons fire, but also the precise time when they fire, which is just the time for firing to propagate to the neurons, step by step, through the network layers.

\begin{definition}
{\bf Recognition problem for feed-forward networks:}
\label{def: recog-ff}
Network $\mathcal N$ $(r_1,r_2)$-\emph{recognizes} $\mathcal C$
provided that, for each concept $c \in C$, there is a unique neuron
$rep(c)$ such that the following holds.
Assume that $B \subseteq C_0$ is presented at time $t$.  
Then:
\begin{enumerate}
\item
\emph{When $rep(c)$ must fire:}
If $c \in supp_{r_2}(B)$, then $rep(c)$ fires at time $t+layer(rep(c))$.
\item
\emph{When $rep(c)$ must not fire:}
If $c \notin supp_{r_1}(B)$, then $rep(c)$ does not fire at time $t + layer(rep(c))$.
\end{enumerate}
\end{definition}

The special case where $r_1 = r_2 = 1$ has a simple characterization:

\begin{lemma}
\label{lem: recog-special-case}
If network $\mathcal N$ $(1,1)$-recognizes $\mathcal C$, then for each concept $c \in C$, there is a unique neuron $rep(c)$ such that the following holds.
If $B \subseteq C_0$ is presented at time $t$, then $rep(c)$ fires at
time $t + layer(rep(c))$ if and only if $leaves(c) \subseteq B$.
\end{lemma}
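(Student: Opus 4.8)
The plan is to reduce the claimed ``if and only if'' to a purely combinatorial fact about $supp_1$, and then read off the firing behaviour from the $(1,1)$-recognition hypothesis. Take $rep(c)$ to be the unique neuron whose existence is guaranteed by Definition~\ref{def: recog-ff} applied with $r_1 = r_2 = 1$. Specializing that definition, for any presented $B \subseteq C_0$ we have: if $c \in supp_1(B)$ then $rep(c)$ fires at time $t + layer(rep(c))$ (clause~1, since $r_2 = 1$), and if $c \notin supp_1(B)$ then $rep(c)$ does not fire at that time (clause~2, since $r_1 = 1$). Hence $rep(c)$ fires at time $t + layer(rep(c))$ \emph{exactly when} $c \in supp_1(B)$. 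So it remains to show that, for $B \subseteq C_0$ and $c \in C$,
\[
c \in supp_1(B) \quad \Longleftrightarrow \quad leaves(c) \subseteq B ,
\]
after which the two equivalences compose to give the lemma; uniqueness of $rep(c)$ transfers automatically, since under this combinatorial equivalence clauses~1--2 with $r_1 = r_2 = 1$ are precisely the stated ``if and only if''.

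For the ``$\Leftarrow$'' direction I would show $leaves(c) \subseteq B \Rightarrow c \in supp_1(B)$. This is essentially Lemma~\ref{lem: total-support} together with monotonicity of $supp_1$ in its argument set (an easy induction on $\ell$ giving $B' \subseteq B \Rightarrow S(\ell)$ for $B'$ is contained in $S(\ell)$ for $B$); alternatively one can argue directly by induction on $level(c)$, using that each child $c'$ of $c$ satisfies $leaves(c') \subseteq leaves(c) \subseteq B$ and lies at level $level(c)-1$, so by induction all $k$ children lie in $S(level(c)-1)$, meeting the threshold $rk = k$ and placing $c$ in $S(level(c))$. Either way, $c \in supp_1(B)$, and the equivalence of the first paragraph then says $rep(c)$ fires at time $t + layer(rep(c))$.

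For the ``$\Rightarrow$'' direction I would prove $c \in supp_1(B) \Rightarrow leaves(c) \subseteq B$ by induction on $level(c)$. Base case $level(c) = 0$: since the $D_\ell$ are disjoint, a level-$0$ concept can only enter $supp_1(B)$ through $S(0) = B \cap C_0$, so $c \in B$ and $leaves(c) = \{c\} \subseteq B$. Inductive step $level(c) = \ell \geq 1$: then $c \in supp_1(B) \cap C_\ell = S(\ell)$, so $|children(c) \cap S(\ell-1)| \geq rk = k$; but $|children(c)| = k$ by Property~2, so in fact $children(c) \subseteq S(\ell-1) \subseteq supp_1(B)$, and the inductive hypothesis gives $leaves(c') \subseteq B$ for every child $c'$. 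Since $\ell \geq 1$ we have $leaves(c) = \bigcup_{c' \in children(c)} leaves(c')$, hence $leaves(c) \subseteq B$. Contrapositively, if $leaves(c) \not\subseteq B$ then $c \notin supp_1(B)$, so by clause~2 of Definition~\ref{def: recog-ff} the neuron $rep(c)$ does not fire at time $t + layer(rep(c))$, which closes the ``only if'' direction. I expect the main obstacle to be exactly this last induction: it is the single place where we use that every non-leaf concept has \emph{exactly} $k$ children together with the extreme threshold $r = 1$, so that ``enough children supported'' collapses to ``all children supported''; without this exactness a concept could be supported while missing some of its leaves. Note that overlap plays no role in any of this.
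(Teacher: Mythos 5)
Your proof is correct; the paper in fact states this lemma without proof, and your argument supplies exactly the intended justification: specializing Definition~\ref{def: recog-ff} to $r_1=r_2=1$ reduces everything to the combinatorial equivalence $c \in supp_1(B) \Leftrightarrow leaves(c) \subseteq B$, which your two inductions (using that the threshold $rk=k$ together with $|children(c)|=k$ forces \emph{all} children to be supported) establish. Nothing is missing.
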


\subsubsection{Recognition in networks with feedback}

The second definition assumes that $\mathcal N$ is a network with feedback.
For this, the timing is harder to pin down, so we formulate the definition a bit differently.
We assume here that the input is presented continually from some time $t$ onward, and we allow flexibility in when the $rep(c)$ neurons are required to fire.  

\begin{definition}
\label{def: recog-feedback}
{\bf Recognition problem for networks with feedback:}
Network $\mathcal N$ $(r_1,r_2,f)$-\emph{recognizes} $\mathcal C$
provided that, for each concept $c \in C$, there is a unique neuron
$rep(c)$ such that the following holds.
Assume that $B \subseteq C_0$ is presented at all times $\geq t$.
Then:
\begin{enumerate}
\item
\emph{When $rep(c)$ must fire:}
If $c \in supp_{r_2,f}(B)$, then $rep(c)$ fires at some time
after $t$. 
\item
\emph{When $rep(c)$ must not fire:}
If $c \notin supp_{r_1,f}(B)$, then $rep(c)$ does not fire at any
time after $t$.
\end{enumerate}
\end{definition}


\subsection{Learning Problems}
\label{sec:prob-learning}

In our learning problems, the same network ${\mathcal N}$ must be capable of learning any concept hierarchy $\mathcal C$.
The definitions are similar to those in~\cite{LM21}, but now we extend them to concept hierarchies with limited overlap. 

As before, we assume that the concepts are shown in a bottom-up manner, though interleaving is allowed for incomparable concepts.\footnote{We might also consider interleaved learning of higher-level concepts and their descendants.  The idea is that partial learning of a concept $c$ can be enough to make $rep(c)$ fire, which can help in learning parents of $c$. We mention this as future work, in Section~\ref{sec: conclusions}.}

\begin{definition}
{\bf Showing a concept:}
\label{def: shown}
Concept $c$  is \emph{shown} at time $t$ if the set $leaves(c)$ is presented at time $t$, that is, if for every input neuron $u$,
$u$ fires at time $t$ if and only if $u \in \{ rep(c') ~|~ c' \in leaves(c) \}$.
\end{definition}

\begin{definition}
{\bf Training schedule:}
\label{def: training-schedule}
A \emph{training schedule} for $\mathcal C$ is any finite sequence
$c_0,c_1,\ldots,c_m$ of concepts in $C$, possibly with repeats.
A training schedule is $\sigma$-\emph{bottom-up}, where $\sigma$ is a
positive integer, provided that the following conditions hold:
\begin{enumerate}
    \item Each concept in $C$ appears in the list at least $\sigma$ times.
    \item No concept in $C$ appears before each of its children has
      appeared at least $\sigma$ times.
      \end{enumerate}
\end{definition}

A training schedule $c_0, c_1,\ldots,c_m$ generates a
corresponding sequence $B_0,B_1,\ldots,B_m$ of sets of level $0$
concepts to be presented to the network in a learning algorithm. 
Namely, $B_i$ is defined to be $\{ rep(c') ~|~ c' \in leaves(c_i) \}$.

We have two definitions for learning, for networks with and without feedback.
The difference is just the type of recognition that is required to be achieved.  
Each definition makes sense with or without overlap.

\begin{definition}
{\bf Learning problem for feed-forward networks:}
Network $\mathcal N$ $(r_1,r_2)$-\emph{learns} concept hierarchy $\mathcal C$ \emph{with} $\sigma$ \emph{repeats} provided that the
following holds.
After a training phase in which all the concepts in $\mathcal C$ are shown to the network
according to a $\sigma$-bottom-up training schedule, $\mathcal N$
$(r_1,r_2)$-recognizes $\mathcal C$.
\end{definition}


\begin{definition}
{\bf Learning problem for networks with feedback:}
Network $\mathcal N$ $(r_1,r_2,f)$-\emph{learns} concept hierarchy
$\mathcal C$ \emph{with} $\sigma$ \emph{repeats} provided that the
following holds.
After a training phase in which all the concepts in $\mathcal C$ are shown to the network
according to a $\sigma$-bottom-up training schedule, $\mathcal N$
$(r_1,r_2,f)$-recognizes $\mathcal C$.
\end{definition}

\section{Recognition Algorithms for Feed-Forward Networks}
\label{sec: recognition-ff}

In this and the following section, we describe and analyze our algorithms for recognition of concept hierarchies (possibly with limited overlap);
we consider feed-forward networks in this section and introduce feedback edges in Section~\ref{sec: recognition-feedback}.
Throughout both sections, we consider an arbitrary concept hierarchy $\mathcal C$ with concept set $C$, partitioned into $C_0, C_1, \ldots, C_{\lmax}$.
We use the notation $n$, $k$, $r_1$, $r_2$, $o$, and $f$ as before.
We assume that $r_2 > 0$.

We begin in Section~\ref{sec: recognition-ff-exact} by defining a basic network, with weights in $\{0,1\}$, and proving that it $(r_1,r_2)$-recognizes $\mathcal C$.  
To prove this result, we use a new lemma that relates the firing behavior of the network precisely to the support definition, then obtain the main recognition theorem as a simple corollary.
In Section~\ref{sec: uncertain-weights1}, we extend the main result by allowing weights to be approximate, within an interval of uncertainty. 
In Section~\ref{sec: scaled}, we extend the results further by allowing scaling of weights and thresholds. 
In Appendix~\ref{sec: stochastic}, we discuss what happens in a different version of the model, where we replace thresholds by stochastic firing decisions.
%

\subsection{Basic Recognition Results}
\label{sec: recognition-ff-exact}

We define a feed-forward network $\mathcal N$ that is specially tailored to recognize concept hierarchy $\mathcal C$.
We assume that $\mathcal N$ has $\ell'_{max} = \lmax$ layers.
Since $\mathcal N$ is a feed-forward network, the edges all go upward, from neurons at any layer $\ell$ to neurons at level $\ell + 1$.
We assume the same value of $n$ as in the concept hierarchy $\mathcal C$.
The edge weights and the threshold $\tau$ are defined below.

The construction is similar to the corresponding construction in~\cite{LM21}.  The earlier paper considered only tree hierarchies; here, we generalize to allow limited overlap.

The strategy is simply to embed the digraph induced by $\mathcal C$ in the network $\mathcal N$.
For every level $\ell$ concept $c$ of $\cal{C}$, we assume a unique
representative $rep(c)$ in layer $\ell$ of the network.
Let $R$ be the set of all representatives, that is, $R = \{ rep(c)~|~c\in C\}$.
Let $rep^{-1}$ denote the corresponding inverse
function that gives, for every $u \in R$, the corresponding concept $c\in C$
with $rep(c) = u$.

We define the weights of the edges as follows.
If $u$ is any layer $\ell$ neuron, $0 \leq \ell \leq \lmax - 1$, and $v$ is any layer $\ell+1$ neuron, then we define the edge weight $weight(u,v)$ to be\footnote{Note that these simple weights of $1$ do not correspond precisely to what is achieved by the noise-free learning algorithm in~\cite{LM21}.
There, learning approaches the following weights, in the limit:
\[ weight(u,v) = 
\begin{cases}
\frac{1}{\sqrt{k}} & \text{ if } u,v \in R \text{ and } rep^{-1}(u) \in children(rep^{-1}(v)),
\\ 
0 & \text{ otherwise,}
\end{cases} \]
and the threshold $\tau$ is equal to $\frac{(r_1  + r_2) \sqrt{k}}{2} $.
We prove in~\cite{LM21} that, after a certain number of steps of learning, the weights are sufficiently close to these limits to guarantee that network $\mathcal N$ $(r_1,r_2)$-recognizes $\mathcal C$.}:
\[ weight(u,v) = 
\begin{cases}
1 & \text{ if } u,v \in R \text{ and } rep^{-1}(u) \in children(rep^{-1}(v)),
\\ 
0 & \text{ otherwise.}
\end{cases}
\]

We would like the threshold $\tau$ for every non-input neuron to be a real value in the closed interval $[r_1 k, r_2 k]$; to be specific, we use $\tau
= \frac{(r_1  + r_2) k}{2} $.
Since $r_2 > 0$, we know that $\tau > 0$.
Finally, we assume that the initial firing status for all non-input neurons is $0$.
This completely defines network $\mathcal N$, and determines its behavior.

The network has been designed in such a way that its behavior directly mirrors the $supp_r$ definition, where $r = \frac{\tau}{k}$.  We capture this precisely in the following two lemmas.  The first says that, when a subset of $C_0$ is presented, only $reps$ of concepts in $C$ fire at their designated times.

\begin{lemma}
\label{lem: no-rep-no-fire}
Assume $\mathcal C$ is any concept hierarchy satisfying the limited-overlap property, and $\mathcal N$ is the feed-forward network defined above, based on $\mathcal C$.  
Assume that $B \subseteq C_0$ is presented at time $t$. 
If $u$ is a neuron that fires at time $t+ layer(u)$, then $u \in R$, that is, $u = rep(c)$ for some concept $c \in C$.  
\end{lemma}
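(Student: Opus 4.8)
The plan is to proceed by induction on $layer(u)$, showing that for every layer $\ell$ and every neuron $u \in N_\ell$, if $u$ fires at time $t + \ell$ then $u \in R$. The base case $\ell = 0$ is immediate from the hypothesis that $B \subseteq C_0$ is presented at time $t$: the only layer $0$ neurons that fire at time $t$ are those of the form $rep(b)$ with $b \in B \subseteq C_0 \subseteq C$, so they all lie in $R$. (This uses that $B$ is a subset of $C_0$, not merely of $D_0$, which is precisely what Definition~\ref{def:presented} together with the hypothesis gives us here.)

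For the inductive step, suppose the claim holds at layer $\ell$, and let $u \in N_{\ell+1}$ fire at time $t + \ell + 1$. Its potential at that time is $pot^u(t+\ell+1) = \sum_j weight^u_j \cdot x^u_j(t+\ell)$, where $x^u(t+\ell)$ is the firing vector of layer $\ell$ neurons at time $t+\ell$. By the inductive hypothesis, every layer $\ell$ neuron that contributes a nonzero term to this sum lies in $R$. But by the definition of the edge weights, $weight(v,u)$ can only be nonzero if $u \in R$ as well (the weight is $1$ only when \emph{both} endpoints are in $R$ and the concepts are in a child/parent relation, and $0$ otherwise). Hence if $u \notin R$, then every edge into $u$ has weight $0$, so $pot^u(t+\ell+1) = 0 < \tau$ (using $\tau > 0$, which holds since $r_2 > 0$), and therefore $u$ does not fire at time $t+\ell+1$ — contradiction. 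So $u \in R$, completing the induction.

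I do not anticipate a serious obstacle here; the argument is a routine propagation-of-structure induction. The one point that needs care is bookkeeping about \emph{when} firing propagates: a layer $\ell$ neuron's firing at time exactly $t+\ell$ is what feeds into a layer $\ell+1$ neuron's potential at time exactly $t+\ell+1$, matching the $layer(\cdot)$ offsets in the statement, and we must be sure the induction is stated with this offset built in (which is why I index the induction by layer with the firing time $t + \ell$ rather than by an abstract time). A secondary subtlety is that the lemma only constrains firing at the designated time $t + layer(u)$; it says nothing about other times, so no claim about the absence of spurious firing at later times is needed, and the proof should not attempt one.
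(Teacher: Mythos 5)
Your proof is correct, but it is structured differently from the paper's. The paper does not use induction at all: it argues the contrapositive directly in one step. If $u \notin R$ and $layer(u) \geq 1$, then by the definition of the weights \emph{every} incoming edge to $u$ has weight $0$ (since weight $1$ requires both endpoints to be in $R$), so $pot^u(t+layer(u)) = 0 < \tau$ regardless of which layer-$(layer(u)-1)$ neurons fire, and $u$ cannot fire. Your own inductive step actually contains this observation — you note that $weight(v,u)$ is nonzero only if $u \in R$ — and as a result your inductive hypothesis (that the firing in-neighbors all lie in $R$) is never actually used in deriving the contradiction; the induction is harmless but redundant here. That said, your layered induction is exactly the structure the paper does need for the approximate-weights variant (Lemma~\ref{lem: no-rep-no-fire-uncertain}), where non-$R$ neurons carry small positive weights and one must bound the \emph{number} of firing in-neighbors via the inductive hypothesis. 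So your argument is, in effect, the more general template specialized to the easy case, while the paper's is the minimal argument for the exact-weight setting. Your bookkeeping of the time offsets ($t+\ell$ feeding into $t+\ell+1$) is correct and matches the model.
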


\begin{proof}
If $layer(u) = 0$, then $u$ fires at time $t$ exactly if $u = rep(c)$ for some $c \in B$, by assumption.
So consider $u$ with $layer(u) \geq 1$.
We show the contrapositive.
Assume that $u \notin R$.
Then $u$ has no positive weight incoming edges, by definition of the weights.
So $u$ cannot receive enough incoming potential for time $t+layer(u)$ to meet the positive firing threshold $\tau$.
\end{proof}

The second lemma says that the $rep$ of a concept $c$ fires at time $t + level(c)$ if and only if $c$ is supported by $B$.

\begin{lemma}
\label{lem: fires-supported-ff}
Assume $\mathcal C$ is any concept hierarchy satisfying the limited-overlap property, and $\mathcal N$ is the feed-forward network defined above, based on $\mathcal C$.  
Let $r = \frac{\tau}{k}$, where $\tau$ is the firing threshold for the non-input neurons of $\mathcal N$.

Assume that $B \subseteq C_0$ is presented at time $t$.  
If $c$ is any concept in $C$, then $rep(c)$ fires at time $t+level(c) (= t + layer(rep(c))$ if and only if $c \in supp_r(B)$.  
\end{lemma}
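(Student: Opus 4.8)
The natural approach is induction on $level(c)$, matching the recursive structure of Definition~\ref{def: supp1}. The claim to prove, simultaneously for all concepts at a given level, is: $rep(c)$ fires at time $t + level(c)$ iff $c \in supp_r(B)$. I would actually prove something slightly sharper that makes the induction go through cleanly, namely: for each $\ell$ and each neuron $u$ in layer $\ell$, $u$ fires at time $t+\ell$ iff $u = rep(c)$ for some $c \in S(\ell)$ (where $S(\ell) = supp_r(B,\ell)$). By Lemma~\ref{lem: no-rep-no-fire} we already know that only neurons in $R$ can fire at their designated times, so the content is: among the $rep(c)$ with $level(c) = \ell$, exactly those with $c \in S(\ell)$ fire at time $t+\ell$.

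\emph{Base case $\ell = 0$.} By Definition~\ref{def:presented}, the layer $0$ neurons firing at time $t$ are exactly $\{rep(b) : b \in B\}$, and among these the ones that are $rep$'s of concepts in $C$ are exactly $\{rep(c) : c \in B \cap C_0\} = \{rep(c) : c \in S(0)\}$, since $S(0) = B \cap C_0$ by definition.

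\emph{Inductive step.} Assume the claim for layer $\ell - 1$; prove it for layer $\ell$, $1 \le \ell \le \lmax$. Fix a neuron $u$ in layer $\ell$; it fires at time $t + \ell$ iff $pot^u(t+\ell) \geq \tau$. By Lemma~\ref{lem: no-rep-no-fire} we may assume $u = rep(c)$ for some $c \in C_\ell$ (otherwise $u$ does not fire and $u$ is not a $rep$ of anything in $S(\ell)$, consistent with the claim). The potential $pot^u(t+\ell) = \sum_j weight^u_j(t+\ell-1)\, x^u_j(t+\ell-1)$. By the weight definition, the only incoming edges to $u = rep(c)$ with weight $1$ are those from $rep(c')$ with $c' \in children(c)$, and all other incoming edges have weight $0$; hence $pot^u(t+\ell)$ equals the number of neighbors $rep(c')$, $c' \in children(c)$, that fire at time $t + \ell - 1 = t + (\ell-1)$. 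By the inductive hypothesis (applied at layer $\ell-1$), this is exactly $|children(c) \cap S(\ell-1)|$. Therefore $u$ fires at time $t+\ell$ iff $|children(c) \cap S(\ell-1)| \geq \tau = r k$, which by Definition~\ref{def: supp1} is exactly the condition $c \in S(\ell)$. This closes the induction, and taking the union over all $\ell$ gives the statement with $supp_r(B)$.

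\emph{Main obstacle.} There is no deep obstacle here; the proof is essentially bookkeeping. The one point requiring a little care is the bridge between ``$rep(c')$ fires'' in the network and ``$c' \in S(\ell-1)$'': a priori some non-$rep$ neuron at layer $\ell-1$ might fire at time $t+\ell-1$ and feed potential into $u$, but this is exactly ruled out because $u$'s incoming edges from non-$R$ neurons have weight $0$, so such firings contribute nothing to $pot^u$; combined with Lemma~\ref{lem: no-rep-no-fire} the accounting is exact. The other thing to keep straight is that the limited-overlap hypothesis plays no role in this particular lemma beyond being carried along — overlap affects which concepts end up supported, not the mechanical correspondence between firing and the $S(\ell)$ recursion — so I would simply note that the argument is identical to the tree case of~\cite{LM21}.
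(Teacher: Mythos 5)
Your proof is correct and follows essentially the same route as the paper's: induction on the level, with the observation that $pot^{rep(c)}(t+\ell)$ counts exactly the firing $reps$ of children of $c$ (all other incoming weights being $0$), so firing at the designated time is equivalent to $|children(c)\cap S(\ell-1)|\geq rk$. The only difference is organizational --- you run one induction establishing the biconditional at each layer, while the paper runs two separate inductions for the two implications --- which does not change the substance of the argument.
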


\begin{proof}
We prove the two directions separately.
\begin{enumerate}
\item If $c \in supp_{r}(B)$ then $rep(c)$ fires at time $t + level(c)$.

We prove this using induction on $level(c)$.
For the base case, $level(c) = 0$, the assumption that $c \in supp_r(B)$ means that $c \in B$, which means that $rep(c)$ fires at time $t$, by the assumption that $B$ is presented at time $t$.

For the inductive step, assume that $level(c) = \ell+1$.
Assume that $c \in supp_{r}(B)$.
Then by definition of $supp_r$, $c$ must have at least $r k$ children that are in $supp_{r}(B)$.  By inductive hypothesis, the $reps$ of all of these children fire at time $t + \ell$.
That means that the total incoming potential to $rep(c)$ for time $t+\ell+1$, $pot^{rep(c)}(t+\ell+1)$, reaches the firing threshold $\tau = r k$, so $rep(c)$ fires at time $t + \ell+1$.

\item 
If $c \notin supp_r(B)$ then $rep(c)$ does not fire at time $t+level(c)$.

Again, we use induction on $level(c)$.
For the base case, $level(c) = 0$, the assumption that $c \notin supp_r(B)$ means that $c \notin B$, which means that $rep(c)$ does not fire at time $t$ by the assumption that $B$ is presented at time $t$.

For the inductive step, assume that $level(c) = \ell+1$.
Assume that $c \notin supp_r(B)$.
Then $c$ has strictly fewer than $r k$ children that are in $supp_{r}(B)$,
and therefore, strictly more than $k - r k$ children that are not in 
$supp_{r}(B)$.
By inductive hypothesis, none of the $reps$ of the children in this latter set fire at time $t + \ell$, which means that the $reps$ of strictly fewer than $r k$ children of $c$ fire at time $t+\ell$.
So the total incoming potential to $rep(c)$ from $reps$ of $c$'s children is strictly less than $rk$.
Since only $reps$ of children of $c$ have positive-weight edges to $rep(c)$, that means that the total incoming potential to $rep(c)$ for time $t+\ell+1$, $pot^{rep(c)}(t+\ell+1)$, is strictly less than the threshold $\tau = r k$ for $rep(c)$ to fire at time $t + \ell+1$.
So $rep(c)$ does not fire at time $t+\ell+1$.
\end{enumerate}
\end{proof}

Using Lemma~\ref{lem: fires-supported-ff}, the basic recognition theorem follows easily:

\begin{theorem}
\label{th: recog-ff}
Assume $\mathcal C$ is any concept hierarchy satisfying the limited-overlap property, and $\mathcal N$ is the feed-forward network defined above, based on $\mathcal C$.  
Then ${\cal N}$ $(r_1,r_2)$-\emph{recognizes} $\cal{C}$.
\end{theorem}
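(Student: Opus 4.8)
The plan is to derive Theorem~\ref{th: recog-ff} directly from Lemma~\ref{lem: fires-supported-ff} together with the monotonicity Lemma~\ref{lem: support-monotonic}, using the fact that the threshold $\tau = \frac{(r_1+r_2)k}{2}$ was chosen precisely so that $r := \frac{\tau}{k} = \frac{r_1+r_2}{2}$ lies in the interval $[r_1, r_2]$. First I would unwind Definition~\ref{def: recog-ff}: we must exhibit, for each concept $c \in C$, a neuron $rep(c)$ such that, whenever $B \subseteq C_0$ is presented at time $t$, (1) $c \in supp_{r_2}(B)$ forces $rep(c)$ to fire at time $t + layer(rep(c))$, and (2) $c \notin supp_{r_1}(B)$ forces $rep(c)$ not to fire at that time. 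The neuron $rep(c)$ is the one fixed in the network construction, and $layer(rep(c)) = level(c)$ by that construction.

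For part (1), suppose $c \in supp_{r_2}(B)$. Since $r = \frac{r_1+r_2}{2} \leq r_2$, Lemma~\ref{lem: support-monotonic} gives $supp_{r_2}(B) \subseteq supp_r(B)$, so $c \in supp_r(B)$. Then Lemma~\ref{lem: fires-supported-ff}, applied with this value of $r$, yields that $rep(c)$ fires at time $t + level(c) = t + layer(rep(c))$, as required. For part (2), suppose $c \notin supp_{r_1}(B)$. Since $r_1 \leq r = \frac{r_1+r_2}{2}$, Lemma~\ref{lem: support-monotonic} gives $supp_r(B) \subseteq supp_{r_1}(B)$; taking contrapositives of set inclusion, $c \notin supp_{r_1}(B)$ implies $c \notin supp_r(B)$. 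Again by Lemma~\ref{lem: fires-supported-ff}, $rep(c)$ does not fire at time $t + level(c) = t + layer(rep(c))$. This establishes both clauses of Definition~\ref{def: recog-ff}, so $\mathcal N$ $(r_1,r_2)$-recognizes $\mathcal C$.

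There is essentially no hard obstacle here: all the real content has been pushed into Lemma~\ref{lem: fires-supported-ff}, whose inductive proof does the work of tracking firing times layer by layer, and into the choice of $\tau$. The only things to be careful about are the bookkeeping identifications $layer(rep(c)) = level(c)$ (immediate from the construction, which places $rep(c)$ in layer $level(c)$) and the direction of the monotonicity inclusions — one needs $r$ sandwiched between $r_1$ and $r_2$, which is exactly why the midpoint threshold was chosen. One might also remark that Lemma~\ref{lem: no-rep-no-fire} is not strictly needed for this theorem (it concerns non-$rep$ neurons, which play no role in Definition~\ref{def: recog-ff}), though it is reassuring for the broader picture. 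So the proof is a short two-step argument: apply monotonicity to move between $r_1, r_2$ and the midpoint $r$, then invoke the firing characterization in each direction.
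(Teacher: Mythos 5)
Your proposal is correct and follows essentially the same route as the paper's proof: set $r = \tau/k$ (the midpoint of $[r_1,r_2]$), use Lemma~\ref{lem: support-monotonic} to pass between $supp_{r_2}$, $supp_r$, and $supp_{r_1}$, and then invoke Lemma~\ref{lem: fires-supported-ff} in each direction. Nothing is missing.
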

Recall that the definition of recognition, Definition~\ref{def: recog-ff}, gives a firing requirement for each individual concept $c$ in the hierarchy.  For a concept $c$, the definition specifies that neuron $rep(c)$ fires at time $t+layer(rep(c)) = t + level(c)$, where $t$ is the time at which the input is presented. 

\begin{proof}
Let $r = \frac{\tau}{k}$, where $\tau$ is the firing threshold for the non-input neurons of $\mathcal N$.
Assume that $B \subseteq C_0$ is presented at time $t$.  We prove the two parts of Definition~\ref{def: recog-ff} separately.

\begin{enumerate}
\item If $c \in supp_{r_2}(B)$ then $rep(c)$ fires at time $t + level(c)$.

Suppose that $c \in supp_{r_2}(B)$.  
By assumption, $\tau \leq r_2 k$, so that $r = \frac{\tau}{k} \leq r_2$.
Then Lemma~\ref{lem: support-monotonic} implies that $c \in supp_{r}(B)$.
Then by Lemma~\ref{lem: fires-supported-ff}, $rep(c)$ fires at time $t + level(c)$.

\item If $c \notin supp_{r_1}(B)$ then $rep(c)$ does not fire at time $t + level(c)$.

Suppose that $c \notin supp_{r_1}(B)$.  
By assumption, $\tau \geq r_1 k$, so that $r = \frac{\tau}{k} \geq r_1$.
Then Lemma~\ref{lem: support-monotonic} implies that $c \notin supp_{r}(B)$.
Then by Lemma~\ref{lem: fires-supported-ff}, $rep(c)$ does not fire at time $t + level(c)$.
\end{enumerate}
\end{proof}

\subsection{An Issue Involving Overlap}

A new issue arises as a result of allowing overlap:
Consider two concepts $c$ and $c'$, with $level(c) = level(c')$.
Is it possible that showing concept $c'$ can cause $rep(c)$ to fire?
Specifically, suppose that concept $c'$ is shown at some time $t$, according to Definition~\ref{def: shown}.
That is, the set $leaves(c')$ is presented at time $t$.
Can this cause firing of $rep(c)$ at the designated time $t+level(c)$?
We obtain the following negative result.  For this, we assume that the amount of overlap is smaller than the lower bound for recognition.

\begin{theorem}
\label{th: overlap-ff}
Assume $\mathcal C$ is any hierarchy satisfying the limited-overlap property, and $\mathcal N$ is the feed-forward network defined above, based on $\mathcal C$.
Suppose that $o < r_1$.

Let $c,c'$ be two distinct concepts with $level(c') = level(c).$
Suppose that $c'$ is shown at time $t$.
Then $rep(c)$ does not fire at time $t + level(c)$.
\end{theorem}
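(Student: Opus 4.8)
The plan is to reduce the statement, via Lemma~\ref{lem: fires-supported-ff}, to a purely combinatorial fact about the concept hierarchy. Write $\ell = level(c) = level(c')$ and set $B = leaves(c')$, so that ``$c'$ is shown at time $t$'' means exactly that $B$ is presented at time $t$. By Lemma~\ref{lem: fires-supported-ff}, $rep(c)$ fires at time $t + level(c)$ if and only if $c \in supp_r(B)$ with $r = \tau/k$. Since $\tau \geq r_1 k$ we have $r \geq r_1$, so Lemma~\ref{lem: support-monotonic} gives $supp_r(B) \subseteq supp_{r_1}(B)$. Hence it suffices to show the combinatorial claim: if $c \neq c'$ and $level(c) = level(c') = \ell$, then $c \notin supp_{r_1}(leaves(c'))$. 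The case $\ell = 0$ is immediate, since then $leaves(c') = \{c'\}$ and $supp_{r_1}(\{c'\},0) = \{c'\} \not\ni c$; so from now on assume $\ell \geq 1$.

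The core of the proof is a descendant-containment claim, proved by induction on $\ell'$ from $0$ up to $\ell-1$: for every $\ell'$ with $0 \leq \ell' \leq \ell-1$, the set $supp_{r_1}(B,\ell')$ is contained in the set of level-$\ell'$ descendants of $c'$. The base case $\ell'=0$ holds because $supp_{r_1}(B,0) = B = leaves(c')$, which is exactly the set of level-$0$ descendants of $c'$. For the inductive step, take $d \in supp_{r_1}(B,\ell')$ with $1 \leq \ell' \leq \ell-1$. By definition $d$ has at least $r_1 k$ children in $supp_{r_1}(B,\ell'-1)$, and by the inductive hypothesis these all lie among the level-$(\ell'-1)$ descendants of $c'$. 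Since $\ell'-1 < \ell = level(c')$, that set is exactly $\bigcup_e children(e)$, the union taken over all level-$\ell'$ descendants $e$ of $c'$. Now suppose, for contradiction, that $d$ is not a level-$\ell'$ descendant of $c'$. Then every such $e$ is distinct from $d$, so $\bigcup_e children(e) \subseteq \bigcup_{e' \in C_{\ell'} - \{d\}} children(e')$, and the limited-overlap property (applied to $d$ at level $\ell'$) bounds $|children(d) \cap \bigcup_e children(e)|$ by $o k < r_1 k$, contradicting the fact that $d$ has at least $r_1 k$ children in that union. Hence $d$ is a level-$\ell'$ descendant of $c'$, completing the induction.

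To finish, apply the claim at level $\ell-1$: $supp_{r_1}(B,\ell-1)$ is contained in the set of level-$(\ell-1)$ descendants of $c'$, which equals $children(c')$ because $level(c') = \ell$. Therefore the number of children of $c$ lying in $supp_{r_1}(B,\ell-1)$ is at most $|children(c) \cap children(c')|$, and since $c \neq c'$ are both at level $\ell$, the limited-overlap property yields $|children(c) \cap children(c')| \leq o k < r_1 k$. So $c$ has strictly fewer than $r_1 k$ children in $supp_{r_1}(B,\ell-1)$, hence $c \notin supp_{r_1}(B,\ell)$ and in particular $c \notin supp_{r_1}(B)$, which is what we needed.

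I expect the only delicate point to be the inductive step of the descendant-containment claim: one must verify that the children that a non-descendant $d$ ``borrows'' all belong to concepts distinct from $d$, so that the limited-overlap bound genuinely applies and the hypothesis $o < r_1$ can be invoked to close the gap. The rest — the reduction through Lemmas~\ref{lem: fires-supported-ff} and~\ref{lem: support-monotonic}, the base cases, and the final counting step — is routine bookkeeping.
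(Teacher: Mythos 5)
Your proof is correct, but it is organized differently from the paper's. The paper never invokes Lemma~\ref{lem: fires-supported-ff} here; instead it argues directly about firing, proving by induction on $level(d)$ the claim that for every descendant $d$ of $c$ that is \emph{not} a descendant of $c'$, the neuron $rep(d)$ does not fire at its designated time (the limited-overlap property bounds by $o\cdot k < r_1 k$ the number of children of $d$ that are descendants of $c'$, and the inductive hypothesis silences the rest). You instead factor the whole argument through the firing-iff-support correspondence of Lemma~\ref{lem: fires-supported-ff} together with monotonicity, reducing to the purely combinatorial statement $c \notin supp_{r_1}(leaves(c'))$, and you prove the dual containment: every concept in $supp_{r_1}(leaves(c'),\ell')$ for $\ell' < \ell$ is a level-$\ell'$ descendant of $c'$. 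Both arguments hinge on exactly the same application of limited overlap (a non-descendant of $c'$ can ``borrow'' at most $o\cdot k < r_1 k$ active children), so the mathematical core is identical. What your route buys is reuse: the network-level reasoning is done once in Lemma~\ref{lem: fires-supported-ff}, and the new content is a clean statement about the hierarchy alone, which is in fact slightly stronger than the theorem (it characterizes all supported concepts below level $\ell$, not just the fate of $c$). What the paper's route buys is that its induction is confined to descendants of $c$, which is the form that generalizes most directly to the remark following the theorem about showing several concepts simultaneously. The one delicate point you flagged --- that the borrowed children of a non-descendant $d$ all come from concepts distinct from $d$ at the same level, so the overlap bound genuinely applies --- is handled correctly in your inductive step.
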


\begin{proof}
Fix $c,c'$ as above, and assume that $c'$ is shown at time $t$.

\noindent
\emph{Claim:}
For any descendant $d$ of $c$ that is not also a descendant of $c'$, $rep(d)$ does not fire at time $t + level(d)$.

\noindent\emph{Proof of Claim:}
By induction on $level(d)$.
For the base case, $level(d) = 0$.  We know that $rep(d)$ does not fire at time $t$ because only $reps$ of descendants of $c'$ fire at time $t$.

For the inductive step, assume that $d$ is a descendant of $c$ that is not also a descendant of $c'$.
By the limited-overlap assumption, $d$ has at most $o \cdot k < r_1 k$ children that are also descendants of $c'$.
%
By inductive hypothesis, the $reps$ of all the other children of $d$ do not fire at time $t + level(d) - 1$.   So the number of children of $d$ whose $reps$ fire at time $t+level(d)-1$ is strictly less than $r_1 k$.
That is not enough to meet the firing threshold $\tau \geq r_1 k$ for $rep(d)$ to fire at time $t + level(d)$. \\
\emph{End of proof of Claim.}

Applying the Claim with $d = c$ yields that $rep(c)$ does not fire at time $t + level(c)$.
\end{proof}

{Note that Theorem~\ref{th: overlap-ff} can be generalized to the situation where any set of concepts at $level(c)$, not including $c$ itself, are shown simultaneously.}

We do not have a result analogous to Theorem~\ref{th: overlap-ff} for recognition in networks with feedback (as in Section~\ref{sec: recognition-feedback}).  The situation in that case is more complicated, and we leave that for future work.

\subsection{Approximate Weights} 
\label{sec: uncertain-weights1}

So far in this section, we have been considering a simple set of weights, for a network representing a particular concept hierarchy $\mathcal C$:
\[ weight(u,v) = 
\begin{cases}
1 & \text{ if } u,v \in R \text{ and } rep^{-1}(u) \in children(rep^{-1}(v)),
\\ 
0 & \text{ otherwise.}
\end{cases}
\]

Now we generalize by allowing the weights to be specified only approximately, within some interval.
This is useful, for example, when the weights result from a noisy learning process.
Here, we assume $0 \leq w_1 \leq w_2$, and allow $b$ to be any positive integer.

\[ weight(u,v) \in 
\begin{cases}
[w_1,w_2], & \text{ if } u,v \in R \text{ and } rep^{-1}(u) \in children(rep^{-1}(v)),
\\ 
[0,\frac{1}{k^{\lmax+b}}]  & \text{ otherwise.}
\end{cases}
\]
Again, we set threshold $\tau = \frac{(r_1 + r_2) k}{2}$.  And we add the (extremely trivial) assumption that $\tau > 1/k^{b-1}$.
For this case, we prove the following recognition result.  It relies on two inequalities involving the recognition bounds and the weight bounds.

\begin{theorem}
\label{th: overlap-ff-uncertain weights}
Assume $\mathcal C$ is any concept hierarchy satisfying the limited-overlap property, and $\mathcal N$ is the feed-forward network defined above, based on $\mathcal C$.
Assume that $\frac{(r_1 + r_2)k}{2} > \frac{1}{k^{b-1}}$.
Suppose that $r_1$ and $r_2$ satisfy the following inequalities:
\begin{enumerate}
    \item $r_2 (2 w_1 - 1) \geq r_1$.
    \item $r_2 \geq r_1(2w_2 - 1) + \frac{2}{k^b}$.
\end{enumerate}
Then ${\cal N}$ $(r_1,r_2)$-\emph{recognizes} $\mathcal C$.
\end{theorem}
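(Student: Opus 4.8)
The plan is to mirror the proof of Lemma~\ref{lem: fires-supported-ff}, but with the exact count $rk$ of firing children replaced by an interval of possible potential values, the width of the interval being controlled by the accumulated error from the $[0,\frac{1}{k^{\lmax+b}}]$ ``spurious'' weights. Fix $B \subseteq C_0$ presented at time $t$, and set $r_1, r_2$ as the recognition parameters with $\tau = \frac{(r_1+r_2)k}{2}$. I will prove by induction on $level(c)$ a two-sided statement of the form: (a) if $c \in supp_{r_2}(B)$ then $rep(c)$ fires at time $t+level(c)$; and (b) if $c \notin supp_{r_1}(B)$ then $rep(c)$ does not fire at time $t+level(c)$. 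Simultaneously I need to track that no neuron outside $R$ accumulates enough potential to fire at its designated time — this is the role of the trivial hypothesis $\tau > 1/k^{b-1}$, since a non-$R$ neuron at layer $\ell$ receives potential at most $n \cdot \frac{1}{k^{\lmax+b}} \le \frac{1}{k^{b-1}}$ (using $n = |C_0| \le k^{\lmax}$, or more carefully bounding the number of firing predecessors), so it stays below threshold. Having disposed of non-$R$ neurons, every firing predecessor of $rep(c)$ that carries a ``real'' weight is the $rep$ of a child of $c$, and the inductive hypothesis applies to it.

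The core of the argument is the potential estimate for $u = rep(c)$ at time $t + level(c)$. Write $F$ for the set of children $d$ of $c$ with $rep(d)$ firing at time $t + level(c) - 1$. Then
\[
pot^{u} \;=\; \sum_{d \in F} weight(rep(d), u) \;+\; (\text{contribution from non-children, all with weight} \le \tfrac{1}{k^{\lmax+b}}).
\]
The ``real'' part lies in $[w_1 |F|,\, w_2 |F|]$, and the error part is at most $n \cdot \frac{1}{k^{\lmax+b}} \le \frac{1}{k^b}$ (bounding $n \le k^{\lmax}$). For part (a): if $c \in supp_{r_2}(B)$ then, by monotonicity (Lemma~\ref{lem: support-monotonic}) and the inductive hypothesis for (a) applied to children, $|F| \ge$ (number of children in $supp_{r_2}(B)$) $\ge r_2 k$, so $pot^u \ge w_1 r_2 k$; I need $w_1 r_2 k \ge \tau = \frac{(r_1+r_2)k}{2}$, i.e. $2 w_1 r_2 \ge r_1 + r_2$, i.e. $r_2(2w_1 - 1) \ge r_1$, which is exactly hypothesis~1. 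For part (b): if $c \notin supp_{r_1}(B)$, then the number of children in $supp_{r_1}(B)$ is $< r_1 k$; by the contrapositive-style inductive hypothesis for (b), every child firing at time $t+level(c)-1$ is in $supp_{r_1}(B)$ (here is the subtlety — I should run the induction so that (b) gives ``$rep(d)$ does not fire $\Rightarrow d \notin supp_{r_1}(B)$'' in the contrapositive, matching Lemma~\ref{lem: fires-supported-ff}'s structure), hence $|F| < r_1 k$, so $|F| \le r_1 k$ as a real bound and actually $|F| \le \lceil r_1 k\rceil - 1$; taking the clean bound $|F| \le r_1 k$ (or strictly below), $pot^u \le w_2 r_1 k + \frac{1}{k^b} \cdot 2$ — I budget $\frac{2}{k^b}$ to absorb both the error term and the slack from ``strictly less than.'' I need this $< \tau = \frac{(r_1+r_2)k}{2}$, i.e. $w_2 r_1 k + \frac{2}{k^b} \le \frac{(r_1+r_2)k}{2}$, i.e. $2 w_2 r_1 + \frac{4}{k^{b+1}} \le r_1 + r_2$; dividing the stated hypothesis~2, $r_2 \ge r_1(2w_2 - 1) + \frac{2}{k^b}$, by rearranging gives $r_1 + r_2 \ge 2 r_1 w_2 + \frac{2}{k^b}$, which dominates what is needed (the $\frac{2}{k^b}$ there is generous relative to $\frac{4}{k^{b+1}}$). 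Finally, the base case $level(c) = 0$ is immediate from the definition of ``presented'': $rep(c)$ fires at time $t$ iff $c \in B = supp_{r}(B,0)$ for any $r$.

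The main obstacle — and the only place requiring care rather than routine arithmetic — is the bookkeeping of the additive error across layers. A naive bound would let the spurious contributions compound multiplicatively as one goes up the hierarchy (a non-$R$ neuron could fire, then feed into the next layer, etc.), which is why the weights for non-edges are set as small as $\frac{1}{k^{\lmax+b}}$ rather than merely $\frac{1}{k^{b}}$: the point of the $\lmax$ in the exponent is that even after summing over all $n \le k^{\lmax}$ possible predecessors, the total stays $\le \frac{1}{k^b}$, and — crucially — Lemma~\ref{lem: no-rep-no-fire}'s analogue (using $\tau > \frac{1}{k^{b-1}}$) guarantees non-$R$ neurons simply never fire, so there is no compounding at all: at every layer, only genuine $rep$s of children contribute a ``real'' weight, and the error term is a single clean $\le \frac{1}{k^b}$ coming from the at most $n$ real firing children times their ``otherwise''-case... no: the firing children are in $R$ so carry weights in $[w_1,w_2]$; the error comes from the $rep$s of non-children that happen to fire, of which there are at most $n$, each contributing $\le \frac{1}{k^{\lmax+b}}$. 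Thus the error is genuinely additive and bounded by $\frac{1}{k^b}$ once and for all, and hypotheses 1 and 2 are then exactly the two inequalities that make the induction close. I would present it as: first the non-$R$ non-firing lemma, then the two-part induction, then conclude recognition by invoking Definition~\ref{def: recog-ff}.
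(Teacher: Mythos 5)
Your proposal is correct and follows essentially the same route as the paper: a preliminary lemma (using $\tau > 1/k^{b-1}$) showing non-$rep$ neurons never fire at their designated times, then a two-part induction on $level(c)$ in which hypothesis~1 closes the ``must fire'' direction via $pot \geq r_2 k w_1 \geq \tau$ and hypothesis~2 closes the ``must not fire'' direction via $pot < r_1 k w_2 + (\text{spurious term}) \leq \tau$. The only blemish is your error bookkeeping: the spurious contribution should be bounded by counting firing $rep$s of concepts (at most $k^{\lmax+1}$, giving $\frac{1}{k^{b-1}}$, as the paper does) rather than via the unjustified claim $n \leq k^{\lmax}$; this does not matter, since hypothesis~2 provides exactly $\frac{k}{2}\cdot\frac{2}{k^b} = \frac{1}{k^{b-1}}$ of headroom and the induction still closes.
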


To prove Theorem~\ref{th: overlap-ff-uncertain weights}, we follow the general pattern of the proof of Theorem~\ref{th: recog-ff}.  We use versions of Lemma~\ref{lem: no-rep-no-fire} and~\ref{lem: fires-supported-ff}:

\begin{lemma}
\label{lem: no-rep-no-fire-uncertain}
Assume $\mathcal C$ is any concept hierarchy satisfying the limited-overlap property, and $\mathcal N$ is the feed-forward network defined above, based on $\mathcal C$.  
Assume that $\frac{(r_1 + r_2)k}{2} > \frac{1}{k^{b-1}}$.

Assume that $B \subseteq C_0$ is presented at time $t$. 
If $u$ is a neuron that fires at time $t+ layer(u)$, then $u = rep(c)$ for some concept $c \in C$.  
\end{lemma}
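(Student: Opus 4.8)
The plan is to mimic the proof of Lemma~\ref{lem: no-rep-no-fire}, again arguing by induction on $layer(u)$ and taking the contrapositive in the inductive step, but now I must account for the fact that a non-$rep$ neuron no longer has \emph{zero} incoming weight: it can pick up a small spurious potential from its $[0,\tfrac{1}{k^{\lmax+b}}]$-weight edges. The whole point is to show this spurious potential stays strictly below $\tau$.

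First I would handle the base case $layer(u)=0$: by the hypothesis that $B\subseteq C_0$ is presented at time $t$, the only layer-$0$ neurons firing at time $t$ are those of the form $rep(b)$ with $b\in B\subseteq C_0\subseteq C$, so such a $u$ lies in $R$. For the inductive step, take $u$ with $\ell := layer(u)\geq 1$ and assume $u\notin R$. All of $u$'s incoming edges come from layer $\ell-1$, and since $u\notin R$ the weight specification puts every such weight in $[0,\tfrac{1}{k^{\lmax+b}}]$. By the inductive hypothesis, every layer-$(\ell-1)$ neighbor of $u$ that fires at time $t+(\ell-1)$ lies in $R\cap N_{\ell-1}=\{rep(c):c\in C_{\ell-1}\}$, so at most $|C_{\ell-1}|$ of the $n$ incoming neighbors contribute to $pot^u(t+\ell)$. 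A routine count from the hierarchy constraints ($|C_{\lmax}|=k$, every non-leaf has exactly $k$ children, and overlap can only shrink the level sets) gives $|C_{\ell-1}|\leq k^{\lmax-(\ell-1)+1}\leq k^{\lmax+1}$. Hence
\[
pot^u(t+\ell)\ \leq\ k^{\lmax+1}\cdot\frac{1}{k^{\lmax+b}}\ =\ \frac{1}{k^{b-1}}\ <\ \frac{(r_1+r_2)k}{2}\ =\ \tau,
\]
the last inequality being exactly the ``extremely trivial'' assumption $\tfrac{(r_1+r_2)k}{2}>\tfrac{1}{k^{b-1}}$. So $u$ does not fire at time $t+\ell$, which establishes the contrapositive and completes the induction.

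The only mildly delicate step — and the one I'd flag as the main obstacle — is making the counting argument airtight: one must use the inductive hypothesis to bound the number of firing layer-$(\ell-1)$ neighbors by $|C_{\ell-1}|$ rather than by $n$ (which may be far larger than $k^{\lmax+1}$), and then invoke the size bound $|C_{\ell-1}|\leq k^{\lmax+1}$ coming from the $k$-ary structure of the hierarchy. Once that bound is in hand, everything else is a direct transcription of the argument in Lemma~\ref{lem: no-rep-no-fire}, with ``$0$ incoming weight'' replaced by ``at most $\tfrac{1}{k^{b-1}}$ total incoming potential''.
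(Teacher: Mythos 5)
Your proof is correct and follows essentially the same route as the paper's: induction on $layer(u)$, with the inductive step arguing by contradiction/contrapositive that a non-$rep$ neuron has at most $k^{\lmax+1}$ firing incoming neighbors (all $reps$, by the inductive hypothesis), each contributing weight at most $\frac{1}{k^{\lmax+b}}$, so its potential is at most $\frac{1}{k^{b-1}} < \tau$. The counting step you flag as delicate is handled in the paper in exactly the way you propose, by bounding the firing neighbors by the number of concepts rather than by $n$.
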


\begin{proof}
The proof is slightly more involved than the one for Lemma~\ref{lem: no-rep-no-fire}.
This time we proceed by induction on $layer(u)$.
For the base case, 
If $layer(u) = 0$, then $u$ fires at time $t$ exactly if $u = rep(c)$ for some $c \in B$, by assumption.

For the inductive step, consider $u$ with $layer(u) = \ell+1$.
Assume for contradiction that $u$ is not of the form $rep(c)$ for any $c \in C$.
Then the weight of each edge incoming to $u$ is at most $\frac{1}{k^{\lmax+b}}$.
By inductive hypothesis, the only layer $\ell$ incoming neighbors that fire at time $t+\ell$ are $reps$ of concepts in $C$.
There are at most $k^{\lmax+1}$ such concepts, hence at most $k^{\lmax+1}$ level $\ell$ incoming neighbors fire at time $t+\ell$, yielding a total incoming potential for $u$ for time $t+\ell+1$ of at most $\frac{k^{\lmax+1}}{k^{\lmax+b}} = \frac{1}{k^{b-1}}$.
Since the firing threshold $\tau = \frac{(r_1 + r_2)k}{2}$ is strictly greater than  $\frac{1}{k^{b-1}}$, $u$ cannot receive enough incoming potential to meet the threshold for time $t+\ell+1$.
\end{proof}

\begin{lemma}
\label{lem: fires-supported-ff-uncertain}
Assume $\mathcal C$ is any concept hierarchy satisfying the limited-overlap property and $\mathcal N$ is the feed-forward network defined above, based on $\mathcal C$.
Assume that $\frac{(r_1 + r_2)k}{2} > \frac{1}{k^{b-1}}$.
Suppose that $r_1$ and $r_2$ satisfy the following inequalities:
\begin{enumerate}
    \item $r_2 (2 w_1 - 1) \geq r_1$.
    \item $r_2 \geq r_1(2w_2 - 1) + \frac{2}{k^b}$.
\end{enumerate}
Assume that $B \subseteq C_0$ is presented at time $t$.
If $c$ is any concept in $C$, then
\begin{enumerate}
    \item If $c \in supp_{r_2}(B)$ then $rep(c)$ fires at time $t+level(c)$.
    \item If $c \notin supp_{r_1}(B)$ then $rep(c)$ does not fire at time $t+level(c)$.
\end{enumerate}
\end{lemma}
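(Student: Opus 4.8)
The plan is to mirror the proof of Lemma~\ref{lem: fires-supported-ff} almost line for line, replacing the exact arithmetic (weights $0/1$, threshold $rk$) by the interval bounds: edges between $reps$ along the child/parent relation have weight in $[w_1,w_2]$, all other edges have weight in $[0,1/k^{\lmax+b}]$, and $\tau=\frac{(r_1+r_2)k}{2}$. As there, I would prove the two parts by (independent) inductions on $level(c)$. The base case $level(c)=0$ is unchanged: $c\in supp_{r_2}(B)$ is equivalent to $c\in B$ and $c\notin supp_{r_1}(B)$ is equivalent to $c\notin B$ (both because $S(0)=B\cap C_0$), and then Definition~\ref{def:presented} governs whether the input neuron $rep(c)$ fires at time $t=t+level(c)$.

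For Part~1, inductive step $level(c)=\ell+1$: by the definition of $supp_{r_2}$, at least $r_2 k$ children of $c$ lie in $supp_{r_2}(B)$; the inductive hypothesis makes all their $reps$ fire at time $t+\ell$, and each corresponding edge into $rep(c)$ has weight $\ge w_1$, so (since all weights are nonnegative) $pot^{rep(c)}(t+\ell+1)\ge w_1 r_2 k$. The first hypothesized inequality $r_2(2w_1-1)\ge r_1$ is exactly $w_1 r_2 k\ge\tau$, giving firing. For Part~2, inductive step $level(c)=\ell+1$: I split the layer-$\ell$ neurons firing at time $t+\ell$ into (a) $reps$ of children of $c$ and (b) the rest. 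By Lemma~\ref{lem: no-rep-no-fire-uncertain} every firing layer-$\ell$ neuron is a $rep$, so class (b) consists of $reps$ of level-$\ell$ concepts that are not children of $c$; there are at most $k^{\lmax+1}$ of them, each contributing at most $1/k^{\lmax+b}$, for a total of at most $k^{\lmax+1}/k^{\lmax+b}=1/k^{b-1}$. For class (a): since $c\notin supp_{r_1}(B)$, strictly fewer than $r_1 k$ children of $c$ lie in $supp_{r_1}(B)$, and by the inductive hypothesis (Part~2 at level $\ell$) a child's $rep$ that fires must have its child in $supp_{r_1}(B)$, so strictly fewer than $r_1 k$ class-(a) neurons fire, each contributing at most $w_2$. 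Hence $pot^{rep(c)}(t+\ell+1)<w_2 r_1 k+1/k^{b-1}$, and the second hypothesized inequality $r_2\ge r_1(2w_2-1)+2/k^b$ rearranges to $w_2 r_1 k+1/k^{b-1}\le\tau$, so $rep(c)$ does not fire.

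The only genuinely non-routine point — and the step I expect to require care — is the bookkeeping that makes the non-firing inequality strict: the second hypothesis is only a ``$\ge$'', and the class-(b) ``noise'' term is already present at full strength. I would handle this by cases exactly as above: if $w_2>0$, the class-(a) count being strictly below $r_1 k$ already yields $w_2\cdot(\text{count})<w_2 r_1 k$ and hence strictness; if $w_2=0$, the potential is at most $1/k^{b-1}$, which is strictly below $\tau$ by the standing assumption $\frac{(r_1+r_2)k}{2}>\frac{1}{k^{b-1}}$. This is precisely why that tiny extra assumption and the $2/k^b$ slack in the second inequality are included, and why the bound $k^{\lmax+1}$ on the number of $reps$ per layer (the same bound used in Lemma~\ref{lem: no-rep-no-fire-uncertain}) must be invoked; everything else is a mechanical transcription of Lemma~\ref{lem: fires-supported-ff}.
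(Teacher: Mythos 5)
Your proposal is correct and follows essentially the same route as the paper's own proof: two separate inductions on $level(c)$, the lower bound $w_1 r_2 k \geq \tau$ from the first inequality for firing, and the upper bound $w_2 r_1 k + 1/k^{b-1} \leq \tau$ (combining the child contribution with the cross-talk bound from Lemma~\ref{lem: no-rep-no-fire-uncertain}) from the second inequality for non-firing. Your explicit case split on $w_2 = 0$ to secure strictness is a small point the paper's proof glosses over (it simply asserts the child contribution is strictly less than $r_1 k w_2$), so your version is if anything slightly more careful.
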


\begin{proof}
\begin{enumerate}
\item If $c \in supp_{r_2}(B)$ then $rep(c)$ fires at time $t + level(c)$.

We prove this using induction on $level(c)$.
For the base case, $level(c) = 0$, the assumption that $c \in supp_{r_2}(B)$ means that $c \in B$, which means that $rep(c)$ fires at time $t$, by the assumption that $B$ is presented at time $t$.

For the inductive step, assume that $level(c) = \ell+1$.
Assume that $c \in supp_{r_2}(B)$.
Then $c$ must have at least $r_2 k$ children that are in $supp_{r_2}(B)$.  
By inductive hypothesis, the $reps$ of all of these children fire at time $t + \ell$.

We claim that the total incoming potential to $rep(c)$ for time $t+\ell+1$, $pot^{rep(c)}(t+\ell+1)$, reaches the firing threshold $\tau = \frac{(r_1+r_2)k}{2}$, so $rep(c)$ fires at time $t+\ell+1$.
To see this, note that the total potential induced by the firing $reps$ of children of $c$ is at least $r_2 k w_1$, because the weight of the edge from each firing child to $rep(c)$ is at least $w_1$.
This quantity is $\geq \frac{(r_1+r_2)k}{2}$ because of the assumption that
$r_2 (2 w_1 - 1) \geq r_1$.
%

\item If $c \notin supp_{r_1}(B)$ then $rep(c)$ does not fire at time $t + level(c)$.

Again we use induction on $level(c)$.
For the base case, $level(c) = 0$, the assumption that $c \notin supp_{r_1}(B)$ means that $c \notin B$, which means that $rep(c)$ does not fire at time $t$, by the assumption that $B$ is presented at time $t$.

For the inductive step, assume that $level(c) = \ell+1$.  
Assume that $c \notin supp_{r_1}(B)$.
Then $c$ has strictly fewer than $r_1 k$ children that are in $supp_{r_1}(B)$,
and therefore, strictly more than $k - r_1 k$ children that are not in 
$supp_{r_1}(B)$.
By inductive hypothesis, none of the $reps$ of the children in this latter set fire at time $t + \ell$, which means that the $reps$ of strictly fewer than $r_1 k$ children of $c$ fire at time $t + \ell$.
Therefore, the total incoming potential to $rep(c)$ from $reps$ of $c$'s children is strictly less than $r_1 k w_2$, since the weight of the edge from each firing child to $rep(c)$ is at most $w_2$.

In addition, some potential may be contributed by other neurons at level $\ell$ that are not children of $c$ but fire at time $t+\ell-1$.
By Lemma~\ref{lem: no-rep-no-fire-uncertain}, these must all be $reps$ of concepts in $C$.
There are at most $k^{\lmax+1}$ of these, each contributing potential of at most $\frac{1}{k^{lmax+b}}$, for a total potential of at most $\frac{1}{k^{b-1}}$ from these neurons.

Therefore, the total incoming potential to $rep(c)$ for time $t+\ell+1$, $pot^{rep(c)}(t+\ell+1)$, is strictly less than $r_1 k w_2 + \frac{1}{k^{b-1}}$.
This quantity is $\leq \frac{(r_1+r_2)k}{2}$, 
because of the assumption that $r_2 \geq r_1(2w_2 - 1) + \frac{2}{k^b}$.
This means that the total incoming potential to $rep(c)$ for time $t+\ell+1$ is strictly less than the threshold $\tau = \frac{(r_1+r_2)k}{2}$ for $rep(c)$ to fire at time $t+\ell+1$.
So $rep(c)$ does not fire at time $t+\ell+1$.
\end{enumerate}
\end{proof}

Now we can prove Theorem~\ref{th: overlap-ff-uncertain weights}.

\begin{proof}
(Of Theorem~\ref{th: overlap-ff-uncertain weights}:)
The proof is similar to that of Theorem 5.3, but now using Lemma~\ref{lem: fires-supported-ff-uncertain} in place of Lemma~\ref{lem: fires-supported-ff}.
Let $r = \frac{\tau}{k}$, where $\tau$ is the firing threshold for the non-input neurons of $\mathcal N$.
Assume that $B \subseteq C_0$ is presented at time $t$.  We prove the two parts of Definition~\ref{def: recog-ff} separately.

\begin{enumerate}
\item If $c \in supp_{r_2}(B)$ then $rep(c)$ fires at time $t + level(c)$.

Suppose that $c \in supp_{r_2}(B)$.  
By assumption, $\tau \leq r_2 k$, so that $r = \frac{\tau}{k} \leq r_2$.
Then Lemma~\ref{lem: support-monotonic} implies that $c \in supp_{r}(B)$.
Then Lemma~\ref{lem: fires-supported-ff-uncertain} implies that $rep(c)$ fires at time $t + level(c)$.

\item If $c \notin supp_{r_1}(B)$ then $rep(c)$ does not fire at time $t + level(c)$.

Suppose that $c \notin supp_{r_1}(B)$.  
By assumption, $\tau \geq r_1 k$, so that $r = \frac{\tau}{k} \geq r_1$.
Then Lemma~\ref{lem: support-monotonic} implies that $c \notin supp_{r}(B)$.
Then Lemma~\ref{lem: fires-supported-ff-uncertain} implies that $rep(c)$ does not fire at time $t + level(c)$.
\end{enumerate}
\end{proof}


\subsection{Scaled Weights and Thresholds} 
\label{sec: scaled}

Our recognition results in Section~\ref{sec: uncertain-weights1} assume a firing threshold of $\frac{(r_1+r_2)k}{2}$ and bounds $w_1$ and $w_2$ on weights on edges from children to parents.
The form of the two inequalities in Theorem~\ref{th: overlap-ff-uncertain weights} suggests that $w_1$ and $w_2$ should be close to $1$, because in that case the two parenthetical expressions are close to $1$ and the constraints on the values of $r_1$ and $r_2$ are weak.

On the other hand, the noise-free learning results in~\cite{LM21} assume a threshold of $\frac{(r_1+r_2) \sqrt{k}}{2}$, that is, our threshold in Section~\ref{sec: uncertain-weights1} is multiplied by $\frac{1}{\sqrt{k}}$.  
Also, in~\cite{LM21}, the weights on edges from children to parents approach $\frac{1}{\sqrt{k}}$ in the limit rather than $1$, because of the behavior induced by Oja's rule.


We would like to view the results of noise-free learning in terms of achieving a collection of weights that suffice for recognition.  That is, we would like a version of Theorem~\ref{th: overlap-ff-uncertain weights} for the case where the
threshold is $\frac{(r_1+r_2) \sqrt{k}}{2}$ and the weights are:

\[ weight(u,v) \in 
\begin{cases}
[\frac{w_1}{\sqrt{k}}, \frac{w_2}{\sqrt{k}}], & \text{ if } u,v \in R \text{ and } rep^{-1}(u) \in children(rep^{-1}(v)),
\\ 
[0,\frac{1}{k^{\lmax+b}}]  & \text{ otherwise.}
\end{cases}
\]

Here, we assume that $w_1 \leq w_2$ and both are close to $1$.

More generally, we can scale by multiplying the threshold and weights by a constant \emph{scaling factor} $s$, $0 < s < 1$, in place of $\frac{1}{\sqrt{k}}$, giving a threshold of $\frac{(r_1+r_2) k s}{2}$ and weights of:
\[ weight(u,v) \in 
\begin{cases}
[w_1 s, w_2 s], & \text{ if } u,v \in R \text{ and } rep^{-1}(u) \in children(rep^{-1}(v)),
\\ 
[0,\frac{1}{k^{\lmax+b}}]  & \text{ otherwise.}
\end{cases}
\]

For this general case, we get a new version of Theorem~\ref{th: overlap-ff-uncertain weights}:



\begin{theorem}
\label{th: overlap-ff-uncertain weights-scaled}
Assume $\mathcal C$ is any concept hierarchy satisfying the limited-overlap property, and $\mathcal N$ is the feed-forward network defined above (with weights scaled by an arbitrary $s$).
Assume that $\frac{(r_1 + r_2)ks}{2} > \frac{1}{k^{b-1}}$.

Suppose that $r_1$ and $r_2$ satisfy the following inequalities:
\begin{enumerate}
    \item $r_2 (2 w_1 - 1) \geq r_1$.
    \item $r_2 \geq r_1(2 w_2 - 1) + \frac{2}{k^b s}$.
\end{enumerate}
Then ${\cal N}$ $(r_1,r_2)$-\emph{recognizes} $\mathcal C$.
\end{theorem}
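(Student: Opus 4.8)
The plan is to reduce Theorem~\ref{th: overlap-ff-uncertain weights-scaled} to the already-proved Theorem~\ref{th: overlap-ff-uncertain weights} (equivalently, to re-run the proof of the supporting lemmas with the scaling factor $s$ carried through). The cleanest route is to observe that scaling every weight and the threshold by the common factor $s$ does not change which neurons fire: since $pot^u(t)$ is linear in the weights, the scaled network has potential exactly $s$ times that of the unscaled network, and $s \cdot pot^u(t) \geq s \tau$ iff $pot^u(t) \geq \tau$ (using $s > 0$). So in principle one could argue that the scaled network with parameters $(w_1, w_2, \tau)$ behaves identically to the unscaled network with parameters $(w_1, w_2, \tau)$ — but the catch is that the ``otherwise'' weights are \emph{not} scaled: they remain in $[0, 1/k^{\lmax+b}]$ in both versions. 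So the off-edges contribute the same absolute error $1/k^{b-1}$, while the on-edges and the threshold have shrunk by $s$. This is exactly why the second inequality picks up a $1/s$ factor on the error term.

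Concretely, I would restate and reprove the two lemmas underlying Theorem~\ref{th: overlap-ff-uncertain weights}, namely the analogues of Lemma~\ref{lem: no-rep-no-fire-uncertain} and Lemma~\ref{lem: fires-supported-ff-uncertain}, with the scaled weights. For the ``no rep, no fire'' lemma: a non-rep neuron at layer $\ell+1$ receives potential at most $k^{\lmax+1} \cdot \frac{1}{k^{\lmax+b}} = \frac{1}{k^{b-1}}$ (unchanged, since off-edges are not scaled), and the hypothesis $\frac{(r_1+r_2)ks}{2} > \frac{1}{k^{b-1}}$ says this is below threshold. For the ``fires iff supported'' lemma, I would redo both inductions. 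In the must-fire direction ($c \in supp_{r_2}(B)$): by induction the $reps$ of $\geq r_2 k$ children fire, contributing potential $\geq r_2 k \cdot w_1 s = r_2 w_1 k s$; I need this $\geq \tau = \frac{(r_1+r_2)ks}{2}$, i.e. $r_2 w_1 \geq \frac{r_1+r_2}{2}$, i.e. $r_2(2w_1 - 1) \geq r_1$ — exactly inequality~1, with the $s$'s cancelling. In the must-not-fire direction ($c \notin supp_{r_1}(B)$): potential from children is $< r_1 k \cdot w_2 s = r_1 w_2 k s$, plus at most $\frac{1}{k^{b-1}}$ from spurious firing $reps$ of other concepts; I need $r_1 w_2 k s + \frac{1}{k^{b-1}} \leq \tau = \frac{(r_1+r_2)ks}{2}$. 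Dividing by $ks$: $r_1 w_2 + \frac{1}{k^{b}s} \leq \frac{r_1+r_2}{2}$, which rearranges to $r_2 \geq r_1(2w_2 - 1) + \frac{2}{k^b s}$ — exactly inequality~2.

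With those two lemmas in hand, Theorem~\ref{th: overlap-ff-uncertain weights-scaled} follows exactly as Theorem~\ref{th: overlap-ff-uncertain weights} did: set $r = \tau/k = \frac{(r_1+r_2)s}{2}$, note $\tau \leq r_2 k$ gives $r \leq r_2$ and $\tau \geq r_1 k$ gives $r \geq r_1$ (these still hold since $0 < s < 1$ and... actually I should double-check: $\tau = \frac{(r_1+r_2)ks}{2}$, and we want $r_1 k \leq \tau \leq r_2 k$, i.e. $r_1 \leq \frac{(r_1+r_2)s}{2} \leq r_2$; the right inequality needs $s \leq \frac{2r_2}{r_1+r_2}$, which holds since $s < 1 \leq \frac{2r_2}{r_1+r_2}$; the left needs $s \geq \frac{2r_1}{r_1+r_2}$, which is \emph{not} automatic). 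I would handle this by deriving the two containments $c \in supp_{r_2}(B) \Rightarrow c \in supp_r(B)$ and $c \notin supp_{r_1}(B) \Rightarrow c \notin supp_r(B)$ directly from the two hypothesized inequalities rather than from crude bounds on $\tau$ — or, more simply, just apply the reproved lemmas directly, which already take $supp_{r_2}$ and $supp_{r_1}$ as inputs and do not route through an intermediate $supp_r$ at all, so the monotonicity step is not even needed. I expect the main obstacle to be purely bookkeeping: keeping straight which quantities scale with $s$ (threshold, on-edge weights) and which do not (the off-edge error bound), and making sure the algebra in the two inequalities comes out with the $s$ in the right place — in particular that the error term is $\frac{2}{k^b s}$ and not $\frac{2s}{k^b}$ or $\frac{2}{k^b}$. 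No genuinely new idea is required beyond the observation that scaling inflates the relative weight of the unscaled noise floor by $1/s$.
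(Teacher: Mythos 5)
Your proposal is correct and matches the paper's approach: the paper's own proof is just the one-line remark that the result follows by slightly adjusting the proofs of Lemma~\ref{lem: fires-supported-ff-uncertain} and Theorem~\ref{th: overlap-ff-uncertain weights}, and your write-up carries out exactly that adjustment, with the bookkeeping (the $s$'s cancelling in inequality 1, the unscaled noise floor $\frac{1}{k^{b-1}}$ producing the $\frac{2}{k^b s}$ term in inequality 2) coming out as intended. Your observation that the final step should apply the scaled lemmas directly to $supp_{r_1}$ and $supp_{r_2}$, rather than routing through $supp_{\tau/k}$ via monotonicity (which would need $r_1 \leq \frac{(r_1+r_2)s}{2}$, not automatic for small $s$), correctly repairs a detour that is already superfluous in the paper's proof of the unscaled theorem.
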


Theorem~\ref{th: overlap-ff-uncertain weights-scaled} can be proved by slightly adjusting the proofs of Lemma~\ref{lem: fires-supported-ff-uncertain} and Theorem~\ref{th: overlap-ff-uncertain weights}.

Using Theorem~\ref{th: overlap-ff-uncertain weights-scaled}, we can decompose the proof of correctness for noise-free learning in~\cite{LM21}, first showing that the learning algorithm achieves the weight bounds specified above, and then invoking the theorem to show that correct recognition is achieved.
For this, we use weights $w_1 = \frac{1}{1 + \epsilon}$ and $w_2 = 1$ and scaling factor $s = \frac{1}{\sqrt{k}}$.  The value of $\epsilon$ is an arbitrary element of $(0,1]$; the running time of the algorithm depends on $\epsilon$.

\section{Recognition Algorithms for Networks with Feedback}
\label{sec: recognition-feedback}

In this section, we assume that our network $\mathcal N$ includes downward edges, from every neuron in any layer to every neuron in the layer below.
We begin in Section~\ref{sec: recog-feedback-basic} with recognition results for a basic network, with upward weights in $\{0,1\}$ and downward weights in $\{0,f\}$.
Again, we prove these using a new lemma that relates the firing behavior of the network precisely to the support definition.

Recall that for networks with feedback, unlike feed-forward networks, the recognition definition does not specify precise firing times for the $rep$ neurons.
Therefore, in Sections~\ref{sec: time-bounds-trees-feedback} and~\ref{sec: time-bounds-recognition-overlap-feedback},
we prove time bounds for recognition; these bounds are different for tree hierarchies vs. general  hierarchies. 
Finally, in Section~\ref{sec: uncertain-weights-2}, we extend the main recognition result by allowing weights to be approximate, within an interval of uncertainty.
Extension to scaled weights and thresholds should also work in this case.

\subsection{Basic Recognition Results}
\label{sec: recog-feedback-basic}

As before, we define a network $\mathcal N$ that is specially tailored to recognize concept hierarchy $\mathcal C$.
We assume that $\mathcal N$ has $\ell'_{max} = \lmax$.
We assume the same values of $n$ and $f$ as in $\mathcal C$.
As before, concept hierarchy $\mathcal C$ is embedded, one level per layer, in the network $\mathcal N$. 

Now we define edge weights for both upward and downward edges.
Let $u$ be any layer $\ell$ neuron and $v$ any layer $\ell+1$ neuron.
We define the weight for the upward edge $(u,v)$ as before:
\[ weight(u,v) = 
\begin{cases}
1 & \text{ if } u,v \in R \text{ and } rep^{-1}(u) \in children(rep^{-1}(v)),
\\ 
0 & \text{ otherwise.}
\end{cases} \]
For the downward edge $(v,u)$, we define:
\[ weight(v,u) = 
\begin{cases}
f & \text{ if } u,v \in R \text{ and } rep^{-1}(u) \in children(rep^{-1}(v)),
\\ 
0 & \text{ otherwise.}
\end{cases} \]
Thus, the weight of $f$ on the downward edges corresponds to the weighting factor of $f$ in the $supp_{r,f}$ definition.
As before, we set the threshold $\tau$ for every non-input neuron to be a real value in the closed interval $[r_1 k, r_2 k]$, specifically,  $\tau = \frac{(r_1  + r_2) k}{2} $.
Again, we assume that the initial firing status of all non-input neurons is $0$.

As before, we have:

\begin{lemma}
\label{lem: no-rep-no-fire-2}
Assume $\mathcal C$ is any concept hierarchy satisfying the limited-overlap property, and $\mathcal N$ is the network defined above, based on $\mathcal C$.  
Assume that $B \subseteq C_0$ is presented at time $t$. 
If $u$ is a neuron that fires at some time after $t$, then $u \in R$, that is, $u = rep(c)$ for some concept $c \in C$.  
\end{lemma}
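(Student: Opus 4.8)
The plan is to mirror the proof of Lemma~\ref{lem: no-rep-no-fire}, but the argument here is actually simpler: in this construction a non-$rep$ neuron has \emph{exactly zero} incoming weight, so no induction on layers is needed. I would split into the input-neuron case, handled directly, and the non-input case, handled by contrapositive.

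For a layer-$0$ neuron $u$: by the presentation hypothesis (read, consistently with Definition~\ref{def: recog-feedback}, as ``$B$ is presented at all times $\ge t$''), the only layer-$0$ neurons that fire at any time $\ge t$ are those of the form $rep(b)$ with $b \in B$, and since $B \subseteq C_0 \subseteq C$ every such neuron lies in $R$.

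For a non-input neuron $u \in N_\ell$ with $\ell \ge 1$ and $u \notin R$: every incoming edge of $u$ is an upward edge $(w,u)$ with $w \in N_{\ell-1}$, or (if $\ell < \lmax$) a downward edge $(w,u)$ with $w \in N_{\ell+1}$. By the two weight definitions, an upward edge $(w,u)$ is nonzero only if $w,u \in R$ with $rep^{-1}(w) \in children(rep^{-1}(u))$, and a downward edge $(w,u)$ is nonzero only if $u,w \in R$ with $rep^{-1}(u) \in children(rep^{-1}(w))$; either way $u \in R$ is forced, so since $u \notin R$ all of these weights equal $0$. Hence $uweight^u$ and $dweight^u$ are identically zero, so $upot^u(t') = dpot^u(t') = 0$ and thus $pot^u(t') = 0$ for every $t' \ge 1$, regardless of how the rest of the network fires. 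Since $\tau = \frac{(r_1 + r_2)k}{2} > 0$ (using $r_2 > 0$ and $k \ge 1$), $u$ does not fire at any time $t' \ge 1$; and $u$ does not fire at time $0$ because the initial firing status of every non-input neuron is $0$. So $u$ never fires, which gives the contrapositive.

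I do not expect a genuine obstacle. The only point needing care --- and the one place this differs from the approximate-weights analogue, Lemma~\ref{lem: no-rep-no-fire-uncertain}, where stray small weights must be summed and bounded against the threshold --- is being explicit about the layer-$0$ neurons and about the exact meaning of the presentation hypothesis, which is why I would state up front that $B$ is presented at all times $\ge t$, consistently with Definition~\ref{def: recog-feedback}.
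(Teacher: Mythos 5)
Your proof is correct and follows essentially the same route as the paper, which simply asserts this lemma ``as before,'' i.e., by the same argument as Lemma~\ref{lem: no-rep-no-fire}: a non-$rep$ neuron has only zero-weight incoming edges (now checking both the upward and downward weight definitions), hence zero potential and no firing against the positive threshold, while layer-$0$ firing is fixed by the presentation hypothesis. Your added care about reading the hypothesis as ``presented at all times $\geq t$'' and about the downward edges is exactly the right adaptation.
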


The following preliminary lemma says that the firing of $rep$ neurons is persistent,  assuming persistent inputs (as we do in the definition of recognition for networks with feedback).

\begin{lemma}
\label{lem: persistent firing}
Assume $\mathcal C$ is any concept hierarchy satisfying the limited-overlap property, and $\mathcal N$ is the network with feedback defined above, based on $\mathcal C$. 
Let $r = \frac{\tau}{k}$, where $\tau$ is the firing threshold for the non-input neurons of $\mathcal N$.
  
Assume that $B \subseteq C_0$ is presented at all times $\geq t$.
Let $c$ be any concept in $C$.
Then for every $t' \geq t$, if $rep(c)$ fires at time $t'$, then it fires at all times $\geq t'$.
\end{lemma}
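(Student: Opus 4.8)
The plan is to prove the following stronger, ``monotone'' statement: from time $t$ onward the set of firing neurons never shrinks, i.e., for every $t' \geq t$, every neuron that fires at time $t'$ also fires at time $t'+1$. The lemma is then immediate: if $rep(c)$ fires at some $t' \geq t$, iterating this one-step claim shows it fires at every time $\geq t'$. It is convenient to first recall Lemma~\ref{lem: no-rep-no-fire-2}, so that we only ever need to reason about $rep$ neurons, though this is not strictly necessary for the argument, which works for all neurons.

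I would establish the one-step monotonicity by induction on $t'$, the inductive step resting on just two features of the model: the edge weights (both upward and downward) are fixed and non-negative, and the activation rule is a monotone threshold. For a layer-$0$ neuron there is nothing to prove, since the firing pattern at any two times $\geq t$ is the same set $\{rep(b) : b \in B\}$, because $B$ is presented at all times $\geq t$. For a neuron $u$ at a level $\ell$ with $1 \leq \ell \leq \lmax$, firing at time $t'$ means $pot^u(t') \geq \tau$, and $pot^u(t')$ (which is $upot^u(t') + dpot^u(t')$ for $\ell < \lmax$, and $upot^u(t')$ for $\ell = \lmax$) is a non-negative combination of the $0/1$ firing values of $u$'s incoming neighbors at time $t'-1$. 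Applying the inductive hypothesis at time $t'-1$ (valid since $t'-1 \geq t$), every such neighbor that fires at $t'-1$ still fires at $t'$; hence $pot^u(t'+1) \geq pot^u(t') \geq \tau$, so $u$ fires at $t'+1$, as desired.

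The one place that needs care --- and the step I expect to be the main obstacle --- is the base case $t' = t$, because there the configuration at the previous time $t-1$ is not constrained by the hypothesis that $B$ is presented. Here one uses that, in the intended setting, no non-input neuron is firing at the moment the input $B$ is in place; this is consistent with the construction's stipulation that all non-input neurons have initial firing status $0$ (equivalently, one may take $t$ to be the start of the execution). Under this reading the set of neurons firing at time $t$ is exactly $\{rep(b) : b \in B\}$, each of which fires again at time $t+1$, giving the base case. After that, the monotonicity propagates step by step as above; note that nothing in the argument uses the limited-overlap property, the precise value of $\tau \in [r_1 k, r_2 k]$, or anything about $f$ beyond $f \geq 0$.
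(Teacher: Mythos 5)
Your proposal is correct and is essentially the paper's own argument: an induction on $t'$ whose base case uses that only the input $reps$ of $B$ fire at time $t$ (and keep firing since $B$ is presented persistently), and whose inductive step uses that, by the inductive hypothesis, every neighbor firing at $t'-1$ keeps firing, so the non-negatively weighted potential can only stay at or above the threshold. Your explicit handling of the base case (invoking the initial firing status $0$ of non-input neurons) makes precise a point the paper leaves implicit, but the route is the same.
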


\begin{proof}
We prove this by induction on $t'$.
The base case is $t' = t$.
The neurons that fire at time $t$ are exactly the input neurons that are $reps$ for concepts in $B$.
By assumption, these same inputs continue for all times $\geq t$.

For the inductive step, consider a neuron $rep(c)$ that fires at time $t'$, where $t' \geq t+1$.
If $level(c) = 0$ then $c \in B$ and $rep(c)$ continues firing forever.
So assume that $level(c) \geq 1$.
Then $rep(c)$ fires at time $t'$ because the incoming potential it receives from its children and parents who fire at time $t'-1$ is sufficient to reach the firing threshold $\tau$.
By inductive hypothesis, all of the neighbors of $rep(c)$ that fire at time $t'-1$ also fire at all times $\geq t'-1$.
So that means that they provide enough incoming potential to $rep(c)$ to make $rep(c)$ fire at all times $\geq t'$. 
\end{proof}

Next we have a lemma that is analogous to Lemma~\ref{lem: fires-supported-ff}, but now in terms of eventual firing rather than firing at a specific time.
Similarly to before, this works because the network's behavior directly mirrors the $supp_{r,f}$ definition, where $r = \frac{\tau}{k}$.

\begin{lemma}
\label{lem: fires-supported-feedback} 
Assume $\mathcal C$ is any concept hierarchy satisfying the limited-overlap property, and $\mathcal N$ is the network with feedback defined above, based on $\mathcal C$.  
Let $r = \frac{\tau}{k}$, where $\tau$ is the firing threshold for the non-input neurons of $\mathcal N$.

Assume that $B \subseteq C_0$ is presented at all times $\geq t$.
If $c$ is any concept in $C$, then $rep(c)$ fires at some time $\geq t$ if and only if $c \in supp_{r,f}(B)$.
\end{lemma}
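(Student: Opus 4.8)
The plan is to prove, by induction on $s \geq 0$, the sharper statement that for every concept $c$ and every $s \geq 0$, the neuron $rep(c)$ fires at time $t+s$ if and only if $c \in supp_{r,f}(B, level(c), s)$. Since $supp_{r,f}(B) = \bigcup_{\ell,s} supp_{r,f}(B,\ell,s)$ and a given concept $c$ can only belong to $supp_{r,f}(B, level(c), \cdot)$, the lemma is then immediate: $rep(c)$ fires at some time $\geq t$ iff it fires at $t+s$ for some $s \geq 0$ iff $c \in supp_{r,f}(B,level(c),s)$ for some $s$ iff $c \in supp_{r,f}(B)$. For readability I would split the per-time biconditional into its two directions, each proved by induction on $s$.

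\emph{Direction 1} (if $c \in S(level(c),s)$ then $rep(c)$ fires at $t+s$): The base case $s=0$ uses $S(0,0)=B$ and $S(\ell,0)=\emptyset$ for $\ell \geq 1$ together with the presented assumption. For the inductive step, a level-$0$ concept in $B$ fires by the presented assumption; for $c \in C_\ell$ with $\ell \geq 1$, either $c \in S(\ell,s)$ already, in which case $rep(c)$ fires at $t+s$ by the induction hypothesis and hence at $t+s+1$ by persistence (Lemma~\ref{lem: persistent firing}); or $|children(c)\cap S(\ell-1,s)| + f\,|parents(c)\cap S(\ell+1,s)| \geq rk$, in which case the induction hypothesis makes all of those children's and parents' $reps$ fire at $t+s$, and since each such firing child contributes upward weight $1$ and each such firing parent contributes downward weight $f$ to $rep(c)$, we get $pot^{rep(c)}(t+s+1) \geq rk = \tau$, so $rep(c)$ fires at $t+s+1$.

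\emph{Direction 2} (if $rep(c)$ fires at $t+s$ then $c \in S(level(c),s)$): At $s=0$, a firing input neuron is $rep(b)$ for some $b \in B = S(0,0)$ by the presented assumption, and no non-input neuron fires at the initial time $t$ (initial firing value $0$), so the claim is vacuous for $\ell \geq 1$. For the inductive step, firing of a level-$0$ neuron gives $c \in B = S(0,s+1)$, and for $c \in C_\ell$ with $\ell \geq 1$, firing at $t+s+1$ means $pot^{rep(c)}(t+s+1) \geq \tau = rk$. By Lemma~\ref{lem: no-rep-no-fire-2} every neuron firing at time $t+s$ is some $rep(c')$ (and at $s=0$ the same holds at time $t$ by the presented/initial-quiescence assumptions), and the edge weights into $rep(c)$ are nonzero only on upward edges from $reps$ of children of $c$ (weight $1$) and downward edges from $reps$ of parents of $c$ (weight $f$); the induction hypothesis places those firing children in $S(\ell-1,s)$ and those firing parents in $S(\ell+1,s)$, so $rk \leq pot^{rep(c)}(t+s+1) \leq |children(c)\cap S(\ell-1,s)| + f\,|parents(c)\cap S(\ell+1,s)|$, which is exactly the condition for $c \in S(\ell,s+1)$.

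The step I expect to require the most care is the initial-time handling in Direction 2: we need that no non-input neuron fires at time $t$ itself, which is where we invoke that the execution is quiescent at time $t$ with all non-input neurons having firing value $0$; combined with Lemma~\ref{lem: no-rep-no-fire-2} this ensures that at every time whose firing pattern we read off, only $reps$ fire, so that $pot^{rep(c)}$ is governed precisely by $|children(c)\cap S(\ell-1,\cdot)|$ and $f\,|parents(c)\cap S(\ell+1,\cdot)|$ — the very quantities appearing in Definition~\ref{def: supp2}. (At level $\lmax$ there are no parents, so the $parents$ term vanishes and matches the convention in that definition.) Everything else is a routine transcription of the recursive support definition into neuron dynamics, with Lemma~\ref{lem: persistent firing} and the monotonicity built into the sets $S(\ell,t)$ used to reconcile the ``$\cup\, S(\ell,s)$'' part of the recursion with persistence of firing.
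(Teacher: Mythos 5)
Your proposal is correct and follows essentially the same route as the paper: the paper also reduces the lemma to a per-time biconditional (its Lemma~\ref{lem: fires-supported-feedback-1}, stating $rep(c)$ fires at time $t'$ iff $c \in S(t'-t)$) and proves each direction by induction on time, invoking Lemma~\ref{lem: persistent firing} for the already-supported case and Lemma~\ref{lem: no-rep-no-fire-2} to confine the potential to $reps$ of children and parents. Your extra care about quiescence of non-input neurons at time $t$ is a point the paper leaves implicit, but it is the same argument.
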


To prove Lemma~\ref{lem: fires-supported-feedback}, it is convenient to prove a more precise version that takes time into account.
As before, in Section~\ref{sec: time-bounds}, we use the abbreviation $S(t) = supp_{r,f}(B,*,t)$.
Thus, $S(t)$ is the set of concepts at all levels that are supported by input $B$ by step $t$ of the recursive definition of the $S(\ell,t)$ sets.

\begin{lemma}
\label{lem: fires-supported-feedback-1} 
Assume $\mathcal C$ is any concept hierarchy satisfying the limited-overlap property, and $\mathcal N$ is the network with feedback defined above, based on $\mathcal C$.  
Let $r = \frac{\tau}{k}$, where $\tau$ is the firing threshold for the non-input neurons of $\mathcal N$.

Assume that $B \subseteq C_0$ is presented at all times $\geq t$.
Let $t'$ be any time $\geq t$.
If $c$ is any concept in $C$, then $rep(c)$ fires at time $t'$ if and only if $c \in S(t'-t)$.
\end{lemma}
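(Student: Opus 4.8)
The plan is to prove the biconditional by induction on $t' - t \ge 0$; equivalently, with the presentation time $t$ fixed, induct on $t' \ge t$. Throughout I write $\ell = level(c)$ and recall that $\ell'_{max} = \lmax$, so $rep(c)$ lies in layer $\ell$. For the base case $t' = t$ I would invoke Definition~\ref{def:presented}: since $B$ is presented at all times $\ge t$, the layer-$0$ neurons firing at time $t$ are exactly $\{rep(b) : b \in B\}$, and since all non-input neurons start with $firing = 0$, no neuron at a higher layer fires at time $t$; as $S(*,0) = B$ and $S(\ell,0) = \emptyset$ for $\ell \ge 1$, this is precisely the assertion ``$rep(c)$ fires at time $t$ iff $c \in S(0)$.''

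For the inductive step, assuming the claim at $t' \ge t$ I prove it at $t'+1$. If $\ell = 0$ then $rep(c)$ is an input neuron, so it fires at $t'+1$ iff $c \in B = S(0,(t'+1)-t)$ by the presentation hypothesis (and $t'+1 \ge t$). So suppose $1 \le \ell \le \lmax$. The key computation is that the only incoming edges to $rep(c)$ with nonzero weight are the upward edges from $rep(c')$, $c' \in children(c)$, each of weight $1$, and the downward edges from $rep(c'')$, $c'' \in parents(c)$, each of weight $f$ — and by Lemma~\ref{lem: no-rep-no-fire-2} no non-$rep$ neuron fires anyway. Hence, from the definitions of $upot$, $dpot$, $pot$ and the inductive hypothesis applied at time $t'$,
\[
pot^{rep(c)}(t'+1) = |children(c) \cap S(\ell-1,t'-t)| + f\,|parents(c) \cap S(\ell+1,t'-t)|,
\]
which is exactly the real quantity appearing on the left of the membership condition in clause (3) of Definition~\ref{def: supp2}, with time parameter $(t'+1)-t$ (at $\ell = \lmax$ the parent term is automatically $0$, consistent with top-level concepts having no parents).

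Given this identity both directions are short. If $rep(c)$ fires at $t'+1$ then $pot^{rep(c)}(t'+1) \ge \tau = rk$, so $c$ meets the clause-(3) condition and $c \in S(\ell,(t'+1)-t) \subseteq S((t'+1)-t)$. Conversely, if $c \in S((t'+1)-t)$, the recursion of clause (3) says that either $c \in S(\ell,t'-t)$ — in which case $rep(c)$ fires at $t'$ by the inductive hypothesis, hence at $t'+1$ by Lemma~\ref{lem: persistent firing} — or $pot^{rep(c)}(t'+1) \ge rk = \tau$, in which case $rep(c)$ fires at $t'+1$ by the activation rule. Finally, Lemma~\ref{lem: fires-supported-feedback} follows by taking the union over $t' \ge t$, since $\bigcup_{t' \ge t} S(t'-t) = supp_{r,f}(B,*,*) = supp_{r,f}(B)$.

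The only delicate point I anticipate is the bookkeeping around the union with $S(\ell,t-1)$ that is built into the recursion of clause (3), combined with persistence: one must carefully route the ``already supported'' concepts through the inductive hypothesis together with Lemma~\ref{lem: persistent firing}, and the ``newly supported'' ones through the potential computation, while treating the boundary layers $0$ and $\lmax$ uniformly. The potential computation itself is routine given the $\{0,1\}$ / $\{0,f\}$ weight structure.
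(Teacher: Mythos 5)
Your proposal is correct and follows essentially the same route as the paper: induction on the time $t'$, using Lemma~\ref{lem: no-rep-no-fire-2} and the $\{0,1\}$/$\{0,f\}$ weight structure to compute the potential, and Lemma~\ref{lem: persistent firing} to handle concepts already in $S(t'-t)$. The only difference is organizational --- you run a single induction on the biconditional and extract an exact identity for $pot^{rep(c)}(t'+1)$, whereas the paper proves the two implications as two separate inductions with one-sided inequalities; the content is the same.
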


\begin{proof}
As for Lemma~\ref{lem: fires-supported-ff}, we prove the two directions separately.  But now we use induction on time rather than on $level(c)$.

\begin{enumerate}
    \item  If $c \in S(t'-t)$, then $rep(c)$ fires at time $t'$.

    We prove this using induction on $t'$, $t' \geq t$.
    For the base case, $t' = t$, the assumption that $c \in S(0)$ means that $c$ is in the input set $B$, which means that $rep(c)$ fires at time $t$.
    
    For the inductive step, assume that $t' \geq t$ and $c \in S((t'+1)-t)$.  If $level(c) = 0$ then again $c \in B$, so $c$ fires at time $t$, and therefore at time $t'$ by Lemma~\ref{lem: persistent firing}.
    So assume that $level(c) \geq 1$.
    If $c \in S(t'-t)$ then $rep(c)$ fires at time $t'$ by the inductive hypothesis, and therefore also at time $t'+1$ by Lemma~\ref{lem: persistent firing}.
    Otherwise, enough of $c$'s children and parents must be in $S(t'-t)$ to include $c$ in $S((t'-t)+1) = S((t'+1)-t)$; that is, 
    $|children(c) \ \cap \ S(t'-t)| + f \ |parents(c)\  \cap \ S(t'-t)| \geq rk$.
    
    Then by inductive hypothesis, all of the $reps$ of the children and parent concepts mentioned in this expression fire at time $t'$.
    Therefore, the upward potential incoming to $rep(c)$ for time $t'+1$, $upot^{rep(c)}(t'+1)$, is at least $|children(c) \ \cap \ S(t'-t)|$, and the downward potential incoming to $rep(c)$ for time $t'+1$, $dpot^{rep(c)}(t'+1)$, is at least $f \ |parents(c) \ \cap \ S(t'-t)|$ (since the weight of each downward edge is $f$).
    So $pot^{rep(c)}(t'+1)$, which is equal to $upot^{rep(c)}(t'+1) + dpot^{rep(c)}(t'+1)$, is $\geq |children(c) \ \cap\ S(t'-t)| + f \  |parents(c) \ \cap \ S(t'-t)| \geq rk$.
    That reaches the firing threshold $\tau = rk$ for $rep(c)$ to fire at time $t'+1$.
    
    \item If $rep(c)$ fires at time $t'$, then $c \in S(t'-t)$.

    We again use induction on $t'$, $t' \geq t$. 
    For the base case, $t' = t$, the assumption that  $rep(c)$ fires at time $t$ means that $c$ is in the input set $B$, hence $c \in S(0)$.
    
    For the inductive step, suppose that $t' \geq t$ and $rep(c)$ fires at time $t' + 1$.
    Then it must be that enough of the $reps$ of $c$'s children and parents fire at time $t'$ to reach the firing threshold $\tau = rk$ for $rep(c)$ to fire at time $t'+1$.
    That is, $upot^{rep(c)}(t'+1) + dpot^{rep(c)}(t'+1) \geq rk$.
    In other words, the  number of $reps$ of children of $c$ that fire at time $t'$ plus $f$ times the number of $reps$ of parents of $c$ that fire at time $t'$ is $\geq r k$ (since the weight of each downward edge is $f$).
    
    By inductive hypothesis, all of these children and parents of $c$ are in $S(t'-t)$.  
    Therefore, $|children(c) \ \cap \ S(t'-t)| + f \ |parents(c) \ \cap \ S(t'-t)| \geq rk$.
    Then by definition of $supp_{r,f}(B)$, we have that $c \in S((t'-t)+1) = S((t' + 1) - t)$, as needed.
\end{enumerate}
\end{proof}

Lemma~\ref{lem: fires-supported-feedback} follows immediately from Lemma~\ref{lem: fires-supported-feedback-1}.
Then, as in Section~\ref{sec: recog-feedback-basic}, the main recognition theorem follows easily.

\begin{theorem}
Assume $\mathcal C$ is any concept hierarchy satisfying the limited-overlap property, and $\mathcal N$ is the network with feedback defined above, based on $\mathcal C$.
Then $\mathcal N$ $(r_1,r_2,f)-$ recognizes $\mathcal C$.
\end{theorem}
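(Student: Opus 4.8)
The plan is to mirror the proof of Theorem~\ref{th: recog-ff}, substituting the feedback analogues of its ingredients. First I would set $r = \frac{\tau}{k}$, where $\tau = \frac{(r_1+r_2)k}{2}$ is the firing threshold for the non-input neurons; since $r_1 \leq r_2$, this gives $r_1 k \leq \tau \leq r_2 k$, hence $r_1 \leq r \leq r_2$. I would then fix an input set $B \subseteq C_0$ presented at all times $\geq t$ and verify the two clauses of Definition~\ref{def: recog-feedback} separately, using Lemma~\ref{lem: fires-supported-feedback} (firing at some time $\geq t$ iff membership in $supp_{r,f}(B)$) together with the monotonicity Lemma~\ref{lem: support-monotonic-2}.

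For the ``must fire'' clause, suppose $c \in supp_{r_2,f}(B)$. Since $r \leq r_2$ and $\mathcal C$ satisfies the limited-overlap property, Lemma~\ref{lem: support-monotonic-2} gives $supp_{r_2,f}(B) \subseteq supp_{r,f}(B)$, so $c \in supp_{r,f}(B)$. Then Lemma~\ref{lem: fires-supported-feedback} yields that $rep(c)$ fires at some time $\geq t$; if the first such time is $t$ itself (i.e.\ $level(c) = 0$ and $c \in B$), persistence via Lemma~\ref{lem: persistent firing} guarantees $rep(c)$ also fires at time $t+1$, so in all cases $rep(c)$ fires at some time after $t$, as required.

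For the ``must not fire'' clause, suppose $c \notin supp_{r_1,f}(B)$. Since $r \geq r_1$, Lemma~\ref{lem: support-monotonic-2} gives $supp_{r,f}(B) \subseteq supp_{r_1,f}(B)$, hence $c \notin supp_{r,f}(B)$. The contrapositive of Lemma~\ref{lem: fires-supported-feedback} then shows that $rep(c)$ does not fire at any time $\geq t$, and in particular not at any time after $t$.

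I do not expect a genuine obstacle here: essentially all the work has already been carried out in Lemma~\ref{lem: fires-supported-feedback} and its time-indexed refinement Lemma~\ref{lem: fires-supported-feedback-1}, which pin the network's firing behavior exactly to the $supp_{r,f}$ recursion, and in the monotonicity Lemma~\ref{lem: support-monotonic-2}. The only points needing a moment's care are confirming that $r \in [r_1, r_2]$ so that monotonicity applies in both directions, and reconciling the ``after $t$'' phrasing of Definition~\ref{def: recog-feedback} with the ``$\geq t$'' phrasing of Lemma~\ref{lem: fires-supported-feedback}, which is immediate from persistence of firing under a persistently presented input.
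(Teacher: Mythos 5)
Your proposal is correct and follows essentially the same route as the paper: set $r = \tau/k$, apply the monotonicity result (Lemma~\ref{lem: support-monotonic-2}) to move between $r_1$, $r$, and $r_2$, and then invoke the equivalence between firing and membership in $supp_{r,f}(B)$ (Lemma~\ref{lem: fires-supported-feedback}) for each clause of the recognition definition. Your extra remark reconciling ``after $t$'' with ``$\geq t$'' via persistence of firing is a small point of added care that the paper's proof passes over silently, but it does not change the argument.
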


\begin{proof}
Let $r = \frac{\tau}{k}$, where $\tau$ is the firing threshold for the non-input neurons of $\mathcal N$.
Assume that $B \subseteq C_0$ is presented at all times $\geq t$.  We prove the two parts of Definition~\ref{def: recog-feedback} separately.
\begin{enumerate}
\item
If $c \in supp_{r_2,f}(B)$ then $rep(c)$ fires at some time $\geq t$. 

Suppose that $c \in supp_{r_2,f}(B)$.  
By assumption, $\tau \leq r_2k$, so that $r = \frac{\tau}{k} \leq r_2$.
Then Lemma~\ref{lem: support-monotonic-2} implies that $c \in supp_{r,f}(B)$.
Then by Lemma~\ref{lem: fires-supported-feedback}, $rep(c)$ fires at some time $\geq t$.

\item
If $c \notin supp_{r_1,f}(B)$ then $rep(c)$ does not fire at any time $\geq t$.

Suppose that $c \notin supp_{r_1,f}(B)$.  
By assumption, $\tau \geq r_1 k$, so that $r = \frac{\tau}{k} \geq r_1$.
Then Lemma~\ref{lem: support-monotonic-2} implies that $c \notin supp_{r,f}(B)$.
Then by Lemma~\ref{lem: fires-supported-feedback}, $rep(c)$ does not fire at any time $\geq t$.
\end{enumerate}
\end{proof}

\subsection{Time Bounds for Tree Hierarchies in Networks with Feedback}
\label{sec: time-bounds-trees-feedback}

It remains to prove time bounds for recognition for hierarchical concepts in networks with feedback.  Now the situation turns out to be quite different for tree hierarchies and hierarchies that allow limited overlap.  In this section, we consider the simpler case of tree hierarchies.

For a tree network, one pass upward and one pass downward is enough to recognize all concepts, though that is a simplification of what actually happens, since much of the recognition activity is concurrent.  Still, for tree hierarchies, we can prove an upper bound of twice the number of levels:

\begin{theorem}
\label{th: time-bound-trees-feedback-1}
Assume $\mathcal C$ is a tree hierarchy and $\mathcal N$ is the network with feedback defined above, based on $\mathcal C$.  
Let $r = \frac{\tau}{k}$, where $\tau$ is the firing threshold for the non-input neurons of $\mathcal N$.

Assume that $B \subseteq C_0$ is presented at all times $\geq t$.
If $c \in supp_{r,f}(B)$, then $rep(c)$ fires at some time $\leq t + 2 \lmax$.
\end{theorem}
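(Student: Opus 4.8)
The plan is to chain together the support-side time bound for tree hierarchies (Theorem~\ref{lem: support-firing}) with the exact firing characterization for networks with feedback (Lemma~\ref{lem: fires-supported-feedback-1}). First I would set $\ell = level(c)$. Since $c \in supp_{r,f}(B) = \bigcup_{\ell',t} S(\ell',t)$ and every concept lives at its own level, we have $c \in supp_{r,f}(B,\ell,*)$. Applying Theorem~\ref{lem: support-firing} with this $\ell$ yields $c \in supp_{r,f}(B,\ell,2\lmax-\ell) = S(\ell, 2\lmax-\ell)$, using the notation $S(\ell,t) = supp_{r,f}(B,\ell,t)$.

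Next I would push this fact into the global stage-$2\lmax$ set. The sets $S(\ell,t)$ are nondecreasing in $t$ by construction (each $S(\ell,t)$ is defined as $S(\ell,t-1)$ together with possibly more concepts), and $2\lmax - \ell \le 2\lmax$ since $\ell \ge 0$; hence $c \in S(\ell, 2\lmax) \subseteq \bigcup_{\ell'} S(\ell', 2\lmax) = S(2\lmax)$, where $S(t) = supp_{r,f}(B,*,t)$ is the abbreviation introduced in Section~\ref{sec: time-bounds}.

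Finally I would invoke Lemma~\ref{lem: fires-supported-feedback-1}, which states that, under the hypothesis that $B$ is presented at all times $\ge t$ and with $r = \tau/k$ (exactly the setup of this theorem), $rep(c)$ fires at time $t'$ if and only if $c \in S(t'-t)$, for every $t' \ge t$. Taking $t' = t + 2\lmax$ gives $t' - t = 2\lmax$, and since $c \in S(2\lmax)$ we conclude that $rep(c)$ fires at time $t + 2\lmax$, which is the claimed bound.

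There is no genuine obstacle here beyond bookkeeping; the substantive content has already been done in Theorem~\ref{lem: support-firing} (which in turn rests on Lemma~\ref{lem: node-and-parent}), and the firing-versus-support correspondence was established in Lemma~\ref{lem: fires-supported-feedback-1}. The only points needing a little care are matching the support parameter $r$ with $\tau/k$ so that both cited results refer to the same family of sets, and using monotonicity of the $S(\ell,t)$ sets in $t$ so that a concept stabilized by step $2\lmax-\ell$ is still present at step $2\lmax$. (One could instead record the sharper bound $t + 2\lmax - \ell$, which Lemma~\ref{lem: fires-supported-feedback-1} delivers directly, but $t + 2\lmax$ suffices and keeps the statement uniform across levels.)
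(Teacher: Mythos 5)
Your proposal is correct and follows essentially the same route as the paper's own proof: invoke Theorem~\ref{lem: support-firing} to place $c$ in $S(2\lmax)$ and then apply Lemma~\ref{lem: fires-supported-feedback-1} with $t' = t + 2\lmax$. The only difference is that you make explicit the monotonicity step from $S(\ell, 2\lmax-\ell)$ to $S(2\lmax)$, which the paper leaves implicit.
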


\begin{proof}
Assume that $c \in supp_{r,f}(B)$.  
By Lemma~\ref{lem: support-firing}, we have that $c \in S(2 \lmax)$.
Then Lemma~\ref{lem: fires-supported-feedback-1} implies that $rep(c)$ fires at time $t + 2 \lmax$.
\end{proof}

And this result extends to larger thresholds:

\begin{corollary}
\label{cor: time-bound-trees-feedback}
Assume $\mathcal C$ is a tree hierarchy and $\mathcal N$ is the network with feedback as defined above, based on $\mathcal C$.
Assume that $B \subseteq C_0$ is presented at all times $\geq t$.
If $c \in supp_{r_2,f}(B)$, then $rep(c)$ fires at some time $\leq t + 2 \lmax$.
\end{corollary}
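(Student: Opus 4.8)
The plan is to obtain this corollary as an immediate consequence of Theorem~\ref{th: time-bound-trees-feedback-1}, combined with the monotonicity of the support sets in the parameter $r$. The only gap between the two statements is that Theorem~\ref{th: time-bound-trees-feedback-1} is phrased in terms of $supp_{r,f}(B)$ with $r = \frac{\tau}{k}$, whereas the corollary is phrased in terms of $supp_{r_2,f}(B)$; so the whole argument amounts to relating these two support sets.

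First I would set $r = \frac{\tau}{k}$, exactly as in Theorem~\ref{th: time-bound-trees-feedback-1}. Recall that the network $\mathcal N$ is defined with threshold $\tau = \frac{(r_1+r_2)k}{2}$, which lies in $[r_1 k, r_2 k]$; in particular $\tau \leq r_2 k$, so $r = \frac{\tau}{k} \leq r_2$. By Lemma~\ref{lem: support-monotonic-2} (monotonicity in $r$, for arbitrary $f$), we then have $supp_{r_2,f}(B) \subseteq supp_{r,f}(B)$. Hence the hypothesis $c \in supp_{r_2,f}(B)$ yields $c \in supp_{r,f}(B)$.

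At that point I would simply invoke Theorem~\ref{th: time-bound-trees-feedback-1} with this $c$: since $\mathcal C$ is a tree hierarchy, $\mathcal N$ is the feedback network defined above, $B$ is presented at all times $\geq t$, and $c \in supp_{r,f}(B)$, the theorem gives that $rep(c)$ fires at some time $\leq t + 2\lmax$, which is exactly the conclusion.

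There is no real obstacle here: the corollary is a packaging of Theorem~\ref{th: time-bound-trees-feedback-1} that substitutes the ``true'' recognition parameter $r_2$ for the network's internal effective parameter $r = \tau/k$, and the substitution is justified purely by the inequality $\tau \leq r_2 k$ together with the already-established monotonicity lemma. The one point worth stating explicitly in the write-up is why $r \leq r_2$, i.e.\ recalling the definition $\tau = \frac{(r_1+r_2)k}{2}$ and the bound $r_1 \leq r_2$; everything else is a direct citation.
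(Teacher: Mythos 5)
Your proposal is correct and matches the paper's proof, which is the one-line citation ``By Theorem~\ref{th: time-bound-trees-feedback-1} and Lemma~\ref{lem: support-monotonic-2}''; you have simply filled in the details of why $r = \tau/k \leq r_2$ and how the monotonicity lemma bridges $supp_{r_2,f}(B)$ to $supp_{r,f}(B)$. No issues.
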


\begin{proof}
By Theorem~\ref{th: time-bound-trees-feedback-1} and Lemma~\ref{lem: support-monotonic-2}.
\end{proof}

\hide{
To prove Theorem~\ref{th: time-bound-trees-feedback-1}, we use two lemmas.
The first lemma is about the subset $supp_{r,0}(B)$ of $supp_{r,f}(B)$.
These are the concepts that are supported by just their descendants, without any contribution from their parents.

\begin{lemma}
\label{lem: tree-upward}
Assume $\mathcal C$ and $\mathcal N$ are as in Theorem~\ref{th: time-bound-trees-feedback-1}.
Assume that $B \subseteq C_0$ is presented at all times $\geq t$.
If $c \in supp_{r,0}(B)$, then $rep(c)$ fires at some time $\leq t + level(c)$.
\end{lemma}

\begin{proof}
The argument is like that for Part 1 of Theorem~\ref{lem: fires-supported-ff}.
\end{proof}

The next lemma is a technical one.  It says that, if a $rep(c)$ neuron fires at a time when $rep(parent(c)$ doesn't fire, then it must be that $c$ is supported by just its children.

\begin{lemma}
\label{lem: tree-downward}
Assume $\mathcal C$ and $\mathcal N$ are as in Theorem~\ref{th: time-bound-trees-feedback-1}.
Suppose that a neuron $rep(c)$ for a concept $c$ with $0 \leq level(c) \leq \lmax-1$ fires at some time $t' \geq t$, but the neuron $rep(parent(c))$ does not fire at time $t'$.
Then $c \in supp_{r,0}(B)$.
\end{lemma}

\begin{proof}
By induction on $t'$.
The base case is $t' = t$.
Assume that $rep(c)$ fires at time $t$.
Then it must be that $c \in B \subseteq supp_{r,0}(B)$.

For the inductive step, suppose that, for some concept $c$ with $0 \leq level(c) \leq \lmax-1$, $rep(c)$ fires at time $t'+1$ but $rep(parent(c))$ does not fire at time $t'+1$.
If $level(c) = 0$ then $c \in B$ by assumption on inputs, so $c \in supp_{r,0}(B)$ by definition.
So assume that $level(c) \geq 1$.

Since $rep(parent(c))$ does not fire at time $t'+1$, it also does not fire at time $t'$ by Lemma~\ref{lem: persistent firing}.

If $rep(c)$ fires at time $t'$, then by inductive hypothesis, $c \in supp_{r,0}(B)$.
So now assume that $rep(c)$ does not fire at time $t'$.
Since $rep(c)$ fires at time $t'+1$, and $rep(parent(c))$ does not fire at time $t'$, $rep(c)$ must obtain sufficient incoming potential just from the $reps$ of its children who fire at time $t'$.
That means that at least $rk$ of these children fire at time $t'$.
Since $rep(c)$ does not fire at time $t'$, we may apply the inductive hypothesis to each of its children that fire at time $t'$ to conclude that all of its children that fire at time $t'$ are in $supp_{r,0}(B)$.
Thus, at least $rk$ children of $c$ are in $supp_{r,0}(B)$.
Therefore, $c \in supp_{r,0}$ by the definition of $supp_{r,0}$.
\end{proof}

We are now ready to prove the main theorem of this section:

\begin{proof}
(Of Theorem~\ref{th: time-bound-trees-feedback-1}:)
Suppose that $c \in supp_{r,f}(B)$.  
By Lemma~\ref{lem: tree-upward}, if $c \in supp_{r,0}(B)$ then $rep(c)$ fires by time $t + level(c)$.
Therefore, all reps of concepts in $supp_{r,0}$ fire by time $t + \lmax$.

Now, we consider the remaining concepts in $supp_{r,f}(B)$, which are not supported only by their descendants, that is, those in $supp_{r,f}(B) - supp_{r,0}(B)$.
Each of these is included in the set $supp_{r,f}(B)$ based on inclusion of its parent, as well as a subset of its children.
For these concepts, we use a backwards induction on the levels, from $\ell = \lmax$ down to $\ell = 1$, proving that, if $level(c) = \ell$ and $c \in supp_{r,f}(B) - supp_{r,0}$, then $rep(c)$ fires by time $t + 2 \lmax - \ell$.

For the base case, consider a concept $c \in supp_{r,f}(B) - supp_{r,0}(B)$ with $level(c) = \lmax$.  
Since $c$ has no parent, it must be that $c$ is supported only by its children, that is, $c \in  supp_{r,0}(B)$.  So this case is vacuous.

For the inductive step, going from $\ell$ to $\ell-1$, assume that $level(c) = \ell-1$ and $c \in supp_{r,f}(B) - supp_{r,0}$.
We must show that $rep(c)$ fires by time $t + 2 \lmax - \ell + 1$.
If $rep(c)$ fires by time $t + 2 \lmax - \ell$ then we are done, so assume that 
$rep(c)$ does not fire by time $t + 2 \lmax - \ell$.
That is, time $t + 2 \lmax - \ell + 1$ is its first firing time.

Since $c$ is in $supp_{r,f}(B) - supp_{r,0}$, it must be placed into the  $supp_{r,f}$ (in the recursive definition of $supp_{r,f}$) based on membership of $parent(c)$ and a subset of $children(c)$ in $supp_{r,f}$.

We claim that $rep(parent(c))$ fires by time $t + 2 \lmax - \ell$.
To see this, consider two cases:
If $parent(c) \in supp_{r,0}$, then by Lemma~\ref{lem: tree-upward}, $rep(parent(c))$ fires by time $t + \ell \leq t + 2 \lmax - \ell$.
On the other hand, if $parent(c) \notin supp_{r,0}$, then the inductive hypothesis implies that $rep(parent(c))$ fires by time $t + 2 \lmax - \ell$.

Also, any child of $c$ that is in $supp_{r,f}(B)$ must be in $supp_{r,0}(B)$, by Lemma~\ref{lem: tree-downward} and the fact that $rep(c)$ does not fire by time  $t + 2 \lmax - \ell$.

So Lemma~\ref{lem: tree-upward} implies that the $reps$ of these children all fire by time $t + 2 \lmax - \ell$.

So, $rep(parent(c))$ and the $reps$ of all children of $c$ that are in $supported_{r,f}(B)$ all fire by time $t + 2 \lmax - \ell$.
Since these are enough to put $c$ into $supported_{r,f}(B)$, they provide enough potential for $rep(c)$ to fire at the next time, which is $t + 2 \lmax - \ell + 1$.
This is as needed.
\end{proof}
}

\subsection{Time Bounds for General Hierarchies in Networks with Feedback}
\label{sec: time-bounds-recognition-overlap-feedback}

The situation gets more interesting when the hierarchy allows overlap.
We use the same network as before.
Each neuron gets inputs from its children and
parents at each round, and fires whenever its threshold is met.
As noted in Lemma~\ref{lem: fires-supported-feedback-1}, this network behavior follows the definition of  $supp_{r_2,f}(B)$.

In the case of a tree hierarchy, one pass upward followed by one pass downward suffice to recognize all concepts, though the actual execution involves more concurrency, rather than separate passes.
But with overlap, more complicated behavior can occur.
For example, an initial pass upward can activate some $rep$ neurons, which can then provide feedback on a downward pass to activate some other $rep$ neurons that were not activated in the upward pass.   
So far, this is as for tree hierarchies.
But now because of overlap, these newly-recognized concepts can in turn trigger more recognition on another upward pass, then still more on another downward pass, etc.
How long does it take before the network is guaranteed to stabilize?  

Here we give a simple upper bound and an example that yields a lower bound. Work is needed to pin the bound down more precisely.


\subsubsection{Upper bound}

We give a crude upper bound on the time to recognize all the concepts in a hierarchy.

\begin{theorem}
\label{th: time-bound-overlap-feedback}
Assume $\mathcal C$ is any hierarchy satisfying the limited-overlap property, and $\mathcal N$ is the network with feedback defined above, based on $\mathcal C$.
Assume that $B \subseteq C_0$ is presented at all times $\geq t$.
If $c \in supp_{r,f}(B)$, then $rep(c)$ fires at some time $\leq t + k^{\lmax+1}$.
\end{theorem}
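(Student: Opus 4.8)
The plan is to read off the result from Lemma~\ref{lem: fires-supported-feedback-1}, which says that $rep(c)$ fires at time $t'$ exactly when $c$ has entered the set $S(t'-t) := supp_{r,f}(B,*,t'-t)$ of the recursive $supp_{r,f}$-definition. So it suffices to show that the recursion $S(0), S(1), S(2), \dots$ reaches its final value within $k^{\lmax+1}$ rounds: once we know $S(k^{\lmax+1}) = supp_{r,f}(B,*,*)$, then $c \in supp_{r,f}(B)$ forces $c \in S(k^{\lmax+1})$, and Lemma~\ref{lem: fires-supported-feedback-1} applied with $t' = t + k^{\lmax+1}$ gives that $rep(c)$ fires at time $t + k^{\lmax+1}$.

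First I would record two elementary facts from Definition~\ref{def: supp2}: (i) $S(\ell,t-1) \subseteq S(\ell,t)$ for every $\ell,t$, so $S(t-1) \subseteq S(t)$, with $S(0,t) = B$ constant in $t$; and (ii) each $S(\ell,t)$ lies in the finite set $C_\ell$. The key structural point is that the recursion is \emph{memoryless}: $S(\ell,t)$ is determined by $S(\ell-1,t-1)$, $S(\ell,t-1)$, $S(\ell+1,t-1)$ alone (with the obvious boundary conventions at levels $0$ and $\lmax$). Hence if $S(t_0) = S(t_0-1)$ for some $t_0$ --- which by level-disjointness together with monotonicity is equivalent to $S(\ell,t_0) = S(\ell,t_0-1)$ for every $\ell$ --- then a straightforward induction on $t \ge t_0$ shows $S(\ell,t) = S(\ell,t_0-1)$ for all $\ell$ and all $t \ge t_0-1$; that is, once the union stops growing it is frozen forever. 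Consequently every strict increase of the chain $S(0) \subseteq S(1) \subseteq \cdots$ must occur at one of the consecutive rounds $1,2,\dots,m$ for some $m$.

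To bound $m$ I would use the potential $\Phi(t) = \sum_{\ell\ge 1}|S(\ell,t)|$: it is nonnegative, equal to $0$ at $t=0$ (case 2 of Definition~\ref{def: supp2}), nondecreasing, strictly increasing (by at least $1$) at each of the rounds $1,\dots,m$, and bounded above by $\sum_{\ell=1}^{\lmax}|C_\ell|$. A crude count of the hierarchy --- the $k$ top-level concepts have at most $k^{\lmax+1-\ell}$ descendants at level $\ell$, so $\sum_{\ell=1}^{\lmax}|C_\ell| \le \sum_{\ell=1}^{\lmax} k^{\lmax+1-\ell} \le k^{\lmax+1}$ (using $k\ge 2$; the case $k=1$ is a degenerate path, checked directly) --- then yields $m \le k^{\lmax+1}$, hence $S(k^{\lmax+1})$ is the final set, completing the proof as above.

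I do not expect a genuine obstacle, since the bound is deliberately crude; the two places that need care are the ``frozen forever'' claim (it really uses the memoryless form of the recursion, not just monotonicity of $S(t)$) and keeping the count tight enough to land on $k^{\lmax+1}$ rather than a constant multiple of it --- which is why $\Phi$ should range over levels $1,\dots,\lmax$ only (level $0$ never changes after round $0$) and be estimated by the geometric sum $\sum_{\ell=1}^{\lmax} k^{\lmax+1-\ell}$.
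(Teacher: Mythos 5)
Your proposal is correct and is essentially the paper's own argument: both bound the stabilization time by a counting argument, observing that the monotone sequence of "supported/firing" sets can strictly grow at most $k^{\lmax+1}$ times (the number of concepts at levels $\geq 1$) and is frozen forever once it stops growing, since the update is memoryless. The only cosmetic difference is that you run the count on the $S(t)$ recursion and translate back through Lemma~\ref{lem: fires-supported-feedback-1}, while the paper counts the firing $rep$ neurons directly (using Lemma~\ref{lem: no-rep-no-fire-2}); by that same correspondence lemma these are the same argument, and your explicit treatment of the ``frozen forever'' step and of the geometric sum is if anything slightly more careful than the paper's.
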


\begin{proof}
All the level $0$ concepts in $B$ start firing at time $0$.
We consider how long it might take, in the worst case, for the $reps$ of all the concepts in $supp_{r,f}(B)$ with levels $\geq 1$ to start firing.

The total number of concepts in $C$ with levels $\geq 1$ is at most $k^{\lmax+1}$; therefore, the number of concepts in $supp_{r_,f}(B)$ with levels $\geq 1$ is at most $k^{\lmax+1}$.

By Lemma~\ref{lem: no-rep-no-fire-2}, the $rep$ neurons are the only ones that ever fire.
Therefore, the firing set stabilizes at the first time $t'$ such that the sets of $rep$ neurons that fire at times $t'$ and time $t'+1$ are the same.
Since there are at most $k^{\lmax+1}$ $rep$ neurons with levels $\geq 1$, the worst case is if one new $rep$ starts firing at each time.  
But in this case the firing set stabilizes by $t + k^{\lmax+1}$, as claimed.
\end{proof}

The bound in Theorem~\ref{th: time-bound-overlap-feedback} may seem very pessimistic.
However, the example in the next subsection shows that it is not too far off, in particular, it shows that the time until all the $reps$ fire can be exponential in $\lmax$.

\subsubsection{Lower bound}

Here we present an example of a concept hierarchy $\mathcal C$ and an input set $B$ for which the time for the $rep$ neurons for all the supported concepts to fire is exponential in $\lmax$.
This yields a lower bound, in Theorem~\ref{thm: big-lower-bound}.

The concept hierarchy $\mathcal C$ has levels $0,\ldots,\lmax$ as usual.
We assume here that $r_1 = r_2 = r$.
We assume that the overlap bound $o$ satisfies $o \cdot k \geq 2$, that is, the allowed overlap is at least $2$.
We take $f = 1$.

The network $\mathcal N$ embeds $\mathcal C$, as described earlier in this section.
As before, we assume that $\ell'_{max} = \lmax$, and
the threshold $\tau$ for the non-input nodes in the network is $r k$.
Now we assume that the weights are $1$ for both upward and downward edges between $reps$ of concepts in $C$, which is consistent with our choice of $f=1$ in the concept hierarchy.

We assume that hierarchy $\mathcal C$ has overlap only at one level---in the sets of children of level $2$ concepts.
The upper portion of $\mathcal C$, consisting of levels $2,\ldots \lmax$, is a tree, with no overlap among the sets $children(c)$, $3 \leq level(c) \leq \lmax$.
There is also no overlap among the sets of children of level $1$ concepts.

We order the children of each concept with $level \geq 3$ in some arbitrary order, left-to-right.
This orients the upper portion of the concept hierarchy, down to the level $2$ concepts.
Let $C'$ be the set of all the level $2$ concepts that are leftmost children of their parents.
Since there are $k^{\lmax -2}$ level $3$ concepts, it follows that $|C'| = k^{\lmax -2}$.
Number the elements of $C'$ in order left-to-right as $c_1,\ldots,c_{k_{\lmax -2}}$.
Also, for every concept $c_i$ in $C'$, order its $k$ children in some arbitrary order, left-to-right, and number them $1$ through $k$.

Now we describe the overlap between the sets of children of the level $2$ concepts $c_i$, $1 \leq i \leq k_{\lmax -2}$.
The first $k-1$ children of $c_1$ are unique to $c_1$, whereas its $k^{th}$ child is shared with $c_2$.
For $i =k_{\lmax -2}$, the last $k-1$ children of $c_i$ are unique to $c_i$, whereas its first child is shared with $c_{i-1}$.
For each other index $i$, the middle $k-2$ children of $c_i$ are unique to $c_i$, whereas its first child is shared with $c_{i-1}$, and its $k^{th}$ child is shared with $c_{i+1}$.
%
There is no other sharing in $\mathcal C$.

Next, we define the set $B$ of level $0$ concepts to be presented to the network.
$B$ consists of the following grandchildren of the level $2$ concepts in $C'$:

\begin{enumerate}
\item Grandchildren of $c_1$:
  \begin{enumerate}
  \item
    All the (level $0$) children of the children of $c_1$ numbered $1,\ldots, \lceil r k \rceil$,  and
  \item
    $\lceil r k \rceil - 1$ of the (level $0$) children of the $k^{th}$ child of $c_1$, which is also the first child of $c_2$.
  \end{enumerate}
\item Grandchildren of each $c_i$, $2 \leq i \leq k^{\lmax-2} - 1$:
  \begin{enumerate}
    \item
    $\lceil r k \rceil - 1$ of the (level $0$) children of the first child of $c_i$, which is also the $k^{th}$ child of $c_{i-1}$ (this has already been specified, just above),
  \item
    All the (level $0$) children of the children of $c_i$ numbered $2,\ldots, \lceil r k \rceil$, and
  \item
    $\lceil r k \rceil - 1$ of the (level $0$) children of the $k^{th}$ child of $c_i$, which is also the first child of $c_{i+1}$.
  \end{enumerate}
  \item Grandchildren of $c_i$, $i = k^{\lmax-2}$:
    \begin{enumerate}
            \item
    $\lceil r k \rceil - 1$ of the (level $0$) children of the first child of $c_i$, which is also the $k^{th}$ child of $c_{i-1}$ (this has already been specified, just above), and
      \item
    All the (level $0$) children of the children of $c_i$ numbered $2,\ldots, \lceil r k \rceil$.
  \end{enumerate}
\end{enumerate}

Figure~\ref{fig: overlap-lowerbound} illustrates a sample overlap pattern, for level $2$ neurons $c_1, c_2, c_3,...c_m$, where $m = k^{\lmax - 2}$.
Here we use $k = 4$, $r = 3/4$, and $o = 1/2$.

\vspace{-.3cm}
\begin{figure}[h]
\includegraphics[width = 6.5in]{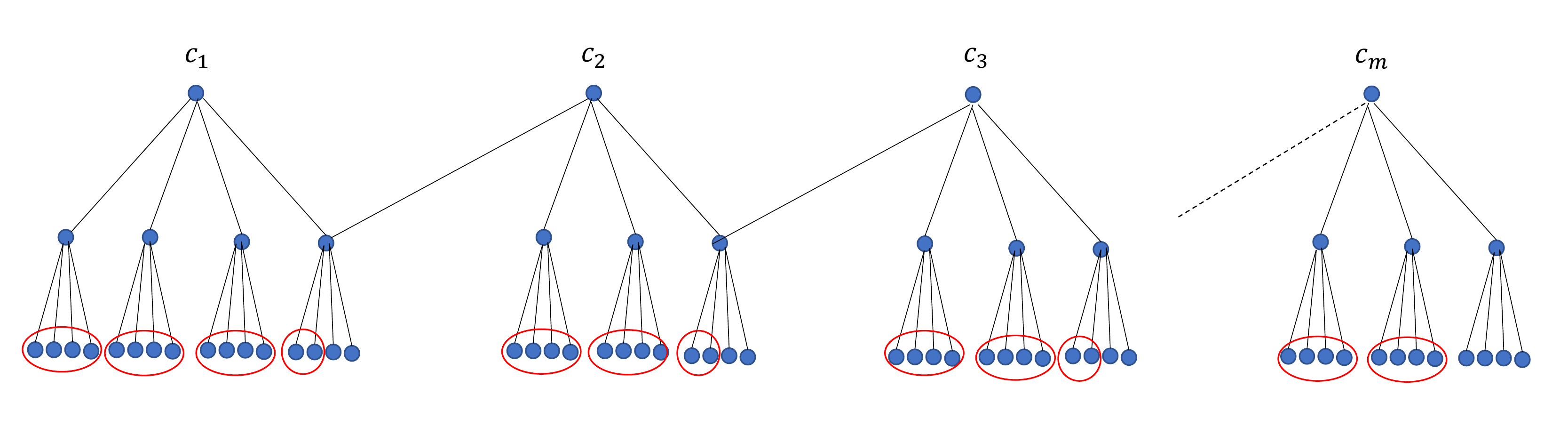}
\caption{Concept hierarchy with overlap, and input set.}
\label{fig: overlap-lowerbound}
\end{figure}

\begin{theorem}
\label{thm: big-lower-bound}
Assume $\mathcal C$ is the concept hierarchy defined above, and $\mathcal N$ is the network with feedback defined above, based on $\mathcal C$.
Let $B$ be the input set just defined, and assume that $B$ is presented at all times $\geq t$.
Then the time required for the $rep$ neurons for all concepts in $supp_{r,f}(B)$ to fire is at least $2 (k^{\lmax}-2)$.
\end{theorem}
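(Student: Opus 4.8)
The plan is to reduce the theorem to the combinatorial support dynamics via Lemma~\ref{lem: fires-supported-feedback-1}. For this network $r=\tau/k=r_1=r_2=r$ and the downward weights equal $f=1$, and that lemma says $rep(c)$ fires at time $t'$ iff $c\in S(t'-t)$, where $S(s)=supp_{r,f}(B,*,s)$. So it suffices to exhibit one concept $c$ with $c\in supp_{r,f}(B)$ — which, by Lemma~\ref{lem: fires-supported-feedback}, forces $rep(c)$ to fire at some time $\ge t$, so $rep(c)$ is among the neurons whose firing is required — such that $c\notin S(s)$ for every $s\le 2m-1$, where $m=|C'|=k^{\lmax-2}$. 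I would take $c=c_m$, the last concept of the chain $c_1,\dots,c_m$.

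The core of the argument is a single induction on the time parameter $t$ that pins down the sets $S(\ell,t)$ exactly. Let $U$ be the set of level-$1$ concepts all of whose leaves lie in $B$ (children $1,\dots,\lceil rk\rceil$ of $c_1$ and children $2,\dots,\lceil rk\rceil$ of each $c_i$, $i\ge 2$), and for $1\le i\le m-1$ let $v_i$ be the level-$1$ concept shared by $c_i$ and $c_{i+1}$ (the $k$-th child of $c_i$, equivalently the first child of $c_{i+1}$), which has exactly $\lceil rk\rceil-1$ leaves in $B$. I claim that for all $t$: $S(0,t)=B$; $S(1,0)=\emptyset$ and $S(1,t)=U\cup\{v_i : 2i+1\le t\}$ for $t\ge1$; $S(2,t)=\{c_i : 2i\le t\}$; and $S(\ell,t)=\emptyset$ for every $\ell\ge3$. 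This is verified by induction on $t$ from the recursion in Definition~\ref{def: supp2}; the salient points are: (a) each member of $U$ has $k\ge rk$ leaves in $B$, so it is in $S(1,1)$; (b) a $v_i$ has only $\lceil rk\rceil-1<rk$ leaves in $B$, so (since $f=1$ and $(\lceil rk\rceil-1)+1=\lceil rk\rceil\ge rk$) it enters $S(1,\cdot)$ one step after the first of its two parents does, namely $c_i$ (which enters at step $2i$), hence at step $2i+1$; (c) a $c_i$ with $i\ge2$ has the $\lceil rk\rceil-1$ members of $U$ among its children always available but still needs its first child $v_{i-1}$ to reach the threshold $rk$, so it enters $S(2,\cdot)$ one step after $v_{i-1}$, i.e.\ at step $2i$, while $c_1$ already has $\lceil rk\rceil\ge rk$ children in $U$ and enters at step $2$; (d) no level-$\ge3$ concept ever enters, because a level-$3$ concept is the parent of exactly one $c_i$ and its other $k-1$ level-$2$ children have no leaves in $B$ and (by induction, level $4$ being empty) never receive downward support, so it has at most one supported neighbour and $1<rk$; and (e) for the same reason, the "extra" level-$1$ children $\lceil rk\rceil+1,\dots,k-1$ of the $c_i$, and all children of level-$2$ concepts outside $C'$, never enter $S(1,\cdot)$. (This uses $rk>1$, which the construction anyway needs, since it assumes $2\le\lceil rk\rceil\le k-1$.)

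Granting the characterization, $c_m\in S(2,2m)\subseteq supp_{r,f}(B)$, so $rep(c_m)$ must eventually fire; but $c_m\notin S(2,2m-1)=\{c_1,\dots,c_{m-1}\}$, hence $c_m\notin S(2m-1)$, so by Lemma~\ref{lem: fires-supported-feedback-1}, $rep(c_m)$ does not fire at any time $\le t+2m-1$. Therefore $rep(c_m)$ first fires no earlier than $t+2m$, and since $m=k^{\lmax-2}$, the time needed for the $rep$ neurons of all concepts in $supp_{r,f}(B)$ to fire is at least $2m=2k^{\lmax-2}$ — exponential in $\lmax$, as claimed. The main obstacle is the two "no earlier than" directions of the characterization: that the upper tree never fires (point (d)), and that each $c_i$ and each $v_i$ does not enter its support set before the stated step (points (b),(c)). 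The latter is where the overlap structure of $B$ — only $\lceil rk\rceil-1$ leaves beneath each shared child, so that one feedback edge is exactly what tips the threshold — is used in an essential way, and it must be threaded through the single induction on $t$ so as not to become circular.
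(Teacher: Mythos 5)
Your proposal is correct and is essentially the paper's own argument: the paper traces the network's firing wave directly ($rep(c_i)$ starts firing at time $2i$, with the shared child $v_{i-1}$ tipping the threshold one step earlier via feedback), while you trace the identical two-steps-per-link propagation in the support sets $S(\ell,t)$ and transfer it to firing via Lemma~\ref{lem: fires-supported-feedback-1}; your version is somewhat more careful, since you justify by induction on $t$ the ``no other neuron enters'' claims that the paper merely asserts at each step. One small discrepancy: both your argument and the paper's actually yield the bound $2k^{\lmax-2}$ (the last chain element $c_{k^{\lmax-2}}$ becomes supported, hence fires, only at step $2k^{\lmax-2}$), so the theorem's stated bound $2(k^{\lmax}-2)$ appears to be a typo for $2k^{\lmax-2}$ --- still exponential in $\lmax$, as you note.
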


\begin{proof}
The network behaves as follows:
\begin{itemize}
    \item Time $0$:  Exactly the $reps$ of concepts in $B$ fire.
    \item Time $1$:  The $reps$ of the (level $1$) children of $c_1$ numbered $1,\ldots,\lceil rk \rceil$ begin firing.
    Also, for every $c_i$, $2 \leq i \leq k^{\lmax-2}$, the $reps$ of the (level $1$) children numbered $2,\ldots,\lceil rk \rceil$ begin firing.
    This is because all of these neurons receive enough potential from the $reps$ of their (level $0$) children that fired at time $0$, to trigger firing at time $1$.
    No other neuron receives enough potential to begin firing at time $1$.
    \item Time $2$:
    Neuron $rep(c_1)$ begins firing, since it receives enough potential from the $reps$ of its first $\lceil r k \rceil$ children.
    No other neuron receives enough potential to begin firing at time $2$.
    \item Time $3$:
    Now that $rep(c_1)$ has begun firing, it begins contributing potential to the $reps$ of its children, via feedback edges.
    This potential is enough to trigger firing of the $rep$ of the (level $1$) $k^{th}$ child of $c_1$, when it is added to the potential from the $reps$ of that child's own level $0$ children.
    So, at time $3$, the $rep$ of the $k^{th}$ child of $c_1$ begins firing.
    No other neuron receives enough potential to begin firing at time $3$.
    \item Time $4$:
    The $k^{th}$ child of $c_1$ is also the first child of $c_2$.
    So its $rep$ contributes potential to $rep(c_2)$.
    This is enough to trigger firing of $rep(c_2)$, when added to the potential from the $reps$ of $c_2$'s already-firing children.
    So, at time $4$, $rep(c_2)$ begins firing.
    No other neuron receives enough potential to begin firing at time $4$.
    \item Time $5$:  
    Now that $rep(c_2)$ has begun firing, it contributes potential to the $reps$ of its children, via feedback edges.
    This is enough to trigger firing of the $rep$ of the (level $1$) $k^{th}$ child of $c_2$, when added to the potential from the $reps$ of that child's own level $0$ children.
    So, at time $5$, the $rep$ of the $k^{th}$ child of $c_2$ begins firing.
    No other neuron begins firing at time $3$.
    \item Time $6$:  In analogy with that happens at time $4$, neuron $rep(c_3)$ begins firing at time $6$, and no other neuron begins firing.
    \item $\ldots$
    \item Time $2 (k^{\lmax}-2)$:  Continuing in the same pattern, neuron $rep(c_{k^{\lmax-2}})$ begins firing at time $2 (k^{\lmax}-2)$.
\end{itemize}
Thus, the time to recognize concept $c_{k^{\lmax-2}}$ is exactly $2 (k^{\lmax}-2)$, as claimed.
\end{proof}

\subsection{Approximate Weights} 
\label{sec: uncertain-weights-2}

Now, as in Section~\ref{sec: uncertain-weights1}, we allow the weights to be specified only approximately.
We assume that $0 \leq w_1 \leq w_2$, as before.  Here we also assume that $b \geq 2$ and $k \geq 2$.

Let $u$ be any layer $\ell$ neuron and $v$ any layer $\ell+1$ neuron.
We define the weight for the upward edge $(u,v)$ by:
\[ weight(u,v) \in 
\begin{cases}
[w_1,w_2], & \text{ if } u,v \in R \text{ and } rep^{-1}(u) \in children(rep^{-1}(v)),
\\ 
[0,k^{\lmax+b}]  & \text{ otherwise.}
\end{cases}
\]
For the downward edge $(v,u)$, we define:
\[ weight(v,u) \in 
\begin{cases}
[f w_1, f w_2] & \text{ if } u,v \in R \text{ and } rep^{-1}(u) \in children(rep^{-1}(v)),
\\ 
[0,k^{\lmax+b}]  & \text{ otherwise.}
\end{cases} \]
As before, we set the threshold $\tau = \frac{(r_1  + r_2) k}{2}$.  We also use the trivial assumption that $\tau > \frac{1}{k^{b-2}}$.

For this case, we  prove:

\begin{theorem}
\label{th: overlap-ff-uncertain}
Assume $\mathcal C$ is any concept hierarchy satisfying the limited-overlap property, and $\mathcal N$ is the feed-forward network defined above, based on $\mathcal C$.
Assume that $\frac{(r_1+r_2)k}{2} \geq \frac{1}{k^{b-2}}$.
Suppose that $r_1$ and $r_2$ satisfy the following inequalities:
\begin{enumerate}
    \item $r_2 (2 w_1 - 1) \geq r_1$.
    \item $r_2 \geq r_1(2w_2 - 1) + \frac{2}{k^{b-1}}$.
\end{enumerate}
Then ${\cal N}$ $(r_1,r_2,f)$-\emph{recognizes} $\mathcal C$.
\end{theorem}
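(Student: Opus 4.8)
The plan is to imitate the proof of Theorem~\ref{th: overlap-ff-uncertain weights}, but to run the inductions over the time-indexed support recursion of Definition~\ref{def: supp2} in the manner of Lemma~\ref{lem: fires-supported-feedback-1}, since recognition in networks with feedback concerns \emph{eventual} firing rather than firing at a level-determined instant. Write $S_r(s)$ for $supp_{r,f}(B,*,s)$; the two relevant instances are $S_{r_1}$ and $S_{r_2}$, and recall $r_1 \le \tau/k = \frac{r_1+r_2}{2} \le r_2$.

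First I would re-establish the two structural facts of Section~\ref{sec: recog-feedback-basic} in this approximate-weight setting. The analog of Lemma~\ref{lem: no-rep-no-fire-2} (once $B\subseteq C_0$ is presented, only neurons in $R$ ever fire) goes by induction on time: a neuron outside $R$ has only off-pattern incoming edges, and by the inductive hypothesis the neurons firing one step earlier are all $reps$ of concepts in $C$, of which at most $k^{\lmax+1}$ lie on each of the two adjacent layers, so along off-pattern edges of weight at most $\frac{1}{k^{\lmax+b}}$ its potential is at most $\frac{1}{k^{b-2}} < \tau$ (using $b\ge2$, $k\ge2$, and the trivial hypothesis on $\tau$). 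Lemma~\ref{lem: persistent firing} carries over verbatim, since its proof uses only nonnegativity of the weights and persistence of the input: once $rep(c)$ fires, it fires forever.

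The core is a two-sided, time-indexed firing characterization, proved by parallel inductions on $s$ as in Lemma~\ref{lem: fires-supported-feedback-1}. For the \emph{if} direction, suppose $c \in S_{r_2}(s)$; if $c \in S_{r_2}(s-1)$ this is the inductive hypothesis plus persistence, and otherwise, writing $p = |children(c)\cap S_{r_2}(s-1)|$ and $q = |parents(c)\cap S_{r_2}(s-1)|$, we have $p + fq \ge r_2 k$, the corresponding $reps$ fire at time $t+s-1$ by the inductive hypothesis, and since an upward pattern edge has weight $\ge w_1$ and a downward pattern edge has weight $\ge fw_1$, $pot^{rep(c)}(t+s) \ge w_1(p+fq) \ge w_1 r_2 k \ge \frac{(r_1+r_2)k}{2} = \tau$ by inequality~1, so $rep(c)$ fires at time $t+s$. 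For the \emph{only-if} direction, suppose $rep(c)$ fires at time $t+s$; the base case is as in Lemma~\ref{lem: fires-supported-feedback-1}, and for $level(c) \ge 1$ we have $pot^{rep(c)}(t+s) \ge \tau$, every neuron firing at time $t+s-1$ is a $rep$ (by the $R$-membership fact and the inductive hypothesis), the firing children of $c$ carry upward weight $\le w_2$, the firing parents carry downward weight $\le fw_2$, and the at most $2k^{\lmax+1}$ stray firing $reps$ on the two incident layers contribute at most $\frac{1}{k^{b-2}}$ altogether (here $k\ge2$), so with $p' = |children(c)\cap S_{r_1}(s-1)|$ and $q' = |parents(c)\cap S_{r_1}(s-1)|$ we get $\tau \le w_2(p'+fq') + \frac{1}{k^{b-2}}$; inequality~2, in the form $\tau - \frac{1}{k^{b-2}} \ge r_1 k w_2$, then yields $p'+fq' \ge r_1 k$, i.e.\ $c \in S_{r_1}(s)$. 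Combining the two directions gives Definition~\ref{def: recog-feedback}: if $c \in supp_{r_2,f}(B) = \bigcup_s S_{r_2}(s)$ then $rep(c)$ fires at some time $\ge t$, and if $rep(c)$ fires at a time $t' = t+s \ge t$ then $c \in S_{r_1}(s) \subseteq supp_{r_1,f}(B)$.

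The step I expect to be the main obstacle is the stray-potential bookkeeping in the \emph{only-if} direction (and in the $R$-membership lemma): in contrast with the feed-forward case, off-pattern potential now reaches a neuron simultaneously along upward and downward edges, so one must check that the combined contribution over both incident layers is still at most $\frac{1}{k^{b-2}}$ — which is exactly what the extra hypotheses $b \ge 2$ and $k \ge 2$ provide. Everything else is a mechanical merge of the feedback induction of Lemma~\ref{lem: fires-supported-feedback-1} with the $w_1,w_2$ slack arguments of Lemma~\ref{lem: fires-supported-ff-uncertain}.
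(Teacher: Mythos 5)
Your proposal is correct and follows essentially the same route as the paper: the paper also factors the theorem through an only-$reps$-ever-fire lemma (proved by induction on time with the same $\frac{1}{k^{b-2}}$ stray-potential bound over the two incident layers) and a two-sided, time-indexed correspondence between firing and the $supp_{r_2,f}$ / $supp_{r_1,f}$ recursions, using inequality~1 in the form $r_2 k w_1 \ge \tau$ and inequality~2 in the form $\tau - \frac{1}{k^{b-2}} \ge r_1 k w_2$ exactly as you do. The only cosmetic difference is that the paper counts the stray $reps$ as $k^{\lmax+1}+k^{\lmax-1}$ rather than your cruder $2k^{\lmax+1}$; both bounds suffice given $k\ge 2$.
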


This theorem follows directly from the following lemmas:

\begin{lemma}
\label{lem: no-rep-no-fire-feedback-uncertain}
Assume $\mathcal C$ is any concept hierarchy satisfying the limited-overlap property, and $\mathcal N$ is the network defined above, based on $\mathcal C$.  
Assume that $\frac{(r_1 + r_2)k}{2} > \frac{1}{k^{b-2}}$.

Assume that $B \subseteq C_0$ is presented at all times $\geq t$. 
If $u$ is a neuron that fires at any time $t' \geq t$, then $u = rep(c)$ for some concept $c \in C$.  
\end{lemma}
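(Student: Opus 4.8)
The plan is to mirror the proof of Lemma~\ref{lem: no-rep-no-fire-uncertain}, with one structural change. In a network with feedback, the potential of a layer-$\ell$ neuron at time $t'+1$ draws on the firing pattern of both layer $\ell-1$ and layer $\ell+1$ at time $t'$, so there is no ordering of the layers to induct along (this is exactly the phenomenon motivating this section). Time, however, is still well-ordered, since $pot^u(t'+1)$ depends only on firing at time $t'$. So I would prove, by induction on $t'$ for $t' \geq t$, the statement: \emph{every neuron that fires at time $t'$ lies in $R$}. For the exact-weight version (Lemma~\ref{lem: no-rep-no-fire-2}) a one-line argument suffices because non-$rep$ neurons have no positive incoming edges, but here the ``otherwise'' edges carry a small positive weight, so a counting argument is needed.

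For the base case $t' = t$, Definition~\ref{def:presented} says the neurons firing at time $t$ are exactly the $rep(b)$ for $b \in B \subseteq C_0 \subseteq C$, all in $R$. For the inductive step, assume the statement at time $t'$ and let $u$ fire at time $t'+1$. If $layer(u) = 0$, then since $B$ is presented at all times $\geq t$, $u$'s firing is governed by the presented input, so $u = rep(b)$ for some $b \in B$ and we are done. Otherwise $layer(u) \geq 1$, and I argue the contrapositive: suppose $u \notin R$. Then every edge incoming to $u$, upward or downward, has weight at most $\frac{1}{k^{\lmax+b}}$, since $u$ is not the target of any of the large-weight parent/child edges. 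By the inductive hypothesis, the only incoming neighbours of $u$ that fire at time $t'$ are $reps$ of concepts in $C$. A neuron at layer $\ell$ with $1 \le \ell \le \lmax-1$ has incoming neighbours only in layers $\ell-1$ and $\ell+1$, and any layer contains at most $k^{\lmax+1}$ $reps$ (there are at most $k^{\lmax-\ell'+1} \le k^{\lmax+1}$ concepts at any level $\ell'$); a layer $\lmax$ neuron has incoming neighbours only in layer $\lmax-1$, hence even fewer. So at most $2 k^{\lmax+1}$ of $u$'s incoming neighbours fire at time $t'$, and
\[ pot^u(t'+1) \ \le \ 2 k^{\lmax+1} \cdot \frac{1}{k^{\lmax+b}} \ = \ \frac{2}{k^{b-1}} \ \le \ \frac{1}{k^{b-2}}, \]
where the last inequality uses $k \ge 2$. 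Since the lemma's hypothesis gives $\tau = \frac{(r_1+r_2)k}{2} > \frac{1}{k^{b-2}}$, neuron $u$ cannot reach its firing threshold at time $t'+1$, contradicting the choice of $u$. Hence $u \in R$, completing the induction, and the lemma follows.

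I expect the only real subtlety to be bookkeeping: getting the factor of $2$ from the two adjacent layers and checking that it is absorbed by the extra power of $k$ in the hypothesis $\tau > \frac{1}{k^{b-2}}$ (as opposed to $\tau > \frac{1}{k^{b-1}}$ in the feed-forward Lemma~\ref{lem: no-rep-no-fire-uncertain}), together with treating the boundary layers separately — layer $0$ because input-neuron firing is externally controlled, and layer $\lmax$ which is strictly easier than the generic case since those neurons have no incoming downward edges. No new idea beyond Lemma~\ref{lem: no-rep-no-fire-uncertain} is needed once the induction is switched from layers to time.
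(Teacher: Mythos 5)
Your proposal is correct and follows essentially the same route as the paper's proof: induction on time $t'$ rather than on layers, the contrapositive assumption $u \notin R$ forcing all incoming weights down to $\frac{1}{k^{\lmax+b}}$, a count of the firing incoming neighbours (the paper uses the slightly tighter bound $k^{\lmax+1}+k^{\lmax-1}$ where you use $2k^{\lmax+1}$, but both are absorbed into $\frac{1}{k^{b-2}}$ using $k \geq 2$), and the comparison with the threshold hypothesis $\tau > \frac{1}{k^{b-2}}$. Your explicit handling of the boundary layers is a minor addition the paper leaves implicit; no substantive difference.
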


\begin{proof}
By induction on the time $t' \geq t$, we show:
If $u$ is a neuron that fires at time $t'$, then $u = rep(c)$ for some concept $c \in C$.  
For the base case, $t'=t$, if $u$ fires at time $t$ then $u = rep(c)$ for some $c \in B$, by assumption.

For the inductive step, consider any neuron $u$ that fires at time $t'+1$, where $t' \geq t$.
Assume for contradiction that $u$ is not of the form $rep(c)$ for any $c \in C$.
Then the weight of each edge incoming to $u$ is at most $k^{\lmax+b}$.
By inductive hypothesis, the only incoming neighbors that fire at time $t'$ are $reps$ of concepts in $C$.
There are at most $k^{\lmax+1} + k^{\lmax-1}$ concepts at the two layers above and below $layer(u)$, hence at most $k^{\lmax+1} + k^{\lmax-1}$ neighbors of $u$ that fire at time $t'$, yielding a total incoming potential for $u$ for time $t'+1$ of at most
$\frac{k^{\lmax+1} + k^{\lmax-1}}{k^{\lmax+b}} = \frac{1}{k^{b-1}} + \frac{1}{k^{b+1}}$.
Since $k \geq 2$, this bound on potential is at most $\frac{1}{k^{b-2}}$.
Since the threshold $\tau = \frac{(r_1 + r_2)k}{2}$ is assumed to be strictly greater than $\frac{1}{k^{b-2}}$, $u$ does not receive enough incoming potential to meet the firing threshold for time $t'+1$.
\end{proof}

\begin{lemma}
\label{lem: fires-supported-feedback-uncertain} 
Assume $\mathcal C$ is any concept hierarchy satisfying the limited-overlap property, and $\mathcal N$ is the network with feedback as defined above, based on $\mathcal C$.
Assume that $\frac{(r_1+r_2)k}{2} > \frac{1}{k^{b-2}}$.
Also suppose that $r_1$ and $r_2$ satisfy the following inequalities:
\begin{enumerate}
    \item $r_2(2w_1-1) \geq r_1$.
    \item $r_2 \geq r_1(2w_2 - 1) + \frac{2}{k^{b-1}}$.
\end{enumerate}
Assume that $B \subseteq C_0$ is presented at all times $\geq t$.
If $c$ is any concept in $C$, then:
\begin{enumerate}
    \item If $c \in supp_{r_2,f}(B)$ then $rep(c)$ fires at some time $\geq t$.
    \item If $rep(c)$ fires at some time $\geq t$ then $c \in supp_{r_1,f}(B)$.
\end{enumerate}
\end{lemma}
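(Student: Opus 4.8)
The plan is to follow the proof of Lemma~\ref{lem: fires-supported-feedback-1} (the exact-weight feedback case) almost line for line, replacing the exact identities there by the two-sided bounds coming from $w_1$ and $w_2$, and absorbing the spurious potential contributed by non-$rep$ neurons exactly as in the proof of Lemma~\ref{lem: fires-supported-ff-uncertain}. Abbreviate $S_{r_2}(s) = supp_{r_2,f}(B,*,s)$ and $S_{r_1}(s) = supp_{r_1,f}(B,*,s)$. Since each union $\bigcup_s S(\ell,s)$ converges to a finite set, it is enough to prove the two refinements below, after which the two parts of the lemma follow by taking $s$ large enough that $S_{r_i}(s) = supp_{r_i,f}(B)$: \emph{(1$'$)} for every $s \ge 0$ and every $c \in S_{r_2}(s)$, the neuron $rep(c)$ fires at every time $\ge t+s$; and \emph{(2$'$)} for every $t' \ge t$ and every concept $c$, if $rep(c)$ fires at time $t'$ then $c \in S_{r_1}(t'-t)$.

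For (1$'$) I would induct on $s$. The base case $s = 0$ is $S_{r_2}(0) = B$, whose $reps$ fire at all times $\ge t$ by the hypothesis that $B$ is presented at all times $\ge t$. For the inductive step take $c \in S_{r_2}(s+1)$; if $c \in S_{r_2}(s)$ or $level(c) = 0$ we are done by the inductive hypothesis (respectively by the input hypothesis), so assume $level(c) \ge 1$ and that $c$ enters only at step $s+1$, i.e.\ $|children(c) \cap S_{r_2}(s)| + f\,|parents(c) \cap S_{r_2}(s)| \ge r_2 k$. By the inductive hypothesis, the $rep$ of every child or parent of $c$ appearing in this expression fires at every time $\ge t+s$. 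Hence for any $t'' \ge t+s+1$, at time $t''-1$ all those $reps$ fire, contributing through the upward child edges (each of weight $\ge w_1$) and the downward parent edges (each of weight $\ge f w_1$) a total of at least $w_1\big(|children(c)\cap S_{r_2}(s)| + f\,|parents(c)\cap S_{r_2}(s)|\big) \ge w_1 r_2 k$; every remaining incoming contribution is nonnegative, so this lower-bounds $pot^{rep(c)}(t'')$. Inequality~1, $r_2(2w_1 - 1) \ge r_1$, is exactly $w_1 r_2 k \ge \frac{(r_1+r_2)k}{2} = \tau$, so $rep(c)$ fires at $t''$. Threading persistence through the induction in this way avoids re-proving Lemma~\ref{lem: persistent firing} for approximate weights, although that lemma does carry over verbatim, its proof using only nonnegativity of the weights.

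For (2$'$) I would induct on $t'$. The base case $t' = t$ holds because, by Definition~\ref{def:presented}, the neurons firing at time $t$ are exactly the $reps$ of $B = S_{r_1}(0)$. For the inductive step suppose $rep(c)$ fires at time $t'+1$; if $level(c) = 0$ then $c \in B \subseteq S_{r_1}((t'+1)-t)$, so assume $level(c) \ge 1$. By Lemma~\ref{lem: no-rep-no-fire-feedback-uncertain}, only $reps$ of concepts in $C$ fire at time $t'$; split the $reps$ that contribute to $pot^{rep(c)}(t'+1)$ into (i) $reps$ of children of $c$ (edge weight $\le w_2$), (ii) $reps$ of parents of $c$ (edge weight $\le f w_2$), and (iii) all other firing $reps$, of which there are at most $k^{\lmax+1} + k^{\lmax-1}$ at the two layers adjacent to $layer(rep(c))$, each reached by an edge whose weight is at most $\frac{1}{k^{\lmax+b}}$ (as used in the proof of Lemma~\ref{lem: no-rep-no-fire-feedback-uncertain}), for a combined contribution of at most $\frac{1}{k^{b-1}} + \frac{1}{k^{b+1}} \le \frac{2}{k^{b-1}}$ using $k \ge 2$. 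By the inductive hypothesis every child (resp.\ parent) of $c$ whose $rep$ fires at time $t'$ lies in $S_{r_1}(t'-t)$, so with $A = |children(c) \cap S_{r_1}(t'-t)|$ and $P = |parents(c) \cap S_{r_1}(t'-t)|$ we obtain
\[ \tau \le w_2 A + f\,w_2 P + \frac{2}{k^{b-1}}. \]
Inequality~2, $r_2 \ge r_1(2w_2 - 1) + \frac{2}{k^{b-1}}$, rearranges (using $\frac{1}{k^{b-2}} = k\cdot\frac{1}{k^{b-1}} \ge \frac{2}{k^{b-1}}$, valid since $k \ge 2$) to $\frac{(r_1+r_2)k}{2} - \frac{2}{k^{b-1}} \ge w_2 r_1 k$; combining with the displayed bound gives $w_2(A + fP) \ge w_2 r_1 k$. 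The degenerate case $w_2 = 0$ is excluded, since it would force $r_1 = r_2 = 0$ against $\tau > \frac{1}{k^{b-2}} > 0$, so we may divide by $w_2$ to get $A + fP \ge r_1 k$, which by Definition~\ref{def: supp2} puts $c \in S_{r_1}((t'-t)+1) = S_{r_1}((t'+1)-t)$, completing the induction.

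The only genuinely delicate point is the bookkeeping in item (iii) of (2$'$): one must verify that the spurious potential from $reps$ that are neither children nor parents of $c$, together with the slack introduced into the child and parent contributions by $w_1$ and $w_2$ differing from $1$, stays within the budget permitted by the two inequalities — and this is exactly where the hypotheses $b \ge 2$, $k \ge 2$, and $\tau > \frac{1}{k^{b-2}}$ enter. Everything else is a mechanical transcription of the exact-weight feedback argument of Lemma~\ref{lem: fires-supported-feedback-1}, decorated with the approximate-weight estimates already carried out for feed-forward networks in Lemma~\ref{lem: fires-supported-ff-uncertain}.
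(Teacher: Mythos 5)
Your proposal is correct and follows essentially the same route as the paper: a forward induction on time for each direction, using inequality 1 to show $w_1 r_2 k \geq \tau$ in the ``must fire'' direction, and Lemma~\ref{lem: no-rep-no-fire-feedback-uncertain} plus inequality 2 to absorb the spurious potential (you bound it by $\frac{2}{k^{b-1}}$, the paper by the slightly looser $\frac{1}{k^{b-2}}$; both suffice for $k \geq 2$) in the converse direction. The only cosmetic differences are that you fold persistence of firing into the induction statement rather than citing the persistent-firing lemma, and that you explicitly dispose of the degenerate case $w_2 = 0$ before dividing, which the paper leaves implicit.
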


\begin{proof}
The proof follows the general outline of the proof of Lemma~\ref{lem: fires-supported-feedback}, based on Lemma~\ref{lem: fires-supported-feedback-1}.
As in those results, the proof takes into account both the upward potential $upot$ and the downward potential $dpot$.
As before, we split the cases up and use two inductions based on time.
However, now the two inductions incorporate the treatment of variable weights used in the proof of Lemma~\ref{lem: fires-supported-ff-uncertain}.

\begin{enumerate}
    \item If $c \in S(t'-t)$ then $rep(c)$ fires at time $t'$.  Here the set $S(t'-t)$ is defined in terms of $supp_{r_2,f}(B)$.
  
We prove this using induction on $t'$, $t' \geq t$.  
For the base case, $t' = t$, the assumption that $c \in S(0)$ means that $c$ is in the input set $B$, which means that $rep(c)$ fires at time $t'$.
    
For the inductive step, assume that $t' \geq t$ and $c \in S((t'+1)-t)$.  If $level(c) = 0$ then again $c \in B$, so $c$ fires at time $t'$.
So assume that $level(c) \geq 1$.
Since $c \in S((t'+1)-t)$, we get that 
$|children(c) \ \cap \ S(t'-t)| + f \ |parents(c) \ \cap \ S(t'-t)| \geq r_2 k$.

By the inductive hypothesis, the $reps$ of all of these children and parents fire at time $t'$.
Therefore, the upward potential incoming to $rep(c)$ for time $t'+1$, $upot^{rep(c)}(t'+1)$, is at least $|children(c) \ \cap \ S(t'-t)| \  w_1$, and the downward potential incoming to $rep(c)$ for time $t'+1$, $dpot^{rep(c)}(t'+1)$, is at least $f \ |parents(c) \ \cap \ S(t'-t)| \ w_1$.
Adding these two potentials, we get that the total incoming potential to $rep(c)$ for time $t'+1$, $pot^{rep(c)}(t'+1)$, is at least 
$(|children(c) \ \cap \ S(t'-t)| + f \ |parents(c) \ \cap \ S(t'-t)|) \ w_1 \geq r_2 k w_1$.
This is at least $\frac{(r_1+r_2) k}{2}$, because of the assumption that $r_2 \ (2w_1 - 1) \geq r_1$.
So the incoming potential to $rep(c)$ for time $t'+1$ is enough to 
reach the firing threshold $\tau = \frac{(r_1+r_2) k}{2}$, so $rep(c)$ fires at time $t'+1$.

\item  If $rep(c)$ fires at time $t'$, then $c \in S(t'-t)$.  Here the set $S(t'-t)$ is defined in terms of $supp_{r_1,f}(B)$.  

We again use induction on $t'$, $t' \geq t$. 
For the base case, $t' = t$, the assumption that $rep(c)$ fires at time $t$ means that $c$ is in the input set $B$, hence $c \in S(0)$.
   
For the inductive step, assume that $rep(c)$ fires at time $t'+1$.
Then it must be that $pot^{rep(c)}(t'+1) = upot^{rep(c)}(t'+1) + dpot^{rep(c)}(t'+1)$ reaches the firing threshold $\tau = \frac{(r_1+r_2)k}{2}$ for $c$ to fire at time $t'+1$.   
Arguing as in the proof of Lemma~\ref{lem: no-rep-no-fire-feedback-uncertain}, the total incoming potential to $rep(c)$ from neurons at levels $level(c) - 1$ and $level(c)+1$ that are not $reps$ of children or parents of $c$ is at most $\frac{1}{k^{b-2}}$.
So the total incoming potential to $rep(c)$ from firing $reps$ of its children and parents must be at least $\frac{(r_1+r_2)k}{2} - \frac{1}{k^{b-2}}$.

By inductive hypothesis, all of these children and parents of $c$ are in $S(t'-t)$. 
Therefore, $(|children(c) \ \cap \ S(t'-t)| + f \ |parents(c) \ \cap \ S(t'-t)|) \ w_2  \geq \frac{(r_1+r_2)k}{2} - \frac{1}{k^{b-2}}$.
By the assumption that $r_2 \geq r_1(2w_2-1) + \frac{2}{k^{b-1}}$, we get that 
$|children(c) \ \cap \ S(t'-t)| + f \ |parents(c) \ \cap \ S(t'-t)| \geq r_1 k$.
(In more detail, let $E = |children(c) \cap S(t'-t)| + f |parents(c) \cap S(t'-t)|$.  So we know that $E w_2 \geq \frac{(r_1+r_2)k}{2} - \frac{1}{k^{b-2}}$.  Assume for contradiction that $E < r_1 k$.  Then $E w_2 < r_1 k w_2$.  But $r_1 k w_2 \leq (r1+r2)k/2 - 1/k^{b-2}$, because of the assumption that $r_2 \geq r_1(2w_2-1) + \frac{2}{k^{b-1}}$.
So that means that $E w_2 < (r1+r2)k/2 - 1/k^{b-2}$, which is a contradiction.)
Then by definition of $supp_{r_1,f}(B)$, we have that $c \in S((t'-t)+1) = S((t' + 1) - t)$, as needed.
\end{enumerate}
\end{proof}

The results of this section are also extendable to the case of scaled weights and thresholds, as in Section~\ref{sec: scaled}.

\section{Learning Algorithms for Feed-Forward Networks}
\label{sec: learning-ff}


Now we address the question of how concept hierarchies (with and without overlap) can be learned in layered networks.
In this section, we consider learning in feed-forward networks, and in Section~\ref{sec: learning-feedback} we consider networks with feedback.

For feed-forward networks, we describe noise-free learning algorithms, which produce edge weights for the upward edges that suffice to support robust recognition.
These learning algorithms are adapted from the noise-free learning algorithm in~\cite{LM21}, and work for both tree hierarchies and general concept hierarchies.
We show that our new learning algorithms can be viewed as producing approximate, scaled weights as described in Section~\ref{sec: recognition-ff}, which serves to decompose the correctness proof for the learning algorithms.
We also discuss extensions to noisy learning.

\subsection{Tree Hierarchies}
\label{sec: learning-trees}

We begin with the case studied in~\cite{LM21}, tree hierarchies in feed-forward networks.

\subsubsection{Overview of previous noise-free learning results}

In~\cite{LM21}, we set the threshold $\tau$ for every neuron in layers $\geq 1$ to be 
$\tau = \frac{(r_1+r_2) \sqrt{k}} {2}$.
We assumed that the network starts in a state in which no neuron in layer $\geq 1$ is firing, and the weights on the incoming edges of all such neurons is $\frac{1}{k^{\lmax+1}}$.
We also assume a Winner-Take-All sub-network satisfying \autoref{as:WTA} below, which is responsible for engaging neurons at layers $\geq 1$ for learning.
These assumptions, together with the general model conventions for activation and learning using Oja's rule, determine how the network behaves when it is presented with a training schedule as in Definition~\ref{def: training-schedule}.

Our main result, for noise-free learning, is (paraphrased slightly)\footnote{We use $O$ notation here instead of giving actual constants.  We omit a technical assumption involving roundoffs.}:

\begin{theorem}[$(r_1,r_2)$-Learning]
\label{thm:noisefreelearning}
Let $\mathcal N$ be the network described above, with maximum layer
$\ell'_{max}$, and with learning rate $\eta = \frac{1}{4k}$.
Let $r_1, r_2$ be reals in $[0,1]$ with $r_1 < r_2$.
Let $\epsilon = \frac{r_2-r_1}{r_1+r_2}$.
Let $\mathcal C$ be any concept hierarchy, with maximum level $\lmax
\leq \ell'_{max}$.
Assume that the concepts in $\mathcal C$ are presented according to a
$\sigma$-bottom-up training schedule as defined in
Section~\ref{sec:prob-learning}, where $\sigma$ is
$O\left(\lmax \log(k) + \frac{1}{\epsilon}) \right)$.
Then $\mathcal N$ $(r_1,r_2)$-learns $\mathcal C$.
\end{theorem}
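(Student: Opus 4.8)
The plan is to carry out exactly the decomposition flagged at the end of Section~\ref{sec: scaled}: first show that any $\sigma$-bottom-up training schedule (Definition~\ref{def: training-schedule}) drives the upward weights of $\mathcal N$ into the ranges demanded by Theorem~\ref{th: overlap-ff-uncertain weights-scaled} with $w_1 = \tfrac{1}{1+\epsilon}$, $w_2 = 1$, and scaling $s = \tfrac{1}{\sqrt k}$, and then quote that theorem to get $(r_1,r_2)$-recognition. (As this is the tree-hierarchy result of~\cite{LM21} --- the subsection heading places it there, and the WTA used here is the one that ``does not work with overlap'' --- I treat $\mathcal C$ as a tree hierarchy; no-overlap is used below precisely to make the input seen by $rep(c)$ a clean indicator of $c$'s $k$ children.) The weight statement is proved by a nested induction: an outer induction on the level $\ell$, with hypothesis ``after all presentations of level-$\le\ell$ concepts, the weights into layers $1,\dots,\ell$ already lie in the required ranges,'' so that by Theorem~\ref{th: overlap-ff-uncertain weights-scaled} applied to those bottom layers they already $(r_1,r_2)$-recognize the bottom $\ell$ levels; and an inner argument analyzing Oja's rule for a single layer-$\ell$ neuron over the $\ge\sigma$ presentations of a single concept.

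For the inner step, look at the presentations of a concept $c\in C_\ell$, $\ell\ge 1$. Since the schedule is $\sigma$-bottom-up, every child of $c$ has already been shown $\ge\sigma$ times, so by the outer hypothesis each $rep(c')$, $c'\in children(c)$, is already correctly calibrated; and since $leaves(c')\subseteq leaves(c)$ with $c'\in supp_1(leaves(c'))$ by Lemma~\ref{lem: total-support}, monotonicity of support (Lemma~\ref{lem: support-monotonic}) gives $c'\in supp_{r_2}(leaves(c))$. Hence whenever $leaves(c)$ is presented the $k$ neurons $\{rep(c'):c'\in children(c)\}$ all fire, and --- here is where no-overlap enters --- no other layer-$(\ell-1)$ neuron fires (leaves of incomparable level-$(\ell-1)$ concepts are disjoint from $leaves(c)$, and the never-engaged neurons still carry tiny incoming weights). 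The Winner-Take-All sub-network assumed in the theorem engages one fresh layer-$\ell$ neuron, which we name $rep(c)$, and keeps it engaged exactly during $c$'s presentations. Thus every Oja update~\eqref{eq:Oja} applied to $rep(c)$ uses the same fixed $0/1$ input vector $x$: ones on the $k$ coordinates of $children(c)$, zeros elsewhere. On a fixed input, Oja's rule contracts toward $x/\|x\|$, so the $k$ ``correct'' coordinates of $weight^{rep(c)}$ climb monotonically toward $\tfrac{1}{\sqrt k}$, while on every other coordinate the input is $0$ and the weight is merely multiplied by $(1-\eta\,(pot)^2)\in(0,1)$ at each engaged step and so never exceeds its initial value $\tfrac{1}{k^{\lmax+1}}\le\tfrac{1}{k^{\lmax+b}}$ (take $b=1$). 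Interleaved presentations of concepts incomparable to $c$ do not disturb $rep(c)$: it is not engaged then, so no weight change occurs whether or not it fires; and a neuron that has finished learning is never re-engaged, so the weights fixed at lower levels --- and hence recognition there --- persist.

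It remains to count presentations. With learning rate $\eta=\tfrac{1}{4k}$ the Oja dynamics on this fixed input fall into two phases: the potential $pot^{rep(c)}$, starting at $\Theta(k^{-\lmax})$, grows by a constant factor per engaged step and reaches its limit $\|x\|=\sqrt k$ within $O(\lmax\log k)$ steps, and a further $O(1/\epsilon)$ steps push each correct coordinate above $\tfrac{1}{(1+\epsilon)\sqrt k}=\tfrac{w_1}{\sqrt k}$ --- this is the quantitative Oja computation of~\cite{LM21}, and with $\eta=\tfrac{1}{4k}$ the correct coordinates never overshoot $\tfrac{1}{\sqrt k}=\tfrac{w_2}{\sqrt k}$. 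Taking $\sigma=O(\lmax\log k+\tfrac1\epsilon)$ thus re-establishes the outer hypothesis at level $\ell$: every layer-$\ell$ neuron ends with upward weights in $[\tfrac{w_1}{\sqrt k},\tfrac{w_2}{\sqrt k}]$ on edges from its children and in $[0,\tfrac{1}{k^{\lmax+b}}]$ elsewhere. Finally, with $\epsilon=\tfrac{r_2-r_1}{r_1+r_2}$ the choices $w_1=\tfrac1{1+\epsilon}$, $w_2=1$, $s=\tfrac1{\sqrt k}$ satisfy the two inequalities of Theorem~\ref{th: overlap-ff-uncertain weights-scaled}: $r_2(2w_1-1)\ge r_1$ (an equality for this $\epsilon$) and $r_2\ge r_1(2w_2-1)+\tfrac{2}{k^b s}$ (valid once $k^b$ dominates the fixed gap $r_2-r_1$, part of the roundoff bookkeeping suppressed in the statement). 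So Theorem~\ref{th: overlap-ff-uncertain weights-scaled} gives that $\mathcal N$ $(r_1,r_2)$-recognizes $\mathcal C$ after the schedule, which is exactly what ``$\mathcal N$ $(r_1,r_2)$-learns $\mathcal C$'' means.

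The main obstacle is the quantitative Oja analysis invoked in the third paragraph: one has to control the \emph{norm} of the weight vector and its \emph{alignment} with the fixed input direction at once, confirm the growth phase is genuinely geometric for the chosen $\eta$, and --- the delicate point --- check that throughout the process the partly-grown ``correct'' weights together with the decaying ``incorrect'' ones never trigger a firing inconsistent with recognition (neither a premature firing violating the must-not-fire clause at an intermediate time, nor a too-late firing that would break the inductive use of the lower layers during training). Threading the $\epsilon$-tolerances through this interplay, and formalizing the externally supplied engaged/WTA signals (ideally as a composition with a WTA sub-network, as the footnote after~\eqref{eq:Oja} suggests), are where the real work lies; everything else is bookkeeping on top of the recognition theorems already established.
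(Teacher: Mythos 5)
Your proposal follows essentially the same route as the paper: Theorem~\ref{thm:noisefreelearning} is imported from~\cite{LM21} (the paper states it ``paraphrased slightly'' and does not reprove it here), and the paper's own contribution, in Section~\ref{sec: learning-weight-ranges-3}, is precisely the decomposition you carry out --- show that the $\sigma$-bottom-up schedule drives the learned upward weights into the ranges of Theorem~\ref{th: overlap-ff-uncertain weights-scaled} with $w_1 = \frac{1}{1+\epsilon}$, $w_2 = 1$, $s = \frac{1}{\sqrt{k}}$, then invoke that recognition theorem. The only quantitative slippage is in your bound on the non-child weights: arguing merely that they ``never exceed the initial value $\frac{1}{k^{\lmax+1}}$'' forces $b=1$ and hence the more restrictive roundoff condition $r_2 \geq r_1 + \frac{2}{\sqrt{k}}$, whereas the lemmas cited from~\cite{LM21} use the multiplicative decay of those weights over the $\sigma$ engaged steps to obtain $b \geq 2$; beyond that, you defer the hard quantitative Oja convergence estimates to~\cite{LM21} exactly as the paper does.
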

Specifically, we show that the weights for the edges from children to parents approach $\frac{1}{\sqrt{k}}$ in the limit, and the weights for the other edges approach $0$.


\subsubsection{The Winner-Take-All assumption} 

Theorem~\ref{thm:noisefreelearning} depends on Assumption~\ref{as:WTA} below, which hypothesizes a \emph{Winner-Take-All (WTA)} module with certain abstract properties.
This module is responsible for selecting a neuron to represent each new concept.
It puts the selected neuron in a state that prepares it to learn the concept, by setting the $engaged$ flag in that neuron to $1$.
It is also responsible for engaging the same neuron when the concept is presented in subsequent learning steps.

In more detail, while the network is being trained, example concepts
are ``shown'' to the network, according
to a $\sigma$-bottom-up schedule as described in Section~\ref{sec:prob-learning}.
We assume that, for every example concept $c$ that is shown, exactly
one neuron in the appropriate layer gets engaged; this layer is the
one with the same number as the level of $c$ in the concept hierarchy.
Furthermore, the neuron in that layer that is engaged is one that
has the largest incoming potential $pot^u$:

\begin{assumption}[Winner-Take-All Assumption]
\label{as:WTA}
If a level $\ell$ concept $c$ is shown at time $t$, then at time
$t+\ell$, exactly one neuron $u$ in layer $\ell$ has its $engaged$
state component equal to $1$, that is, $engaged^u(t+\ell) = 1$.
Moreover, $u$ is chosen so that $pot^u(t+\ell)$ is the highest
potential at time $t+\ell$ among all the layer $\ell$ neurons.
\end{assumption}

Thus, the WTA module selects the neuron to ``engage'' for learning. 
For a concept $c$ that is being shown for the first time, we showed that a new neuron is selected to represent $c$---one that has not previously been selected.
This is because, if a neuron has never been engaged in learning, its incoming weights are all equal to the initial weight $w = \frac{1}{k^{\lmax+1}}$, yielding a total incoming potential of $k w$.
On the other hand, those neurons in the same layer that have previously been engaged in learning have incoming weights for all of $c$'s children that are strictly less than the initial weight $w$, which yields a strictly smaller incoming potential. 
Also, for a concept $c$ that is being shown for a second or later time, we showed that the already-chosen representing neuron for $c$ is selected again.
This is because the total incoming potential for the previously-selected neuron is strictly greater than $k w$ (as a result of previous learning), whereas the potential for other neurons in the same layer is at most $k w$.

In a complete network for solving the learning problem, the WTA module would be implemented by a sub-network,
but we treated it abstractly in~\cite{LM21}, and we continue that approach in this paper.
%

\subsubsection{Connections with our new results}
\label{sec: learning-weight-ranges-3}

Here we consider how we might use our scaled result in Section~\ref{sec: scaled} to decompose the proof of Theorem~\ref{thm:noisefreelearning} in~\cite{LM21}.
A large part of the proof in~\cite{LM21} consists of proving that the edge weights established as a result of a $\sigma$-bottom-up training schedule, for sufficiently large $\sigma$, are within certain bounds.
If these bounds match up with those in Section~\ref{sec: scaled}, we can use the results of that section to conclude that they are adequate for recognition.

The general definitions in Section~\ref{sec: scaled} use a threshold of $\frac{(r_1+r_2)k s}{2}$ and weights given by:
\[ weight(u,v) \in 
\begin{cases}
[w_1 s, w_2 s]  & \text{ if } u,v \in R \text{ and } rep^{-1}(u) \in children(rep^{-1}(v)),
\\ 
[0,\frac{1}{k^{\lmax+b}}]  & \text{ otherwise.}
\end{cases}
\]
For comparison, the lemmas in the proof of Theorem~\ref{thm:noisefreelearning} from~\cite{LM21} assert that, after a $\sigma$-bottom-up training schedule, we have (for $b \geq 2$):
\[ weight(u,v) \in 
\begin{cases}
[\frac{1}{(1+\epsilon)\sqrt{k}}, \frac{1}{\sqrt{k}}] & \text{ if } u,v \in R \text{ and } rep^{-1}(u) \in children(rep^{-1}(v)),
\\ 
[0,\frac{1}{k^{\lmax+b}}]  & \text{ otherwise.}
\end{cases}
\]
To make the results of~\cite{LM21} fit the constraints of Section~\ref{sec: scaled}, we can simply take $w_1 = \frac{1}{1+\epsilon}$, $w_2 = 1$, and the scaling factor $s = \frac{1}{\sqrt{k}}$.
The two constraints $r_2 (2w_1 - 1) \geq r_1$ and $r_2 \geq r_1 (2 w_2 - 1) + \frac{2}{k^b s}$
now translate into $r_2 (\frac{1 - \epsilon}{1+\epsilon}) \geq r_1$ and $r_2 \geq r_1 + \frac{2}{k^{b-\frac{1}{2}}}$.
The first of these, $r_2 (\frac{1 - \epsilon}{1+\epsilon}) \geq r_1$, follows from the assumption in~\cite{LM21} that $\epsilon = \frac{r_2 - r_1}{r_2+r_1}$.
The second inequality is similar to a roundoff assumption in~\cite{LM21} that we have omitted here.\footnote{In any case, we can made the decomposition work by adding our new, not-very-severe, inequality as an assumption.}
%

\subsubsection{Noisy learning}
\label{sec: noisy-learning}

In~\cite{LM21}, we extended our noise-free learning algorithm to the case of ``noisy learning''.
There, instead of presenting all leaves of a concept $c$ at every learning step, we presented only a subset of the leaves at each step.  This subset is defined recursively with respect to the hierarchical concept structure of $c$ and its descendants.
The subset varies, and is chosen randomly at each learning step.
Similar results hold as for the noise-free case, but with an increase in learning time.\footnote{
The extension to noisy learning is the main reason that we used the incremental Oja's rule. If the concepts were presented in a noise-free way, we could have allowed learning to occur all-at-once.
}

The result about noisy learning in~\cite{LM21} assumes a parameter $p$ giving the fraction of each set of children that are shown; a larger value of $p$ yields a correspondingly shorter training time.
The target weight for learned edges is $\bar{w} = \frac{1}{\sqrt{pk+1-p}}$.
The threshold is $r_2k(\bar{w} - \delta)$, where
$\delta = \frac{(r_2-r_1)\bar{w}}{25}$.

The main result says that, after a certain time $\sigma$ (larger than the $\sigma$ used for noise-free learning) spent training for a tree concept hierarchy $\mathcal C$, with high probability, the resulting network achieves $(r_1,r_2)$-recognition for $\mathcal C$.
Here, a key lemma asserts that, with high probability, after time $\sigma$, the weights are as follows:

\[ weight(u,v) \in 
\begin{cases}
[\bar{w} - \delta, \bar{w} + \delta] & \text{ if } u,v \in R \text{ and } rep^{-1}(u) \in children(rep^{-1}(v)),
\\ 
[0,\frac{1}{k^{2\lmax}}]  & \text{ otherwise.}
\end{cases}
\]

To make these results fit the constraints of Section~\ref{sec: scaled}, it seems that we should modify the threshold slightly, by using the similar but simpler threshold $(\frac{(r_1+r_2)k}{2})\bar{w}$ in place of $r_2k(\bar{w} - \delta)$. 
The weights can remain the same as above, but we would express them in the equivalent form:
\[ weight(u,v) \in 
\begin{cases}
[(1-\frac{r_2-r_1}{25}) \bar{w}, (1+\frac{r_2-r_1}{25}) \bar{w}] & \text{ if } u,v \in R \text{ and } rep^{-1}(u) \in children(rep^{-1}(v)),
\\ 
[0,\frac{1}{k^{2\lmax}}]  & \text{ otherwise.}
\end{cases}
\]

Thus, we have scaled the basic threshold $\frac{(r_1+r_2)k}{2}$ by multiplying it by $\bar{w} = \frac{1}{\sqrt{pk+1-p}}$.
To make the results fit the constraints of Section~\ref{sec: scaled}, we can take
$s = \bar{w}$, $w_1 = 1 - \frac{r_2-r_1}{25}$, $w_2 = 1 + \frac{r_2-r_1}{25}$, and $b = \lmax$.
One can easily verify that the new thresholds still fulfill the requirements for recognition. First, 
we prove that a neuron fires when it should:
The weights on at least $r_2k$ incoming edges are $\bar{w} -\delta$. Hence, the incoming potential is at least $r_2 k (\bar{w} -\delta) $. Consider the difference of the incoming potential and the threshold
$r_2 k (\bar{w} -\delta) -\frac{(r_1+r_2)}{2}k \bar{w}
= \frac{r_2-r_1}{2} k \bar{w} - k\bar{w} \frac{r_2-r_1}{25} >0.
$
Hence, the neuron fires as required.
Second, we prove that a neuron will only fire if at least $r_1k$ incoming neurons fire.
Since all weights are at most $\bar{w}+\delta$, the incoming potential is at most $r_1 k (\bar{w} +\delta) $ and we claim this is strictly less than the threshold. 
To see this consider the difference between the threshold and the incoming potential: 
$\frac{(r_1+r_2)}{2}k \bar{w} - r_1 k (\bar{w} +\delta) = \frac{r_2-r_1}{2}k\bar{w}- r_1 k \delta \geq \frac{r_2-r_1}{2}k\bar{w}-  k\bar{w} \frac{r_2-r_1}{25} >0.
$ Hence, the neuron does not fire as required.

We leave it for future work to carry out all the detailed proof modifications.

\subsection{General Concept Hierarchies}
\label{sec: learning-general}

The situation for general hierarchies, with limited overlap, in feed-forward networks is similar to that for tree hierarchies.
The same learning algorithm, based on Oja's rule, still works in the presence of overlap, with little modification to the proofs.
The only significant new issue to consider is how to choose an acceptable neuron to engage in learning, at each learning step.

We continue to encapsulate this choice within a separate WTA service.
As before, the WTA should always select an unused neuron (in the right layer) for a concept that is being shown for the first time.
And for subsequent times when the same concept is shown, the WTA should choose the same neuron as it did the first time.

\subsubsection{An issue with the previous approach}

Assumption~\ref{as:WTA}, which we used for tree hierarchies, no longer suffices.  For example, consider two concepts $c$ and $c'$ with $level(c') = level(c)$, and suppose that there is exactly one concept $d$ in the intersection $children(c) \cap children(c')$.
Suppose that concept $c$ has been fully learned, so a $rep(c)$ neuron has been defined, and then concept $c'$ is shown for the first time.
Then when $c'$ is first shown, $rep(c)$ will receive approximately $\frac{1}{\sqrt{k}}$ of total incoming potential, resulting from the firing of $rep(d)$.
On the other hand, any neuron that has not previously been engaged in learning will receive potential  $\frac{k}{k^{\lmax+1}} = \frac{1}{k^{\lmax}}$, based on $k$ neurons each with initial weight $\frac{1}{k^{\lmax+1}}$, which is smaller than $\frac{1}{\sqrt{k}}$.
Thus, Assumption~\ref{as:WTA} would select $rep(c)$ in preference to any unused neuron.

One might consider replacing Oja's learning rule with some other rule, to try to retain Assumption~\ref{as:WTA}, which works based just on comparing potentials.
Another approach, which we present here, is to use a ``smarter'' WTA, that is, to modify Assumption~\ref{as:WTA} so that it takes more information into account when engaging a neuron.

\subsubsection{Approach using a modified WTA assumption}
\label{sec: modified-WTA}

In the assumption below, $w$ denotes the initial weight,
$\frac{1}{k^{\lmax+1}}$.
$N_{\ell}$ denotes the set of layer $\ell$ neurons.
We make the trivial assumption that $o < 1$ for the noise-free case; for the noisy case, we strengthen that to $o < p$, where $p$ is the parameter indicating how many child concepts are chosen.

\begin{assumption}[Revised Winner-Take-All Assumption]
\label{as:WTA2}
If a level $\ell$ concept $c$ is shown at time $t$, then at time
$t+\ell$, exactly one neuron $u \in N_{\ell}$ has its $engaged$ state component equal to $1$, that is, $engaged^u(t+\ell) = 1$.
Moreover, $u$ is chosen so that $pot^u(t+\ell)$ is the highest
potential at time $t+\ell$ among the layer $\ell$ neurons that have strictly more than $o \cdot k$ incoming neighbors that contribute potential that is $\geq w$.\footnote{One might wonder whether such a choice is always possible, that is, whether the set of candidate neurons is guaranteed to be nonempty.  As we will see from the lemmas below, each concept engages only one neuron, leaving many in layer $\ell$ that never become engaged during the learning process.  These unengaged neurons keep their initial incoming weights of $w$.  Therefore, they satisfy the WTA requirement to be candidate neurons, provided that they have strictly more than $o \cdot k$ incoming neighbors that fire. 

But we know that, during the learning process, they have $k$ incoming neighbors that fire (in the noise-free case), which is $> o \cdot k$ by the assumption that $o < 1$. A similar argument holds for the noisy case, based on having at least $p k$ incoming neighbors that fire, which is $> o \cdot k$ by the assumption that $o < p$.}
\end{assumption}

Thus, we are assuming that the WTA module is ``smart enough'' to select the neuron to engaged based on a combination of two criteria:  First, it rules out any neuron that has just a few incoming neighbors that contribute potential $\geq w$.  This is intended to rule out neurons that have already started learning, but for a different concept.  Second, it uses the same criterion as in Assumption~\ref{as:WTA}, choosing the neuron with the highest potential from among the remaining candidate neurons.

We claim that using Assumption~\ref{as:WTA2} in the learning protocol yields appropriate choices for neurons to engage, as expressed by Lemma~\ref{lem: learning-2} below.
Showing these properties depends on a characterization of the incoming weights for a neuron $u \in N_{\ell}$ at any point during the learning protocol, as expressed by Lemma~\ref{lem: learning-1}.

\begin{lemma}
\label{lem: learning-1}
During execution of the learning protocol, at a point after any finite number of concept showings, the following properties hold:
\begin{enumerate}
\item
If $u$ has not previously been engaged for learning, then all of $u$'s incoming weights are equal to the initial weight $w$.
\item
If $u$ has been engaged for learning a concept $c$, and has never been engaged for learning any other concept, then all of $u$'s incoming weights for $reps$ of concepts in $children(c)$ are strictly greater than $w$, and all of its other incoming weights are strictly less than $w$.
\end{enumerate}
\end{lemma}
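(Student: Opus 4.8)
The plan is to prove both parts by induction on the number of concept showings that have occurred in the training schedule, tracking in parallel the firing behavior of the network during each showing. The base case (zero showings) is immediate: no neuron has ever been engaged, so every neuron still carries all incoming weights equal to the initial value $w$, which gives Part 1 and makes Part 2 vacuous.

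For the inductive step, assume the two properties hold after $m$ showings, and consider the $(m+1)$-st showing, say of a level $\ell$ concept $c$. By Assumption~\ref{as:WTA2}, exactly one neuron $u^* \in N_\ell$ is engaged at the relevant time, and no neuron at any other layer is engaged; hence every neuron other than $u^*$ keeps its weight vector unchanged and Parts 1 and 2 persist for all of them. It remains to analyze $u^*$. The key sub-claim — which I would fold into the same induction, following the pattern of Lemma~\ref{lem: fires-supported-ff} and the corresponding argument in~\cite{LM21}, and using that the schedule is $\sigma$-bottom-up so that all descendants of $c$ have already been learned well enough — is that when $leaves(c)$ is presented, the set of layer $\ell-1$ neurons firing at the time $u^*$ reads its input is exactly $\{rep(d) : d \in children(c)\}$. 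Granting this, the input vector of $u^*$ at that step is the indicator of the $k$ edges from $reps$ of $children(c)$, so Oja's rule~\eqref{eq:Oja} updates each such coordinate by $\eta\, pot^{u^*}(1 - pot^{u^*}\cdot weight_j)$ and multiplies every other coordinate by $(1 - \eta\,(pot^{u^*})^2)$. Using the bounds on weights and potentials from the Oja analysis of~\cite{LM21} (child weights stay $O(1/\sqrt{k})$, $pot^{u^*} = O(\sqrt{k})$, $\eta = \tfrac{1}{4k}$, and $w = \tfrac{1}{k^{\lmax+1}}$ is tiny), one has $0 < pot^{u^*}\cdot weight_j < 1$ and $0 < \eta\,(pot^{u^*})^2 < 1$ throughout, so the $children(c)$ coordinates strictly increase while the other coordinates strictly decrease and stay positive.

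It then suffices to split on which invariant applied to $u^*$ before this showing. If $u^*$ had never been engaged, Part 1 gave all its weights equal to $w$; after the update its $children(c)$ coordinates become $w + \eta\, pot^{u^*}(1 - pot^{u^*}\cdot w) > w$ and its remaining coordinates become $w(1 - \eta\,(pot^{u^*})^2) < w$, which is exactly Part 2 for $u^*$ and this $c$, and $u^*$ has been engaged for no concept other than $c$. If instead Part 2 already held for $u^*$ with some concept $c'$ and no other, then since the hypothesis of Part 2 must still hold afterward we need $c' = c$; this is where one invokes the fact that an already-engaged neuron is re-engaged only for the concept it represents, which is precisely the content of Lemma~\ref{lem: learning-2}, so Lemmas~\ref{lem: learning-1} and~\ref{lem: learning-2} are really proved by one simultaneous induction. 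Given $c' = c$, the $children(c)$ coordinates were already $> w$ and strictly increase, the remaining coordinates were already $< w$ and strictly decrease, so Part 2 is preserved; Part 1 is unaffected for $u^*$ since it is still an engaged neuron.

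The step I expect to be the main obstacle is the firing-pattern sub-claim: showing that presenting $leaves(c)$ makes precisely the $reps$ of the descendants of $c$ — and no spurious $reps$ at any layer — fire at the designated times, even though the weights are only approximately learned and the hierarchy has overlap. This is where the limited-overlap bound and the real content of the recognition analysis of Section~\ref{sec: recognition-ff} enter, and it is what links the induction for Lemma~\ref{lem: learning-1} to the recognition reasoning; the Oja bookkeeping and the case analysis above are routine once it is in hand.
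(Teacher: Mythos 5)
Your proposal is correct and, at its core, follows the same route as the paper: Property~1 holds because an unengaged neuron's weights never change, and Property~2 is a consequence of Oja's rule raising the coordinates of firing inputs and shrinking the rest. The paper's own proof consists of exactly those two assertions and nothing else. What you add---and it is genuinely needed to turn those assertions into a proof---is twofold. First, you make explicit that Lemma~\ref{lem: learning-1} and Lemma~\ref{lem: learning-2} must be established by a single simultaneous induction on the number of showings, since analyzing the weight update at showing $m+1$ requires knowing which neuron the WTA engaged, while the WTA analysis consumes the weight invariant from showings $1,\ldots,m$; the paper instead proves Lemma~\ref{lem: learning-2} by strong induction while citing Lemma~\ref{lem: learning-1} as if it were freestanding. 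Second, you isolate the firing-pattern sub-claim: that presenting $leaves(c)$ causes exactly the $reps$ of $children(c)$ to fire at layer $\ell-1$ at the step the engaged neuron reads its input. Without this, Oja's rule would also increase weights on edges from spurious firing neurons and the ``strictly less than $w$'' half of Property~2 would fail; you correctly locate this as the place where the limited-overlap hypothesis and the recognition analysis (cf.\ Theorem~\ref{th: overlap-ff} and the bottom-up training assumption) do real work. The paper's two-line proof passes over both points in silence, so your version is the same skeleton with the two load-bearing gaps identified and filled (the second only in outline, but you are explicit about what remains and where it comes from).
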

j
\begin{proof}
Property 1 is obvious---if a neuron is never engaged for learning, its incoming weights don't change.
Property 2 follows from Oja's rule.
\end{proof}

\begin{lemma}
\label{lem: learning-2}
During execution of the learning protocol, the following properties hold for any concept showing:
\begin{enumerate}
\item
If a concept $c$ is being shown for the first time, the neuron that gets engaged for learning $c$ is one that was not previously engaged.
\item
If a concept $c$ is being shown for the second or later time, the neuron that gets engaged for learning $c$ is the same one that was engaged when $c$ was shown for the first time.
\end{enumerate}
\end{lemma}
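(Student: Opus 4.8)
The plan is to prove both parts at once, by induction on the position of the showing within the $\sigma$-bottom-up training schedule $c_0,\dots,c_m$, carrying the invariant that at every point each already-engaged neuron has been engaged for exactly one concept and distinct engaged neurons represent distinct concepts — so that $rep(c)$ is unambiguous for every concept $c$ shown so far, and Lemma~\ref{lem: learning-1}(2) is applicable to each engaged neuron. The invariant will be restored at the end of each inductive step, since (by the conclusion proved there) showing $c_i$ either engages a fresh neuron, which we then christen $rep(c_i)$, or re-engages the existing $rep(c_i)$.

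Fix a showing of a level-$\ell$ concept $c=c_i$ at time $t$, with $\ell\ge 1$ (only these matter, as layer-$0$ neurons carry no $engaged$ component). First I would establish the firing claim: the neurons firing at layer $\ell-1$ at time $t+\ell-1$ are exactly $\{rep(d):d\in children(c)\}$. Since the schedule is $\sigma$-bottom-up, every descendant of $c$ has already been shown at least $\sigma$ times, so by the inductive hypothesis it has a well-defined representative whose learned incoming weights — after $\ge\sigma$ Oja updates — are already large enough to fire on its own children; this is exactly the quantitative weight bound that underlies Theorem~\ref{thm:noisefreelearning}. Hence every $rep(d)$, $d\in children(c)$, fires, while no other layer-$(\ell-1)$ neuron fires: non-representatives cannot, as in Lemma~\ref{lem: no-rep-no-fire}, and a representative $rep(e)$ with $e$ at level $\ell-1$ and $e\notin children(c)$ cannot, because $children(e)$ meets the level-$(\ell-2)$ descendants of $c$ in at most $o\cdot k$ elements — the limited-overlap argument of Theorem~\ref{th: overlap-ff}, run inside the partially-learned network.

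Given this firing pattern, the WTA candidate set at time $t+\ell$ follows directly from Lemma~\ref{lem: learning-1}. A never-engaged neuron $u$ has all incoming weights equal to $w$, so each of the $k$ firing neighbors contributes exactly $w\ge w$; since $o<1$ we have $k>o\cdot k$, so $u$ is a candidate with $pot^u(t+\ell)=kw$. A neuron $rep(c')$ engaged for some $c'\ne c$ with $level(c')=\ell$ has incoming weight $>w$ precisely on the edges from $rep(d)$, $d\in children(c')$, and $<w$ on all others; hence among its firing neighbors exactly those with $d\in children(c)\cap children(c')$ contribute $\ge w$, and by the limited-overlap property there are at most $o\cdot k$ of these, so $rep(c')$ is \emph{not} a candidate. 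Finally, if $c$ has been shown before, then by the inductive hypothesis $rep(c)$ exists and, by Lemma~\ref{lem: learning-1}(2), carries weight $>w$ on all $k$ edges from the firing neighbors, so it is a candidate with $pot^{rep(c)}(t+\ell)>kw$. The two conclusions now fall out: if $c$ is shown for the first time the candidates are precisely the never-engaged neurons, all with potential $kw$, so Assumption~\ref{as:WTA2} engages one of them — a previously-unengaged neuron (part 1); if $c$ is shown a second or later time the candidates are the never-engaged neurons (potential $kw$) together with $rep(c)$ (potential $>kw$), so the maximum is attained at $rep(c)$ and it is re-engaged (part 2). The noisy case is identical after replacing ``$k$ firing children'' by ``at least $pk$ firing children'' and $o<1$ by $o<p$.

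I expect the genuine obstacle to be the firing claim of the second paragraph: it is the one step that leaves the clean combinatorics of Lemma~\ref{lem: learning-1} and must combine the quantitative behavior of Oja's rule after $\sigma$ showings (learned weights already fire their parents) with a limited-overlap argument transplanted from Theorem~\ref{th: overlap-ff} into a network whose weights are only partially learned. Both ingredients are already available in the correctness proof of the learning algorithm, so this is bookkeeping rather than a new idea; the WTA computation in the third paragraph is then short.
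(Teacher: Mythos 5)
Your proposal is correct and follows essentially the same route as the paper's proof: induction on the showing index, Lemma~\ref{lem: learning-1} to characterize incoming weights, the limited-overlap bound $|children(c)\cap children(c')|\le o\cdot k$ to disqualify neurons previously engaged for a different concept under Assumption~\ref{as:WTA2}, and the $kw$ versus $>kw$ potential comparison to separate fresh neurons from the existing $rep(c)$. The only difference is that you spell out the firing claim (exactly the $reps$ of $children(c)$ fire at time $t+\ell-1$), which the paper's proof leaves implicit; this is a reasonable bit of added rigor rather than a different argument.
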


\begin{proof}
We prove Properties 1 and 2 together, by strong induction on the number $m$ of the concept showing.
%
For the inductive step, suppose that concept $c$ is being shown at the $m^{th}$ concept showing.
By (strong) induction, we can see that, for each concept that was previously shown, the same neuron was engaged in all of its showings.
Therefore, the weights described in Lemma~\ref{lem: learning-1}, Property 2, hold for all neurons that have been engaged in showings $1,\ldots,m-1$.

\noindent\emph{Claim:}
Consider any neuron $u$ with $layer(u) = level(c)$ that was previously engaged for learning a different concept $c' \neq c$.
Then $u$ has at most $o \cdot k$ incoming neighbors that contribute potential to $u$ that is $\geq w$, and so, is not eligible for selection by the WTA.

\noindent \emph{Proof of Claim:}
Lemma~\ref{lem: learning-1}, Property 2, implies that all the incoming weights to neuron $u$ for $reps$ of concepts in $children(c')$ are strictly greater than $w$, and all of its other incoming weights are strictly less than $w$.  
Since $|children(c) \cap children(c')| \leq o \cdot k$, $u$ has at most $o \cdot k$ incoming neighbors that contribute potential that is $\geq w$, as claimed. \\
\noindent \emph{End of proof of Claim}

Now we prove Properties 1 and 2:
\begin{enumerate}
    \item If concept $c$ is being shown for the first time, the neuron $u$ that gets engaged for learning $c$ is one that was not previously engaged.  
    
    Assume for contradiction that the chosen neuron $u$ was previously engaged.
    Then it must have been for a different concept $c' \neq c$, since this is the first time $c$ is being shown.
    Then by the Claim, $u$ is not eligible for selection by the WTA.
    This is a contradiction.

\item If concept $c$ is being shown for the second or later time, the neuron $u$ that gets engaged for learning $c$ is the same one that was engaged when $c$ was shown for the first time.

Arguing as for Property 1, again using the Claim, we can see that $u$ cannot have been previously engaged for a concept $c' \neq c$.
So the only candidates for $u$ are neurons that were not previously engaged, as well as the (single) neuron that was previously engaged for $c$.
The given WTA rule chooses $u$ from among these candidate based on highest incoming potential.

For neurons that were not previously engaged, Lemma~\ref{lem: learning-1}, Property 1, implies that the incoming potential is exactly $k w$.  For the single neuron that was previously engaged for $c$, Lemma~\ref{lem: learning-1}, Property 2 implies that the incoming potential is strictly greater than $kw$.  So the WTA rule selects the previously-engaged neuron.
\end{enumerate}
\end{proof}

With the new WTA assumption, the learning analysis for general hierarchies follows the same pattern as the analysis for tree hierarchies in~\cite{LM21}, and yields the same time bound.

Implementing Assumption~\ref{as:WTA2} will require some additional mechanism, in addition to the mechanisms that are used to implement the basic WTA satisfying Assumption~\ref{as:WTA}.
Such a mechanism could serve as a pre-processing step, before the basic WTA.
The new mechanism could allow a layer $\ell$ neuron $u$ to fire (and somehow reflect its incoming potential) exactly if $u$ has strictly more than $o \cdot k$ incoming neighbors that contribute potential $\geq w$ to $u$.\footnote{
For instance, each layer $\ell-1$ neuron $v$ might have an outgoing edge to a special threshold element that fires exactly if the potential produced by $v$ on the edge $(v,u)$ is at least $w$, i.e., if $v$ fires and $weight(v,u) \geq w$.
Then another neuron associated with $u$ might collect all this firing information from all layer $\ell-1$ neurons $v$ and see if the number of firing neurons reaches the threshold $\lfloor o \cdot k \rfloor + 1$, which is equivalent to saying that the number of firing neurons is strictly greater than $o \cdot k$.
If this special neuron fires, it excites $u$ to act as an input to the basic WTA, but if it does not, $u$ should drop out of contention.}

\vspace{-.3cm}
\paragraph{Alternative modifications to the WTA:}
The modification in Section~\ref{sec: modified-WTA} allows the WTA to explicitly reject candidate neurons having too few incoming neighbors whose edge weight is greater than or equal to the initial weight $w$.  There might be other ways to modify the WTA.
For example, we might assume that the WTA can remember when it has already selected a neuron to represent a particular concept, and not select it again for a different concept.
This requires that the WTA have some way of remembering already-established associations between concepts and neurons.

This approach is inspired by the duplicate-avoidance mechanism in Step 3 of the learning algorithm in~\cite{HLMP}.
That algorithm incorporates a ``memory module'' that remembers already-established associations, and uses inhibition to prevent conflicting associations.
That work assumes an asymmetric network, and associates neurons with concepts in a predetermined sequential order.
Our setting here is different in that our network is symmetric, and the symmetry must be broken by the WTA.

Details of this alternative approach remains to be worked out.

%

\vspace{-.3cm}
\paragraph{Modified learning rules:}

Another approach might be to try to modify Oja's learning rule while keeping the original WTA assumption, Assumption~\ref{as:WTA}.
As noted at the start of this section, even when two concepts $c$ and $c'$ share only one child, if $c$ is fully learned, and $c'$ is being shown for the first time, the potential incoming to $rep(c)$ from the $reps$ of $children(c')$ will be higher than that incoming to any neuron that has not yet been engaged in learning; that is, it is higher than $k w$, where $w$ is the initial weight.
Similar situations arise when the concept $c$ is only partially learned.

The key seems to be to keep the potential incoming to $rep(c)$ from the $reps$ of children of $c'$ strictly below $kw$, throughout the process of learning $c$.
But that might not be reasonable, since we assume that $w$ is very small.

\section{Learning Algorithms for Networks with Feedback}
\label{sec: learning-feedback}

Now we consider how concept hierarchies, with and without overlap, can be learned in layered networks with feedback.
The learning algorithms described in Section~\ref{sec: learning-ff} set the weights on the directed edges from each layer $\ell$ to the next higher layer $\ell+1$, that is, the ``upward'' edges.
Now the learning algorithm must also set the weights on the directed edges from each layer $\ell$ to the next lower layer $\ell-1$, i.e., the ``downward'' edges.

One reasonable approach is to separate matters, first learning the weights on the upward edges and then the weights on the downward edges.  Fortunately, we can rely on Lemma~\ref{lem: total-support}, which says that, if $c$ is any concept in a concept hierarchy $\mathcal C$, then $c \in supp_{1} (leaves(c))$.
That is, any concept is supported based only on its descendants, without any help from its parents.
This lemma implies that learning of upward edges can proceed bottom-up, as in Section~\ref{sec: learning-ff}.
We give some details below.

\subsection{Noise-Free Learning}

As in Section~\ref{sec: learning-ff}, we assume that the threshold is $\frac{(r_1+r_2)\sqrt{k}}{2}$, the initial weight for each upward edge is $w = \frac{1}{k^{\lmax+1}}$, and $\epsilon = \frac{r_2 - r_1}{r_1+r_2}$.
Here we also assume that the initial weight for each downward edge is $w$.\footnote{We are omitting mention here of some trivial technical assumptions, like small lower bounds on $\tau$.}
We assume that the network starts in a state in which no neuron in layer $\geq 1$ is firing.

Our main result, for noise-free learning, is:

\begin{theorem}
\label{thm: learning-feedback}
Let $\mathcal N$ be the network defined in this section, with maximum layer $\ell'_{max}$, and with learning rate $\eta = \frac{1}{4k}$.
Let $r_1, r_2$ be reals in $[0,1]$ with $r_1 \leq r_2$.
Let $\epsilon = \frac{r_2-r_1}{r_1+r_2}$.

Let $\mathcal C$ be any concept hierarchy, with maximum level $\lmax
\leq \ell'_{max}$.
Assume that the algorithm described in this section is executed:  
On the first pass, the concepts in $\mathcal C$ are presented according to a
$\sigma$-bottom-up presentation schedule, where $\sigma$ is
$O\left(\lmax \log(k) + \frac{1}{\epsilon}) \right)$.
The second pass is as described in Section~\ref{sec: second-pass}.
Then $\mathcal N$ $(r_1,r_2,f)$-learns $\mathcal C$.
\end{theorem}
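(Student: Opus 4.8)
The plan is to decompose the argument exactly as the paper decomposes its recognition proofs: first show that after the two-pass training the network's edge weights --- both the $uweights$ and the $dweights$ --- lie in the approximate (and suitably scaled) ranges for which Section~\ref{sec: uncertain-weights-2} guarantees $(r_1,r_2,f)$-recognition, and then read off $(r_1,r_2,f)$-learning directly from the definition. The key structural fact that makes this decomposition legal is Lemma~\ref{lem: total-support}: since $c \in supp_{1}(leaves(c))$ for every concept $c$, each $rep(c)$ fires on the upward pass whenever $c$ is shown, with no help from feedback, so the first pass can proceed purely bottom-up.

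\textbf{First pass.} During the first pass the $dweights$ are still at their initial value $w = \frac{1}{k^{\lmax+1}}$, so the downward potential into any neuron is at most $kw = \frac{1}{k^{\lmax}}$, which is negligible next to the threshold $\frac{(r_1+r_2)\sqrt{k}}{2}$ and next to the value $kw$ used in the WTA potential comparisons. Hence the firing pattern, the behavior of the (revised) WTA of Assumption~\ref{as:WTA2}, and the action of Oja's rule on the $uweights$ during the first pass coincide --- up to an additive error absorbable into the roundoff slack --- with the feed-forward setting of Section~\ref{sec: learning-ff}. Invoking Theorem~\ref{thm:noisefreelearning} for tree hierarchies and its general-hierarchy version (Lemmas~\ref{lem: learning-1} and~\ref{lem: learning-2}) with the stated $\sigma = O(\lmax\log k + 1/\epsilon)$, we conclude that after the first pass, for $b \geq 2$, $uweight(u,v) \in [\frac{1}{(1+\epsilon)\sqrt{k}}, \frac{1}{\sqrt{k}}]$ when $u,v \in R$ and $rep^{-1}(u) \in children(rep^{-1}(v))$, and $uweight(u,v) \in [0, \frac{1}{k^{\lmax+b}}]$ otherwise.

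\textbf{Second pass.} On the second pass the $uweights$ are frozen, the concepts are shown again bottom-up, and the $dgaged$ flags engage the appropriate neurons for the all-at-once Hebbian rule of Section~\ref{sec: second-pass}. When a level $\ell$ concept $c$ is shown, the upward pass makes $rep(c)$ fire at time $t+\ell$ and each $rep(d)$, $d \in children(c)$, fire at time $t+\ell-1$, and, by persistence under continued presentation, at all later times; so $rep(c)$ and each such $rep(d)$ are simultaneously firing, and the Hebbian update raises $dweight(rep(c),rep(d))$ to its target in $[fw_1, fw_2]$ (with $w_1,w_2$ close to $1$ after the $\frac{1}{\sqrt{k}}$ scaling), while any downward edge whose endpoints are not a true parent/child pair of $\mathcal C$ --- in particular every edge out of a non-$rep$ neuron --- never has both endpoints co-firing during a showing and stays at $w \leq \frac{1}{k^{\lmax+b}}$. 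Two things must be checked: that the engaged neuron at each step is really $rep(c)$ (this follows from the first-pass WTA analysis, which re-selects the same representative), and that showing a sibling $c'$ of $c$ does not strengthen $dweight(rep(c),rep(d))$ for a shared child $d$; the latter is exactly where the overlap bound enters, and Theorem~\ref{th: overlap-ff} (in the generalized form noted after it), using $o < r_1$, shows that $rep(c)$ does not fire when only $leaves(c')$ is presented, so the Hebbian rule is never triggered on that edge.

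\textbf{Conclusion, and the hard part.} The weights from the two passes meet the hypotheses of the approximate-weights recognition theorem for networks with feedback, Theorem~\ref{th: overlap-ff-uncertain}, in its scaled form ($s = \frac{1}{\sqrt{k}}$, $w_1 = \frac{1}{1+\epsilon}$, $w_2 = 1$), whose two inequalities follow from $\epsilon = \frac{r_2-r_1}{r_1+r_2}$ together with the mild roundoff assumption, exactly as in Section~\ref{sec: learning-weight-ranges-3}. That theorem gives $(r_1,r_2,f)$-recognition of $\mathcal C$, which is precisely what $(r_1,r_2,f)$-learning with $\sigma$ repeats demands, completing the proof. The main obstacle is the second pass: pinning down its schedule so that at the instant of each Hebbian update exactly the right pairs $(rep(c),rep(d))$ are co-firing --- which needs the upward-propagation delays tracked alongside the persistence lemma (Lemma~\ref{lem: persistent firing}) --- and, under overlap, arguing via Theorem~\ref{th: overlap-ff} that no spurious downward edge ever sees both endpoints firing during a single showing. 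Everything else is a routine adaptation of the feed-forward learning analysis.
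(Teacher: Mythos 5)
Your proposal follows essentially the same route as the paper's: the paper likewise establishes the post-first-pass upward weight ranges (Lemma~\ref{lem: phase1}) and firing behavior (Lemma~\ref{lem: phase1b}), shows that the second pass sets the downward weights to $\frac{f}{\sqrt{k}}$ on parent-to-child edges and $0$ elsewhere (Lemma~\ref{lem: phase2}), and then concludes by invoking the scaled version of Theorem~\ref{th: overlap-ff-uncertain} with exactly your parameters $w_1 = \frac{1}{1+\epsilon}$, $w_2 = 1$, $s = \frac{1}{\sqrt{k}}$. The only cosmetic differences are that the paper's second pass uses a one-shot presentation with staggered firing times (child $reps$ at $t+\ell-1$, parent $rep$ at $t+\ell$) rather than your persistence-based framing, and that spurious downward edges are handled by an explicit final reset to $0$ rather than by your claim that they remain below $\frac{1}{k^{\lmax+b}}$ (which the untouched initial value $w = \frac{1}{k^{\lmax+1}}$ would not satisfy for $b \geq 2$).
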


\subsubsection{First learning pass}

As a first pass, we carry out the learning protocol from Section~\ref{sec: learning-ff} for all the concepts in the concept hierarchy $\mathcal C$, working bottom-up. 
Learning each concept involves applying Oja's rule for that concept, for enough steps to ensure that the weights of the upward edges end up within the bounds described in Section~\ref{sec: scaled}.

Consider the network after the first pass, when the weights of all the upward edges have reached their final values.  
At that point, we have that the network $(r_1,r_2)$-recognizes the given concept hierarchy $\mathcal C$, as described in Section~\ref{sec: learning-ff}.
Moreover, we obtain:
\begin{lemma}
\label{lem: phase1}
The weights of the edges after the completion of the first learning pass are as follows:
\begin{enumerate}
    \item  The weights of the upward edges from $reps$ of children to $reps$ of their parents are in the range $[\frac{1}{(1+\epsilon)\sqrt{k}}, \frac{1}{\sqrt{k}})]$, and the weights of the other upward edges are in the range $[0,\frac{1}{2^{lmax+b}}]$.
    \item  The weights of all downward edges are $w = \frac{1}{k^{\lmax+1}}$.
\end{enumerate}
\end{lemma}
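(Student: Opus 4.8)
The plan is to treat the two parts of the lemma separately, since they concern two disjoint sets of edges that evolve completely independently during the first pass.

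Part 2 is essentially immediate. In the first pass we run only the feed-forward learning protocol of Section~\ref{sec: learning-ff}, whose engagement mechanism (the WTA of Assumption~\ref{as:WTA} or~\ref{as:WTA2}) sets only the $ugaged$ flags; the $dgaged$ flag of every neuron stays $0$ throughout. Since a downward weight changes only at a step when $dgaged^u = 1$, no downward weight is ever modified during the first pass, so each retains its initial value $w = \frac{1}{k^{\lmax+1}}$. This gives Part 2.

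For Part 1, I would first argue that adding the downward edges does not alter the firing behavior relative to the feed-forward network analyzed in Section~\ref{sec: learning-trees} (trees) and Section~\ref{sec: learning-general} (general hierarchies). The key observation is that, throughout the first pass, every downward weight equals $w = \frac{1}{k^{\lmax+1}}$, so the downward potential $dpot^u(t)$ reaching any neuron $u$ is at most (number of firing layer-$(\ell+1)$ neighbors)$\cdot w \le k w = \frac{1}{k^{\lmax}}$, which is far below the slack absorbed by the exponent $b$ in the weight-range reformulation of Section~\ref{sec: learning-weight-ranges-3} (using $k \ge 2$). Hence this extra potential can never push a neuron over the threshold $\frac{(r_1+r_2)\sqrt{k}}{2}$ when it would not already fire in the feed-forward case, nor change the neuron selected by the WTA; during a repeated showing of a concept $c$ the only effect of feedback is that $rep(c)$ adds at most $w$ to the $reps$ of children of $c$, which are already firing because they are being shown. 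Moreover, Lemma~\ref{lem: total-support} guarantees that each concept is supported by its own leaves without any help from its parents, so the bottom-up schedule drives each $rep$ to fire exactly when required, just as in the feed-forward analysis. Since Oja's rule for the upward weights (Equation~\ref{eq: Oja2}) depends only on $upot^u$ and the layer-$(\ell-1)$ firing pattern $ux^u$, the trajectory of the upward weights in the first pass is identical to the one produced by the feed-forward protocol.

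With that established, Part 1 follows by quoting the learning analysis: after a $\sigma$-bottom-up training schedule with $\sigma = O(\lmax \log k + \frac{1}{\epsilon})$, the analysis of~\cite{LM21}, as reorganized via Lemmas~\ref{lem: learning-1} and~\ref{lem: learning-2} and the weight-range statement of Section~\ref{sec: learning-weight-ranges-3}, yields that the upward weights from $reps$ of children to $reps$ of their parents lie in $[\frac{1}{(1+\epsilon)\sqrt{k}}, \frac{1}{\sqrt{k}}]$ and all other upward weights lie in $[0, \frac{1}{k^{\lmax+b}}] \subseteq [0, \frac{1}{2^{\lmax+b}}]$. Combined with Part 2 this is exactly the statement of the lemma. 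The main obstacle is the non-interference claim of the previous paragraph: one has to check, uniformly over all showings (not just first showings) and all layers, that the $O(1/k^{\lmax})$ downward potential can neither cause spurious firing nor perturb the WTA's choice. This is routine given the generous slack built into the thresholds and initial weights, but it is the one place where the feedback network genuinely differs from the feed-forward one, and so deserves explicit care.
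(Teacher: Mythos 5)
Your proof is correct and matches the paper's (implicit) justification: the paper states this lemma without proof, relying on exactly the two observations you make --- the downward weights are never engaged during the first pass and so keep their initial value $w$, and the upward weights follow the feed-forward analysis of Section~\ref{sec: learning-ff} and~\cite{LM21} as repackaged in Section~\ref{sec: learning-weight-ranges-3}. Your explicit check that the untrained downward edges cannot perturb firing or the WTA's choice is the one substantive point the paper glosses over, and your treatment of it is sound (in fact, during a bottom-up showing of a level-$\ell$ concept no layer-$(\ell+1)$ neuron fires at the times relevant to the WTA's decision, so the downward potential there is exactly zero, which makes your $kw$ bound more than sufficient).
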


As a consequence of these weight settings, we can prove the following about the network resulting from the first pass:

\begin{lemma}
\label{lem: phase1b}
The following properties hold of the network that results from the completion of the first learning pass:
\begin{enumerate}
    \item Suppose $c$ is any concept in $C$.  
    Suppose that $c$ is shown (that is, the set $leaves(c)$ is presented) at time $t$, and no inputs fire at any other times.
    Then $rep(c)$ fires at time $t + level(c) = t + layer(rep(c))$, and does not fire at any earlier time.
    \item Suppose $c$ is any concept in $C$.  
    Suppose that $c$ is shown at time $t$, and no inputs fire at any other times.
    Suppose $c'$ is any other concept in $C$ with $level(c') = level(c)$.  Then $rep(c')$ does not fire at time $t+level(c)$.
    \item Suppose that $u$ is a neuron in the network that is not a $rep$ of any concept in $C$.
    Suppose that precisely the set of level $0$ concepts in $C$ is presented at time $t$, and no inputs fire at any other times.
    Then neuron $u$ never fires.
\end{enumerate}
\end{lemma}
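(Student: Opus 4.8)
The plan is to prove one characterization of the network's response to a one-shot input and then read off the three parts as corollaries. Write $w=\frac{1}{k^{\lmax+1}}$ for the common downward weight and $\tau=\frac{(r_1+r_2)\sqrt k}{2}$ for the threshold. By Lemma~\ref{lem: phase1} the upward weights lie in exactly the ranges handled by the scaled recognition results of Section~\ref{sec: scaled} (taking $w_1=\frac{1}{1+\epsilon}$, $w_2=1$, scaling factor $s=\frac{1}{\sqrt k}$), while \emph{every} downward weight is the tiny value $w$ (the downward weights are not learned until the second pass). The key fact I would isolate is: \emph{if $B\subseteq C_0$ is presented at time $t$ and no input neuron fires at any other time, then for every $t'\ge t$ the neurons firing at time $t'$ are exactly certain $reps$ of level-$(t'-t)$ concepts (none if $t'-t>\lmax$), and moreover $rep(d)$ fires at $t'$ whenever $level(d)=t'-t$ and $d\in supp_{r_2}(B)$, while $rep(d)$ does not fire at $t'$ whenever $level(d)=t'-t$ and $d\notin supp_{r_1}(B)$.}

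I would prove this by induction on $t'$. The base case $t'=t$ is immediate from the definition of ``presented'' and the fact that $supp_r(B,0)=B\cap C_0$. For the step, fix a neuron $v$ in layer $\ell$; its potential at time $t'+1$ is $upot+dpot$, with $upot$ coming from the layer-$(\ell-1)$ neurons firing at $t'$ and $dpot$ from the layer-$(\ell+1)$ neurons firing at $t'$. By the induction hypothesis those firing neurons are all in layer $t'-t$, so $upot=0$ unless $\ell-1=t'-t$, and $dpot=0$ unless $\ell+1=t'-t$; in the latter ``feedback'' case they are $reps$, so there are at most $|C_{\ell+1}|\le k^{\lmax-\ell}$ of them and $dpot\le k^{\lmax-\ell}\cdot w=k^{-(\ell+1)}\le\frac1k<\tau$ by the (trivial) lower bound on $\tau$. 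Hence $v$ can only fire at $t'+1$ when $\ell=(t'+1)-t$ and then only from $upot$, so the behavior in layer $(t'+1)-t$ at time $t'+1$ coincides with that of the feed-forward network of Section~\ref{sec: scaled} driven by the $reps$ that the induction hypothesis says fire one step earlier. The scaled analogues of Lemma~\ref{lem: no-rep-no-fire-uncertain} and Lemma~\ref{lem: fires-supported-ff-uncertain} then finish the step: any such firing $v$ is a $rep$; $rep(d)$ fires when $d\in supp_{r_2}(B)$ (using $r_2(2w_1-1)\ge r_1$, which with $w_1=\frac1{1+\epsilon}$ is exactly the hypothesis $\epsilon=\frac{r_2-r_1}{r_1+r_2}$); and $rep(d)$ does not fire when $d\notin supp_{r_1}(B)$ (using the second scaled inequality).

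From this the three parts follow quickly. For Part~1, take $B=leaves(c)$; Lemma~\ref{lem: total-support} and Lemma~\ref{lem: support-monotonic} (with $r_2\le1$) give $c\in supp_{r_2}(leaves(c))$, so $rep(c)$ fires at time $t+level(c)$, and since the characterization forces every firing time of $rep(c)$ to equal $t+level(c)$, it fires at no earlier time. For Part~3, take $B=C_0$; the characterization says every firing neuron is a $rep$, so any $u$ that is not a $rep$ of a concept never fires. For Part~2, take $B=leaves(c)$ and $c'\neq c$ with $level(c')=level(c)$; by the characterization it suffices that $c'\notin supp_{r_1}(leaves(c))$, which is the combinatorial core of Theorem~\ref{th: overlap-ff}: assuming $o<r_1$, one shows by induction on $level(d)$ that every descendant $d$ of $c'$ that is not a descendant of $c$ lies outside $supp_{r_1}(leaves(c))$ --- the base case is $d\notin leaves(c)$, and in the step $d$ has at most $o\cdot k<r_1k$ children that are descendants of $c$ (each such child lies in $children(d)\cap children(d')$ for some other level-$level(d)$ concept $d'$ on the path from $c$, so the limited-overlap bound applies), while all its remaining children are outside $supp_{r_1}(leaves(c))$ by induction; taking $d=c'$ gives the claim.

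The step I expect to be the main obstacle is the inductive step of the characterization: one must check that the downward feedback --- which really is present, since the $reps$ that fire in one layer do push potential down --- never accumulates to $\tau$ and never starts a cascade. What makes this tractable is the observation that a one-shot input keeps all firing confined to a single layer advancing by one per step, so at any time the only cross-layer potential a neuron sees is the ``forward'' contribution from the layer below (identical to the feed-forward potential) plus a one-hop feedback contribution bounded by $|C_{\ell+1}|\cdot w=k^{-(\ell+1)}$, which is negligible against $\tau$. Once that confinement is in hand, everything reduces to the already-established scaled feed-forward results of Section~\ref{sec: scaled} and the overlap argument of Theorem~\ref{th: overlap-ff}.
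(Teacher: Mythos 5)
Your proposal is correct, and it fills in what the paper leaves as a three-line sketch: the paper proves Property~1 by citing the feed-forward analysis of~\cite{LM21}, Property~2 by an induction on $level(c)$ using limited overlap, and Property~3 by an induction on time ``analogously to Lemma~\ref{lem: no-rep-no-fire-feedback-uncertain}.'' Your route is the same in substance --- reduce to the scaled feed-forward results of Section~\ref{sec: scaled} after showing the un-learned downward weights are harmless --- but your organization is better: the single time-induction ``confinement'' characterization (firing restricted to layer $t'-t$, with the $supp_{r_2}$/$supp_{r_1}$ dichotomy) simultaneously yields Properties~1 and~3 and the ``does not fire at any earlier time'' clause of Property~1, and it makes explicit the one point the paper's sketch addresses only for Property~3, namely that the residual downward potential (at most $|C_{\ell+1}|\cdot k^{-(\lmax+1)} \leq 1/k < \tau$) can never reach threshold or start a cascade; without that observation the citation of the feed-forward analysis for Property~1 is not literally licensed, since the second-pass network does have feedback edges. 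Your Part~2 argument is exactly the combinatorial core of Theorem~\ref{th: overlap-ff}, and you are right to flag that it needs $o < r_1$; that hypothesis does not appear in the statement of Lemma~\ref{lem: phase1b} but is required (and is stated explicitly in Theorem~\ref{th: overlap-ff}), so strictly speaking it should be carried as an assumption here as well. The only cosmetic caveat is that your reduction uses the scaled versions of Lemmas~\ref{lem: no-rep-no-fire-uncertain} and~\ref{lem: fires-supported-ff-uncertain}, which the paper itself only asserts ``can be proved by slightly adjusting'' the unscaled proofs --- but that is a debt the paper already owes, not one you introduce.
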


\begin{proof}
Property 1 follows from the analysis in~\cite{LM21}, plus the fact that $level(c)$ is the time it takes to propagate a wave of firing from the inputs to $layer(rep(c)$.
Property 2 can be proved by induction on $level(c)$, using the limited-overlap property.
Property 3 can be proved by induction on the time $t' \geq t$, analogously to Lemma~\ref{lem: no-rep-no-fire-feedback-uncertain}.

\end{proof}

\subsubsection{Second learning pass}
\label{sec: second-pass}

The second pass sets all the weights for the downward edges.
Here, to be simple, we set the weight of each edge to its final value in one learning step, rather than proceeding incrementally.\footnote{This seems reasonable since we are not considering noise during this second pass.  Of course, that might be interesting to consider at some point.}
We aim to set the weights of all the ``important'' downward edges, that is, those that connect the $rep$ of a concept to the $rep$ of any of its children, to $\frac{f}{\sqrt{k}}$, and the weights of all other downward edges to $0$.

We first set the weights on the ``important'' downward edges.
For this, we proceed level by level, from $1$ to $\lmax$.
The purpose of the processing for level $\ell$ is to set the weights on all the ``important'' downward edges from layer $\ell$ to layer $\ell-1$ to $\frac{f}{\sqrt{k}}$, while leaving the weights of the other downward edges equal to the initial weight $w$.

For each particular level $\ell$, we proceed sequentially through the level $\ell$ concepts in $C$, one at a time, in any order.
For each such level $\ell$ concept $c$, we carry out the following three steps:
\begin{enumerate}
    \item  Show concept $c$, that is, present the set $leaves(c)$, at some time $t$.  By Theorem~\ref{lem: phase1}, the $reps$ of all children of $c$ fire at time $t + \ell - 1$, and  $rep(c)$ fires at time $t+\ell$.
    \item  Engage all the layer $\ell-1$ neurons to learn their incoming $dweights$ at time $t+\ell+1$, by setting their $dgaged$ flags.
    \item  \emph{Learning rule:}  At time $t+\ell+1$, each $dgaged$ layer $\ell-1$ neuron $u$ that fired at time $t+\ell-1$ sets the weights of any incoming edges from layer $\ell$ neurons that fired at time $t+\ell$ (and hence contributed potential to $u$) to $\frac{f}{\sqrt{k}}$.  
    Neuron $u$ does not modify the weights of other incoming edges.
\end{enumerate}

Note that, in Step 3, each neuron $u$ that fired at time $t+\ell-1$ will set the weight of at most one incoming downward edge to $\frac{f}{\sqrt{k}}$; this is the edge from $rep(c)$, in case $u$ is the $rep$ of a child of $c$. 

Also note that $u$ does not reduce the weights of other incoming downward edges during this learning step.  This is to allow $u$ to receive  potential from other layer $\ell$ neurons when those concepts are processed. This is important because $u$ may represent a concept with multiple parents, and must be able to receive potential from all parents when they are processed.

Finally, note that, to implement this learning rule, we need some mechanism to engage the right neurons at the right times.
For now, we just treat this abstractly, as we did for learning in feed-forward networks in Section~\ref{sec: learning-ff}.

At the end of the second pass, each neuron $u$ resets the weights of all of its incoming downward edges that still have the initial weight $w$, to $0$.  The neurons can all do this in parallel.

\begin{lemma}
\label{lem: phase2}
The weights of the edges after the completion of the second learning pass are as follows:
\begin{enumerate}
    \item  The weights of the upward edges from $reps$ of children to $reps$ of their parents are in the range $[\frac{1}{(1+\epsilon)\sqrt{k}}, \frac{1}{\sqrt{k}})]$, and the weights of the other upward edges are in the range $[0,\frac{1}{2^{lmax+b}}]$.
    \item  The weights of the downward edges from $reps$ of parents to $reps$ of their children are $\frac{f}{\sqrt{k}}$, and the weights of the other downward edges are $0$.
\end{enumerate}
\end{lemma}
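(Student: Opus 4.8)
The plan is to prove the two parts in turn: Part~1 is immediate, and Part~2 is a level-by-level induction that tracks the firing produced by each concept showing during the second pass. For Part~1, note that the second pass never changes an upward weight: in the processing of level $\ell$, Steps~1--3 only engage layer-$(\ell-1)$ neurons to rewrite their \emph{incoming downward} edges, and the concluding clean-up step likewise only rewrites downward edges. Hence the upward weights are exactly those left by the first pass, so Part~1 is literally the statement of Lemma~\ref{lem: phase1}, Part~1.

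For Part~2, I would induct on the level $\ell$ being processed, with the invariant: once all concepts of levels $\le \ell$ have been processed, every downward edge $rep(c)\to rep(c')$ with $c'\in children(c)$ and $level(c)\le\ell$ has weight $\frac{f}{\sqrt k}$, and every other downward edge still carries its initial weight $w=\frac1{k^{\lmax+1}}$ (downward edges out of layers $>\ell$ have not been touched at all). The base case is vacuous. Within the inductive step one processes the level-$\ell$ concepts one at a time, so it suffices to show that processing a single concept $c$ sets $weight(rep(c),rep(c'))=\frac{f}{\sqrt k}$ for exactly the $c'\in children(c)$ and changes no other downward edge. When $c$ is shown at time $t$ in the essentially-first-pass network, Lemma~\ref{lem: phase1b} gives that $rep(c)$ fires at time $t+\ell$ and never earlier (Property~1), that no other level-$\ell$ concept's $rep$ fires at time $t+\ell$ (Property~2), and that no non-$rep$ neuron ever fires (Property~3); hence $rep(c)$ is the unique layer-$\ell$ neuron firing at time $t+\ell$. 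For layer $\ell-1$ at time $t+\ell-1$: each $rep(c')$ with $c'\in children(c)$ fires, since $leaves(c')\subseteq leaves(c)$ yields $c'\in supp_{r_2}(leaves(c))$ (by Lemma~\ref{lem: total-support}, Lemma~\ref{lem: support-monotonic}, and the evident monotonicity of $supp_r$ in its input set) and the first-pass network $(r_1,r_2)$-recognizes $\mathcal C$; and no $rep(d)$ with $level(d)=\ell-1$ and $d\notin children(c)$ fires at time $t+\ell-1$, because presenting $leaves(c)$ coincides with simultaneously presenting $\bigcup_{c'\in children(c)}leaves(c')$, so the generalization of Theorem~\ref{th: overlap-ff} to a simultaneously-shown set of same-level concepts applies (using $o<r_1$). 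Consequently, when the layer-$(\ell-1)$ neurons are engaged at time $t+\ell+1$, the learning rule acts on precisely the neurons $rep(c')$, $c'\in children(c)$, each of which has exactly one incoming downward edge from a layer-$\ell$ neuron that fired at time $t+\ell$ — the edge from $rep(c)$ — so exactly the edges $rep(c)\to rep(c')$ are set to $\frac{f}{\sqrt k}$, and (since the rule never reduces any other incoming downward weight) nothing else is altered. Processing $c$ also cannot disturb edges set while processing earlier level-$\ell$ concepts (their source $rep(c'')$ does not fire when $c$ is shown, by Property~2) nor any edge between other layers (only layer-$(\ell-1)$ neurons are engaged), so the invariant is restored at level $\ell$.

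After all $\lmax$ levels are processed, the invariant says the downward edges from $rep$s of parents to $rep$s of their children carry $\frac{f}{\sqrt k}$ and every remaining downward edge still carries $w$; the concluding clean-up resets exactly the edges still equal to $w$ to $0$, giving Part~2 (when $f\ne0$ the important edges have weight $\frac{f}{\sqrt k}\ne w$, hence are left untouched; when $f=0$ they already hold the target value $0$).

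The main obstacle is the firing-pattern claim underpinning the inductive step: that each showing of $c$ produces only the transient wave recognizing the descendants of $c$ (each $rep(e)$ firing exactly at time $t+level(e)$) and nothing else, at \emph{all} times — both during the upward wave and once the downward edges could in principle feed back. This is precisely where the limited-overlap hypothesis, in the sharp form $o<r_1$ of Theorem~\ref{th: overlap-ff}, is needed, and one must additionally check that the finitely many downward weights already raised to $\frac{f}{\sqrt k}$, together with $o<r_1$, still cannot push a spurious neuron over the threshold $\frac{(r_1+r_2)\sqrt k}{2}$ or sustain firing between successive showings. The remaining ingredient — that the $dgaged$ flags are set on the correct neurons at the correct times — we continue to treat abstractly, as for learning in feed-forward networks.
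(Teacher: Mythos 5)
Your proposal is correct and follows essentially the same route as the paper: Part~1 is immediate because the second pass never touches upward weights, and Part~2 reduces to the firing guarantees of Lemma~\ref{lem: phase1b} together with the design of the three-step rule and the final clean-up. The paper's own proof is only a few lines and dispatches the point you rightly flag as the main obstacle---that the firing pattern of Lemma~\ref{lem: phase1b} must persist even as downward weights are being raised to $\frac{f}{\sqrt{k}}$ during the second pass---with a brief parenthetical remark, whereas you make the level-by-level invariant and the role of $o<r_1$ explicit; this added detail is consistent with, and a useful elaboration of, the paper's argument.
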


\begin{proof}
Property 1 follows from Lemma~\ref{lem: phase1} and the fact that the weights of the upward edges are unchanged during the second pass.

For Property 2, the second pass is designed to set precisely the claimed weights.
This depends on the neurons firing at the expected times.
This follows from Lemma~\ref{lem: phase1b}, once we note that the three claims in that lemma remain true throughout the second pass.
(The first two properties depend on upward weights only, which do not change during the second pass.  Property 3 follows because only $rep$ neurons have their incoming weights changing during the second pass.)

\end{proof}

With these weights, we can now prove the main theorem:

\begin{proof}
(Of Theorem~\ref{thm: learning-feedback}:)
We use a scaled version of Theorem~\ref{th: overlap-ff-uncertain}.
Here we use $w_1 = \frac{1}{1+\epsilon}$, $w_2 = 1$, and a scaling factor $s = \frac{1}{\sqrt{k}}$.\footnote{The scaled case isn't actually worked out in Section 6 but should follow as a natural extension of the un-scaled results.}

\end{proof}

\subsection{Noisy Learning}

We can extend the results of the previous section to allow noisy learning in the first pass.
For this, we use a threshold of $(\frac{(r_1+r_2)k}{2})\bar{w}$ and retain the initial edge weights of $w = \frac{1}{k^{\lmax + 1}}$.
We define $w_1 =  1-\frac{r_2-r_1}{25}$, $w_2 = 1-\frac{r_2-r_1}{25}$, and the scaling factor $s$ to be $\bar{w} = \frac{1}{\sqrt{pk + 1 - p}}$.

The ideas are analogous to the noise-free case.  The differences are:
\begin{enumerate}
    \item  The first phase continues long enough to complete training for the weights of the upward edges using Oja's rule.
    \item The weights of the upward edges from $reps$ of children to $reps$ of their parents are in the range $[(1-\frac{r_2-r_1}{25}) \bar{w}, (1+\frac{r_2-r_1}{25}) \bar{w}]$, and the weights of the other upward edges are in the range $[0,\frac{1}{k^{2\lmax}}]$.
    \item  The weights of the downward edges are set to $f \bar{w}$.
\end{enumerate}

With these changes, we can obtain a theorem similar to Theorem~\ref{thm: learning-feedback} but with a larger training time, yielding $(r_1,r_2,f)$-learning of $\mathcal C$.


\section{Conclusions}
\label{sec: conclusions}

In this paper, we have continued our study from~\cite{LM21}, of how hierarchically structured concepts might be represented in brain-like neural networks, how these representations might be used to recognize the concepts, and how these representations might be learned.
In~\cite{LM21}, we considered only tree-structured concepts and feed-forward layered networks.
Here we have made two important extensions:  our data model may include limited overlap between the sets of children of different concepts, and our networks may contain feedback edges.
We have considered algorithms for recognition and algorithms for learning.
We think that the most interesting theoretical contributions in the paper are:
\begin{enumerate}
    \item  Formal definitions for concept hierarchies with overlap, and networks with feedback.
    \item  Analysis for time requirements for robust recognition; this is especially interesting for general concept hierarchies in networks with feedback. 
    \item  Extensions of the recognition results to allow approximate edge weights and scaling.
    \item  The handling of Winner-Take-All mechanisms during learning, in the presence of overlap.
    \item  Reformulation of learning behavior in terms of achieving certain ranges of edge weights.
    \item  A simple strategy for learning bidirectional weights.
\end{enumerate}  
On the practical side, for computer vision, our recognition algorithms may provide plausible explanations for how objects in complex scenes can be identified, using both upward and downward information flow in a neural network.

There are many possible directions for extending the work.  For example:

\vspace{-.3cm}
\paragraph{Concept recognition:}
It would be interesting to study \emph{recognition behavior after partial learning}. 
The aim of the learning process is to increase weights sufficiently to guarantee recognition of a concept when partial information about its leaves is presented.
Initially, even showing all the leaves of a concept $c$ should not be enough to induce $rep(c)$ to fire, since the initial weights are very low.
At some point during the learning process, after the weights increase sufficiently, presenting all the leaves of $c$ will guarantee that $rep(c)$ fires. 
As learning continues, fewer and fewer of the leaves will be needed to guarantee firing.
It would be interesting to quantify the relationship between amount of learning and the number of leaves needed for recognition.

Also, the definition of robustness that we have used in this paper involves just omitting some inputs.
In would be interesting to also consider other types of noise, such as \emph{adding extraneous inputs}.
How well do our recognition algorithms handle this type of noise?

Another type of noise arises if we replace the deterministic threshold elements with neurons that fire stochastically, based on some type of sigmoid function of the incoming potential.  
How well do our recognition algorithms handle this type of noise?
Some initial ideas in this direction appear in Appendix~\ref{sec: stochastic}, but more work is needed.

\vspace{-.3cm}
\paragraph{Learning of concept hierarchies:}
Our learning algorithms depend heavily on Winner-Take-All subnetworks.  We have treated these abstractly in this paper, by giving formal assumptions about their required behavior.
It remains to \emph{develop and analyze networks implementing the Winner-Take-All assumptions}.

Another interesting issue involves possible \emph{flexibility in the order of learning concepts}.  
In our algorithms, incomparable concepts can be learned in any order, but children must be completely learned before we start to learn their parents.  We might also consider some interleaving in learning children and parents. 
Specifically, in order to determine $rep(c)$ for a concept $c$, according to our learning algorithms, we would need for the $reps$ of all of $c$'s children to be already determined, and for the children to be learned sufficiently so that their $reps$ can be made to fire by presenting ``enough'' of their leaves.
But this does not mean that the child concepts must be completely learned, just that they have been learned sufficiently that it is possible to make them fire (say, when all, or almost all, of their leaves are presented).  This suggests that it is possible to allow some interleaving of the learning steps for children and parents. This remains to be worked out.

Another issue involves \emph{noise in the learning process}.
In Sections~\ref{sec: learning-ff} and~\ref{sec: learning-feedback}, we have outlined results for noisy learning of weights of upward edges, in the various cases studied in this paper, but full details remain to be worked out.
The approach should be analogous to that in~\cite{LM21}, based on presenting randomly-chosen subsets of the leaves of a concept being learned.
The key here should be to articulate simple lemmas about achieving approximate weights with high probability.
It also remains to consider noise in the learning process for weights of downward edges.

Finally, our work on learning of weights of upward edges has so far relied on Oja's learning rule.
It would be interesting to consider \emph{different learning rules} as well, comparing the guarantees that are provided by different rules, with respect to speed of learning and robustness to noise during the learning process.


\vspace{-.3cm}
\paragraph{Different data models, different network assumptions, different representations:}
One can consider many variations on our assumptions.
For example, what is the impact of loosening the very rigid assumptions about the shape of concept hierarchies?
What happens to the results if we have limited connections between layers, rather than all-to-all connections?  Such connections might be randomly chosen, as in~\cite{Valiant}.
Also, we have been considering a simplified representation model, in which each concept is represented by precisely one neuron; can the results be extended the to accommodate more elaborate representations?


\vspace{-.3cm}
\paragraph{Experimental work in computer vision:}
Finally, it would be interesting to try to devise experiments in computer vision that would reflect some of our theoretical results.
For example, can the high-latency recognition behavior that we identified in Section~\ref{sec: time-bounds-recognition-overlap-feedback}, involving extensive information flow up and down the hierarchy, be exhibited during recognition of visual scenes?

\subsubsection{Acknowledgments}
We thank Cristina Gava for her suggestions.
This work was supported in part by the National Science Foundation under awards CCF-1810758, CCF-2139936, and CCF-2003830, as well as by EPSRC EP/W005573/1.

\bibliography{OF}
\appendix

\section{Example:  An Italian Restaurant Catering Menu}
\label{app: restaurant}

Here is an example to illustrate some of the ideas of Section~\ref{sec:datamodel}.  The example represents the catering menu of an Italian restaurant, called ``Tutto Italiano'', which has branches in Boston and London.  The catering menu consists of four regional meals, each consisting of four dishes.  Each dish has four main ingredients.

The four meals correspond to the following regions of Italy:  Emilia-Romagna (Bologna, Parma), Campania (Naples), Sicilia (Palermo, Catania), and Toscana (Florence, Pisa).

In terms of our model of Section~\ref{sec:datamodel}, the maximum level $\lmax$ is $2$.  The level $2$ concepts, $C_2$, are the meals, the level $1$ concepts, $C_1$, are the individual dishes, and the level $0$ concepts, $C_0$, are the ingredients.  We define $k=4$, $r_1 = r_2 = r = 3/4$, $o = 1/2$, and $f=1$.
Using $r = 3/4$ means that three ingredients should be enough to identify a dish.
Using $o = 1/2$ means that a dish could have two ingredients that are shared with other dishes, in addition to two unique ingredients.
The set $C$ of concepts includes all of the ingredients, dishes, and meals.


\subsection{The catering menu}\label{sec:Italy}

We allow the following ingredients to be shared, because they are so common in Italian cooking: 
tomatoes, onions, breadcrumbs, parmesan cheese, garlic, olive oil, lemon, and salt.
Every other ingredient is unique to one dish.
Every recipe has at least two unique ingredients.

\paragraph{Emilia-Romagna meal:}
This consists of the following four dishes, with the listed ingredients:
\begin{itemize}
    \item  \emph{Pasta Bolognese:} tagliatelle, ground beef, tomatoes, parmesan cheese.
    \item  \emph{Cotoletta di vitello alla Bolognese:}  veal cutlets, breadcrumbs, prosciutto, parmesan cheese.
    \item  \emph{Insalata di radicchio:}  radicchio, goat cheese, speck, balsamic vinegar.
    \item \emph{Zuppa Inglese:}  ladyfingers, custard, Alchermes liqueur, cocoa powder.
\end{itemize}

\paragraph{Campania meal:}
\begin{itemize}
\item \emph{Pizza Margherita:}  pizza dough, tomatoes, mozzarella, basil.
\item \emph{Spaghetti alla puttanesca:}  spaghetti, olives, tomatoes, anchovies.
\item \emph{Acqua pazza:}  cod, fennel, tomato, chili peppers.
\item \emph{Struffoli:}  dough balls, honey, almonds, colored sprinkles.
\end{itemize}
\paragraph{Sicilia meal:}
\begin{itemize}
\item
\emph{Carciofi al forno:}  artichokes, lemon, pancetta, breadcrumbs.
\item
\emph{Pasta e cavolfiore:}  penne, cauliflower, raisins, pecorino cheese.
\item \emph{Pesce spada:}  swordfish, capers, butter, olive oil.
\item \emph{Cannoli:}  cannoli shells, ricotta, sugar, pistachios.
\end{itemize}

\paragraph{Toscana meal:}
\begin{itemize}
    \item \emph{Ribollita:}  cannellini beans, carrot, onion, olive oil.
    \item \emph{Risotto al Chianti:}  arborio rice, chianti, onion, celery.
    \item \emph{Bistecca Fiorentina:}  steak, 
    spinach, rosemary, 
    olive oil.
    \item \emph{Pesche con Amaretti:}  peaches, amaretti biscuits, marsala, lemon.
\end{itemize}

\hide{
We allow the following ingredients to be shared, because they are so common in Italian cooking: 
tomatoes, chili peppers, onions, breadcrumbs, parmesan cheese, basil, garlic, olive oil, lemon, sugar, and salt.
Every other ingredient is unique to one dish.

Every recipe has at least two unique ingredients.  
In the lists below, the unique ingredients are written in BOLD.\footnote{Please excuse the mixture of Italian and English in naming the dishes and ingredients.}

\paragraph{Emilia-Romagna meal:}
This consists of the following four dishes, with the listed ingredients:
\begin{itemize}
    \item  \emph{Pasta Bolognese:} TAGLIATELLE, GROUND BEEF, tomatoes, parmesan cheese.
    \item  \emph{Cotoletta di vitello alla Bolognese:}  VEAL CUTLETS, breadcrumbs, PROSCIUTTO, parmesan cheese.
    \item  \emph{Radicchio salad:}  RADICCHIO, GOAT CHEESE, SPECK, BALSAMIC VINEGAR.
    \item \emph{Zuppa Inglese:}  LADYFINGERS, CUSTARD, ALCHERMES LIQUEUR, COCOA POWDER.
\end{itemize}

\paragraph{Campania meal:}
\begin{itemize}
\item \emph{Pizza Margherita:}  PIZZA DOUGH, tomatoes, MOZZARELLA, basil.
\item \emph{Spaghetti alla puttanesca:}  SPAGHETTI, OLIVES, tomatoes, ANCHOVIES.
\item \emph{Acqua pazza:}  COD, FENNEL, tomato, chili peppers.
\item \emph{Struffoli:}  DOUGH BALLS, HONEY, almonds, COLORED SPRINKLES.
\end{itemize}
\paragraph{Sicilia meal:}
\begin{itemize}
\item
\emph{Baked artichokes:}  ARTICHOKES, lemon, PANCETTA, breadcrumbs.
\item
\emph{Cauliflower pasta:}  PENNE, CAULIFLOWER, RAISINS, pecorino cheese.
\item \emph{Pesce spada:}  SWORDFISH, CAPERS, BUTTER, olive oil.
\item \emph{Cannoli:}  CANNOLI SHELLS, RICOTTA, sugar, PISTACHIOS.
\end{itemize}

\paragraph{Toscana meal:}
\begin{itemize}
    \item \emph{Ribollita soup:}  CANNELLINI BEANS, CARROT, onion, olive oil.
    \item \emph{Risotto al Chianti:}  ARBORIO RICE, CHIANTI, onion, CELERY.
    \item \emph{Bistecca Fiorentina:}  STEAK, 
    SPINACH, ROSEMARY, 
    olive oil.
    \item \emph{Pesche con Amaretti:}  PEACHES, AMARETTI BISCUITS, MARSALA, lemon.
\end{itemize}
}

\begin{figure}
\includegraphics[width=\textwidth]{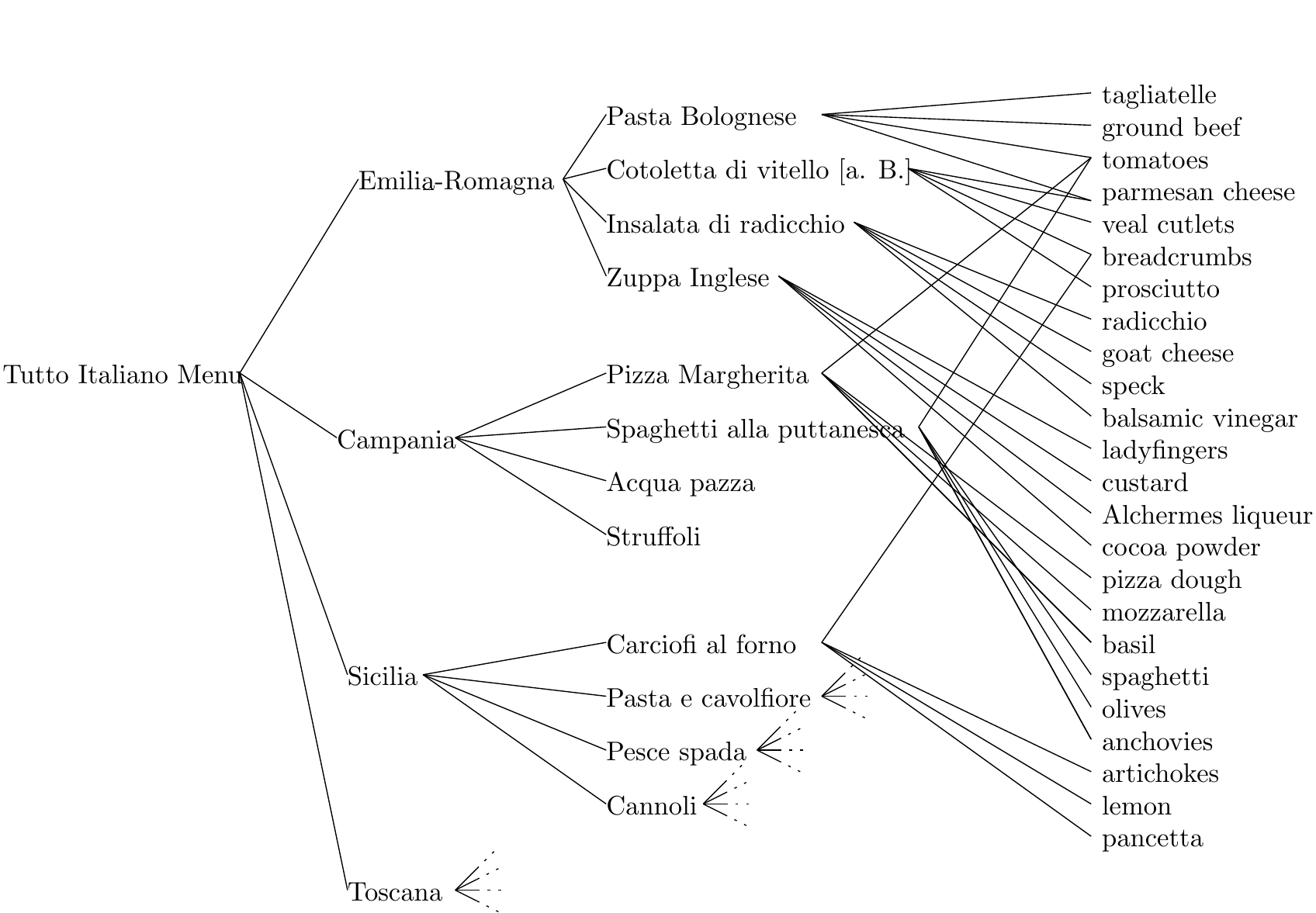}
\caption{A graphical representation of the catering menu example presented in Section~\ref{sec:Italy}.}
\end{figure}

\subsection{Support}

We illustrate the $supp_r$ definition for $r = 3/4$ and the $supp_{r,f}$ definition for $r = 3/4$ amd $f = 1$.
In both cases, we let $B$ be the following set of ingredients: 
\emph{artichokes, basil, breadcrumbs, butter, cannellini beans, capers, carrot, cauliflower, chili pepper, cod, lemon, olive oil, onion, parmesan cheese, pecorino cheese, pistachios, ricotta, spinach, steak, sugar, tomatoes.} 

For $supp_{3/4}(B)$, we follow the construction in Definition~\ref{def: supp1}.  We get that $S(0)$ is the set $B$ of ingredients listed above.  $S(1)$ consists of the following dishes:  \emph{acqua pazza, baked artichokes, pesce spada, cannoli, ribollita soup, and bistecca Fiorentina,} because $B$ contains at least three of the ingredients of each of these dishes. $S(2)$ consists of just the \emph{Sicilia} meal, because $S(1)$ contains three of its dishes.  The set $supp_{3/4}(B)$ is the union $S(0) \cup S(1) \cup S(2)$.

For $supp_{3/4,1}(B)$, we follow Definition~\ref{def: supp2}.  The key difference is that the set $supp_{3/4,1}(B)$ includes, in addition to all of the elements of $supp_{3/4}(B)$, the new dish \emph{cauliflower pasta}.  This is included based on the presence in $B$ of two of its ingredients, \emph{cauliflower} and \emph{pecorino chese}, together with the presence in $supp_{3/4}(B)$ of the \emph{Sicilia} meal.
Technically, $S(1,1)$ includes \emph{baked artichokes}, \emph{Pesce spada}, and \emph{cannoli} (as well as other dishes), $S(2,2)$ includes \emph{Sicilia}, and S(1,3) includes \emph{cauliflower pasta}.

How might we interpret these support definitions, for this example?  One might imagine that a visitor to the restaurant kitchen sees a large set of ingredients strewn haphazardly around the kitchen counter, and tries to deduce which dishes and which meals are being prepared. 
Some of the ingredients for a dish might be missing from the counter.
The visitor might try to do this by computing either $supp_{3/4}(B)$ or $supp_{3/4,1}(B)$, where $B$ is the set of observed ingredients.
Either of these provides a likely set of dishes and meals, by filling in some missing ingredients.


\section{Recognition in Networks with Stochastic Firing}
\label{sec: stochastic}

Another type of uncertainty, besides approximate weights, arises when neuron firing is determined stochastically, for example, using a standard sigmoid function.  See~\cite{LynchMusco-arxiv21} for an example of a model that uses this strategy.
In this case, we cannot make absolute claims about recognition, but we would like to assert that correct recognition occurs with high probability.
Here we consider this type of uncertainty in the situation where weights are exactly $1$ or $0$, as in Section~\ref{sec: recognition-ff-exact}.  Extension to allow approximate weights, as well as to networks with feedback, is left for future work.

Following~\cite{LynchMusco-arxiv21}, we assume that the potential incoming to a neuron $u$, $pot^u$, is adjusted by subtracting a real-valued $bias$ value, and the resulting adjusted potential, $x$, is fed into a standard sigmoid function with temperature parameter $\lambda$, in order to determine the firing probability $p$.
Specifically, we have:
\[
p(x) = \frac{1}{1 + e^{-x/\lambda}},
\]
where $x = pot^u - bias$.

Let $\delta$ be a small target failure probability.
In terms of our usual parameters $n$, $f$, $k$, and $\lmax$, and the new parameters $\lambda$ and $\delta$, our goal is to determine values of $r_1$ and $r_2$ so that the following holds:
Let $\mathcal C$ be any concept hierarchy satisfying the limited-overlap property.  Assume that $B \subseteq C_0$ is presented at time $t$.  Then:
\begin{enumerate}
    \item If $c \in supp_{r_2}(B)$, then with probability at least $1 - \delta$, $rep(c)$ fires at time $t + level(c)$.
    \item If $c \notin supp_{r_1}(B)$ then with probability at least $1 - \delta$, $rep(c)$ does not fire at time $t + level(c)$.
\end{enumerate}

In order to determine appropriate values for $r_1$ and $r_2$,
we start by considering the given sigmoid function.
We determine real values $b_1$ and $b_2$, $-\infty < b_1 < b_2 < \infty$, that guarantee the following:
\begin{enumerate}
    \item If the adjusted potential $x$ is $\geq b_2$, then the probability $p(x)$ of firing is $\geq 1 - \delta'$, and
    \item If the adjusted potential $x$ is $< b_1$, then the probability $p(x)$ of firing is $\leq \delta'$.
\end{enumerate}
Here, we take $\delta'$ to be a small fraction of the target failure probability $\delta$, namely, $\delta' = \frac{\delta}{k^{\lmax+1}}$.
We choose $b_1$ such that $p(b_1+bias) = \frac{1}{1+ e^{-(b_1+bias)/\lambda}} = \delta'$
and $b_2$ such that
$p(b_2+bias) = \frac{1}{1+ e^{-(b_2+bias)/\lambda}} = 1 - \delta'$.
In other words,
$b_1 = \lambda \ log(\frac{1-\delta'}{\delta'}) - bias$, and
$b_2 = \lambda \ log(\frac{\delta'}{1-\delta'}) - bias$.

Next, we compute values for $r_1$ and $r_2$ based on the values of $b_1$ and $b_2$.
The values of $b_1$ and $b_2$ are adjusted potentials, whereas $r_1$ and $r_2$ are fractions of the population of children.
To translate, we use $r_1 = \frac{b_1 + bias}{k}$ and $r_2 = \frac{b_2 + bias}{k}$.
This makes sense because having $r_2 k$ children firing yields a potential of $r_2 k$ and an adjusted potential of $r_2 k - bias = b_2$,
and not having $r_1 k$ children firing means that the potential is strictly less than $r_1 k$ and the adjusted potential is strictly less than $r_1 k - bias = b_1$.

Note that the requirements on $r_1$ and $r_2$ impose some constraints on the value of $bias$.
Namely, since we require $r_1 \geq 0$, we must have $b_1 + bias \geq 0$.
And since we require $r_2 \leq 1$, we must have $b_2+bias \leq k$.
Within these constraints, different values of $bias$ will yield different values of $r_1$ and $r_2$. 

With these definitions, we can prove:

\begin{theorem}
Let $\mathcal C$ be any concept hierarchy satisfying the limited-overlap property. 
Let $\delta$ be a small target failure probability.
Let $r_1$ and $r_2$ be determined as described above.

Assume that $B \subseteq C_0$ is presented at time $t$.  Then:
\begin{enumerate}
    \item If $c \in supp_{r_2}(B)$, then with probability at least $1 - \delta$, $rep(c)$ fires at time $t + level(c)$.
    \item If $c \notin supp_{r_1}(B)$ then with probability at least $1 - \delta$, $rep(c)$ does not fire at time $t + level(c)$.
\end{enumerate}
\end{theorem}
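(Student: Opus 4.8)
The plan is to isolate all of the randomness into a single ``good event'' $\mathcal{G}$ on which the stochastic network behaves exactly like the deterministic $\{0,1\}$-weight network of Section~\ref{sec: recognition-ff-exact}, except that the sharp threshold $rk$ is replaced by the window $[r_1 k, r_2 k)$. On $\mathcal{G}$ I can then essentially re-run the argument of Lemma~\ref{lem: fires-supported-ff}, so the whole theorem reduces to showing $\Pr{\mathcal{G}} \geq 1-\delta$.

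Concretely, I would let $\mathcal{G}$ be the event that for every concept $c \in C$ with $level(c) \geq 1$, the firing decision of $rep(c)$ at time $t+level(c)$ is consistent with its realized incoming potential, in the sense that (i) if $pot^{rep(c)}(t+level(c)) \geq r_2 k$ then $rep(c)$ fires at that time, and (ii) if $pot^{rep(c)}(t+level(c)) < r_1 k$ then $rep(c)$ does not fire at that time; when the potential lies strictly between $r_1 k$ and $r_2 k$, no constraint is imposed. To bound $\Pr{\overline{\mathcal{G}}}$, fix such a $c$ and condition on the history of the execution through time $t+level(c)-1$; this history determines $pot^{rep(c)}(t+level(c))$, and the firing coin of $rep(c)$ at time $t+level(c)$ is independent of that history. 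Using the identities $b_2 = r_2 k - bias$ and $b_1 = r_1 k - bias$: in case (i) the adjusted potential $x$ satisfies $x \geq b_2$, so $p(x) \geq 1-\delta'$ and the bad outcome (not firing) has conditional probability at most $\delta'$; in case (ii), $x < b_1$, so $p(x) \leq \delta'$ and the bad outcome (firing) has conditional probability at most $\delta'$. Since this bound holds for every history, the per-concept failure probability is at most $\delta'$, and a union bound over the (at most $k^{\lmax+1}$) concepts of $C$ at positive levels yields $\Pr{\overline{\mathcal{G}}} \leq k^{\lmax+1}\,\delta' = \delta$.

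On the event $\mathcal{G}$ I would then prove deterministically, by induction on $level(c)$, that (A) $c \in supp_{r_2}(B)$ implies $rep(c)$ fires at time $t+level(c)$, and (B) $c \notin supp_{r_1}(B)$ implies $rep(c)$ does not fire at time $t+level(c)$. The base case $level(c)=0$ follows from Definition~\ref{def:presented}, since $supp_r(B) \cap C_0 = B \cap C_0$ for every $r$. For the inductive step at level $\ell+1$, the structural fact that only $reps$ of children of $c$ have nonzero-weight edges into $rep(c)$ means $pot^{rep(c)}(t+\ell+1)$ equals the number of children of $c$ whose $reps$ fire at time $t+\ell$; in particular stochastically-firing non-$rep$ neurons, which (unlike in the deterministic model) may well fire with non-negligible probability since their adjusted potential is merely $-bias$, contribute nothing. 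For (A), at least $r_2 k$ children of $c$ lie in $supp_{r_2}(B)$, so by the inductive hypothesis their $reps$ fire at time $t+\ell$, whence $pot^{rep(c)}(t+\ell+1) \geq r_2 k$ and $\mathcal{G}$ forces $rep(c)$ to fire. For (B), the contrapositive of the inductive hypothesis shows every child of $c$ whose $rep$ fires at time $t+\ell$ lies in $supp_{r_1}(B)$, and there are fewer than $r_1 k$ of those, so $pot^{rep(c)}(t+\ell+1) < r_1 k$ and $\mathcal{G}$ forbids $rep(c)$ from firing.

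Since $\mathcal{G}$ implies both conclusions of the theorem for all concepts simultaneously, each conclusion in particular holds with probability at least $\Pr{\mathcal{G}} \geq 1-\delta$, as required. I expect the only genuinely new ingredient to be the probabilistic bookkeeping: because the incoming potential to $rep(c)$ is itself random, the per-concept bound must be taken conditionally on the history before the union bound is applied, and one must explicitly note that non-$rep$ neurons are harmless by the zero-weight argument. The (A)/(B) induction is then just the proof of Lemma~\ref{lem: fires-supported-ff} with the single threshold replaced by the window $[r_1 k, r_2 k)$ furnished by $\mathcal{G}$; a minor auxiliary remark is that the constraints on $bias$ recorded before the theorem ensure $r_1, r_2 \in [0,1]$ (so the $supp$ notation is meaningful) and that $b_1 < b_2$ gives $r_1 < r_2$ (so the two constrained cases of $\mathcal{G}$ are exhaustive).
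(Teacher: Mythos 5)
Your proposal is correct and follows essentially the same route as the paper's (sketched) proof: a union bound over at most $k^{\lmax+1}$ concepts, each failing with conditional probability at most $\delta' = \delta/k^{\lmax+1}$ by the defining properties of $b_1$ and $b_2$, combined with the level-by-level induction of Lemma~\ref{lem: fires-supported-ff}. Your packaging via a single good event $\mathcal{G}$ --- with the explicit conditioning on the history before the union bound, and the explicit remark that stochastically-firing non-$rep$ neurons are harmless because their outgoing weights into $R$ are zero --- is a cleaner and somewhat more complete write-up of the same argument.
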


\begin{proof}
(\emph{Sketch:})
Fix any concept $c$ in $C$, the set of concepts in $\mathcal C$.
Note that $c$ has at most $k^{\lmax+1}$ descendants in $C$.

For Property 1, suppose that $c \in supp_{r_2}(B)$.  
Then for every descendant $c'$ of $c$ with $level(c)\geq 1$ and $c' \in supp_{r_2}(B)$, $rep(c')$ fires at time $t+level(c')$ with probability at least $1 - \delta'$, assuming that for each of its children $c'' \in supp_{r_2}(B)$, $rep(c'')$ fires at time $t+level(c'')$.
Using a union bound for all such $c'$, we obtain that, with probability at least 
$1 - k^{\lmax+1} \ \delta' = 1 - \delta$, $rep(c)$ fires at time $t+level(c)$.

In a bit more detail, for each descendant $c'$ of $c$ with $level(c') \geq 1$ and $c' \in supp_{r_2}(B)$, let $S_{c'}$ denote the set of executions in which $rep(c')$ does not fire at time $t + level(c')$, but for each of its children $c'' \in supp_{r_2}(B)$, $rep(c'')$ fires at time $t+level(c'')$.  Then $\bigcup_{c'} S_{c'}$ includes all of the executions in which $rep(c)$ does not fire at time $level(c)$. 

Moreover, we claim that the probability of each $S_{c'}$ is at most $\delta'$:  The fact that $c' \in supp_{r_2}(B)$ imply that at least $r_2 k$ children $c''$ are in $supp_{r_2}(B)$.  Since we assume that all of these $rep(c')$ fire at time $t + level(c'')$, this implies that the potential incoming to $c'$ for time $t + level(c')$ is at least $r_2 k$, and the adjusted potential is at least $r_2 k - bias = b_2$.  Then the first property of $b_2$ yields probability $\leq \delta'$ of $c'$ not firing at time $t + level(c')$.

For Property 2, suppose that $c \notin supp_{r_1}(B)$.  
Then for every descendant $c' \notin supp_{r_1}(B)$, $rep(c')$ does not fire at time $t+level(c')$, with probability at least $1 - \delta'$, assuming that for each of its children $c'' \notin supp_{r_1}(B)$, $rep(c'')$ does not fire at time $t+level(c'')$.
Using a union bound for all $c' \notin supp_{r_1}(B)$, we obtain that, with probability at least 
$1 - k^{\lmax+1} \ \delta' = 1 - \delta$, does not fire at time $t+level(c)$.

In a bit more detail, for each descendant $c'$ of $c$ with $level(c') \geq 1$ and $c' \notin supp_{r_1}(B)$, let $S_{c'}$ denote the set of executions in which $rep(c')$ fires at time $t + level(c')$, but for each of its children $c'' \notin supp_{r_1}(B)$, $rep(c'')$ does not fire at time $t+level(c'')$.  Then $\bigcup_{c'} S_{c'}$ includes all of the executions in which $rep(c)$ fires at time $level(c)$, and the probability for each $S_{c'}$ is at most $\delta'$; the argument is similar to that for Property 1.

\end{proof}

This has been only a sketch of how to analyze stochastic firing, in the simple case of feed-forward networks and exact weights.
We leave the complete details of this case, as well as extensions to include approximate weights and networks with feedback, for future work.

\end{document}
``upward''